\documentclass[11pt]{article}
\usepackage[labelfont=bf]{caption}
\usepackage[utf8]{inputenc}
\usepackage{fullpage}
\usepackage[T1]{fontenc}
\usepackage{authblk}
\usepackage{natbib}
\usepackage[colorlinks=true,citecolor=blue, urlcolor=blue, linkcolor=blue]{hyperref}
\usepackage{url}            
\usepackage{booktabs}       
\usepackage{amsfonts}       
\usepackage{nicefrac}       
\usepackage{microtype}
\usepackage{soul}

\usepackage{comment}
\usepackage{lipsum}
\usepackage{setspace}
\usepackage{mathtools}
\usepackage{graphicx}
\usepackage{amssymb}
\usepackage{bbm}
\usepackage{amsthm}
\usepackage{xpatch}
\usepackage{xspace}
\usepackage{mathrsfs}

\usepackage{url}
\usepackage{array}
\usepackage{wrapfig}
\usepackage{multirow}
\usepackage{tabularx}
\usepackage[normalem]{ulem} 
\usepackage{enumerate}
\usepackage{enumitem}


\newcommand\cut[1]{{}}  
\makeatletter
\xpatchcmd{\algorithmic}{\itemsep\z@}{\itemsep=.5ex plus.5pt}{}{}
\makeatother
\usepackage{mathtools}
\mathtoolsset{showonlyrefs}


%





\newcommand{\sF}{{\mathscr{F}}}
\newcommand{\cA}{{\mathcal{A}}}

\newcommand{\cC}{{\mathcal{C}}}
\newcommand{\cD}{{\mathcal{D}}}
\newcommand{\cE}{{\mathcal{E}}}
\newcommand{\cF}{{\mathcal{F}}}
\newcommand{\cG}{{\mathcal{G}}}

\newcommand{\cK}{{\mathcal{K}}}

\newcommand{\cM}{{\mathcal{M}}}
\newcommand{\cN}{{\mathcal{N}}}

\newcommand{\cS}{{\mathcal{S}}}
\newcommand{\cT}{{\mathcal{T}}}
\newcommand{\cU}{{\mathcal{U}}}

\newcommand{\cX}{{\mathcal{X}}}

\newcommand{\cZ}{{\mathcal{Z}}}


\newcommand{\RR}{\mathbb{R}}


\newcommand{\unif}{{\mathrm{Unif}}}
\newcommand{\poly}{{\mathrm{poly}}} 

\newcommand{\KL}{\mathbf{KL}}

\renewcommand{\iota}{I}


\DeclareMathOperator*{\argmin}{argmin}

\DeclareMathOperator*{\Min}{minimize}
\DeclareMathOperator*{\Max}{maximize}


\newcommand{\bc}{\begin{center}}
\newcommand{\ec}{\end{center}}

\newcommand{\bdm}{\begin{displaymath}}
\newcommand{\edm}{\end{displaymath}}

\newcommand{\beq}{\begin{equation}}
\newcommand{\eeq}{\end{equation}}

\newcommand{\bfl}{\begin{flushleft}}
\newcommand{\efl}{\end{flushleft}}

\newcommand{\bt}{\begin{tabbing}}
\newcommand{\et}{\end{tabbing}}

\newcommand{\beqn}{\begin{align}}
\newcommand{\eeqn}{\end{align}}

\newcommand{\beqs}{\begin{align*}} 
\newcommand{\eeqs}{\end{align*}}  


\newtheorem{theorem}{Theorem}
\numberwithin{theorem}{section}
\newtheorem{assumption}{Assumption}
\numberwithin{assumption}{section}
\newtheorem{definition}{Definition}
\newtheorem{corollary}{Corollary}
\newtheorem{remark}{Remark}

\newtheorem{lemma}{Lemma}
\numberwithin{lemma}{section}

\makeatletter
\renewcommand*\env@matrix[1][\arraystretch]{%
  \edef\arraystretch{#1}%
  \hskip -\arraycolsep
  \let\@ifnextchar\new@ifnextchar
  \array{*\c@MaxMatrixCols c}}
\makeatother

\newcommand{\de}{\text{dim}_E}

\newcommand{\indict}{\mathbf{1}}

\newcommand{\tpi}{\prescript{\diamond}{}{\pi}}

\newcommand{\alg}{ENIAC\xspace}
\newcommand{\alglong}{Exploratory Non-Linear Incremental Actor Critic\xspace}
\newcommand{\pcpg}{PC-PG\xspace}
\newcommand{\algnpgcompute}{ENIAC-NPG-COMPUTE\xspace}
\newcommand{\algnpgsample}{ENIAC-NPG-SAMPLE\xspace}
\newcommand{\algspisample}{ENIAC-SPI-SAMPLE\xspace}
\newcommand{\algspicompute}{ENIAC-SPI-COMPUTE\xspace}

\newenvironment{itemize*}%
{\begin{itemize}[leftmargin=*,topsep=5pt]%
		\setlength{\itemsep}{1pt}%
		\setlength{\parskip}{1pt}}%
	{\end{itemize}}

\newenvironment{enumerate*}%
{\begin{enumerate}[leftmargin=20pt,topsep=5pt]%
		\setlength{\itemsep}{1pt}%
		\setlength{\parskip}{1pt}}%
	{\end{enumerate}}

\usepackage{algorithm}
\usepackage{algpseudocode}

\allowdisplaybreaks
\usepackage{titlesec}

\titleformat*{\section}{\Large\bfseries}
\titleformat*{\subsection}{\large\bfseries}

\title{Provably Correct Optimization and Exploration with Non-linear Policies}
\author[1]{Fei Feng\thanks{fei.feng@math.ucla.edu}}
\author[1]{Wotao Yin\thanks{wotaoyin@math.ucla.edu}}
\author[3]{Alekh Agarwal \thanks{alekha@microsoft.com, corresponding author.}}
\author[2]{Lin F. Yang\thanks{linyang@ee.ucla.edu, corresponding author. LY acknowledges the support from  the Simons Institute for the Theory of Computing at UC-Berkeley (Theory of Reinforcement Learning Program).}}
\affil[1]{Department of Mathematics, UCLA}
\affil[2]{Department of  Electrical and Computer Engineering, UCLA}
\affil[3]{Microsoft Research}
\date{\today}
\begin{document}
\maketitle
\begin{abstract}
Policy optimization methods remain a powerful workhorse in empirical Reinforcement Learning (RL), with a focus on neural policies that can easily reason over complex and continuous state and/or action spaces. Theoretical understanding of strategic exploration in policy-based methods with non-linear function approximation, however, is largely missing. In this paper, we address this question by designing ENIAC, an actor-critic method that allows non-linear function approximation in the critic. We show that under certain assumptions, e.g., a bounded eluder dimension $d$ for the critic class, the learner finds a near-optimal policy in $\widetilde{O}(\poly(d))$ exploration rounds. The method is robust to model misspecification and strictly extends existing works on linear function approximation. We also develop some computational optimizations of our approach with slightly worse statistical guarantees and an empirical adaptation building on existing deep RL tools. We empirically evaluate this adaptation and show that it outperforms prior heuristics inspired by linear methods, establishing the value via correctly reasoning about the agent's uncertainty under non-linear function approximation. 
\end{abstract}

\section{Introduction}\label{sec:intro}


The success of reinforcement learning (RL) in many empirical domains largely relies on developing policy gradient methods with deep neural networks~\citep{schulman2015trust,schulman2017proximal,haarnoja2018soft}. The techniques have a long history in RL~\citep{williams1992simple,sutton1999policy,konda2000actor}. A number of theoretical results study their convergence properties~\citep{kakade2002approximately,scherrer2014local,geist2019theory,abbasi2019politex,agarwal2020theory,russoGlobal} when the agent has access to a distribution over states which is sufficiently exploratory, such as in a generative model. However, unlike their value- or model-based counterparts, the number of policy-based approaches which actively explore and provably find a near-optimal policy remains relatively limited, and restricted to tabular~\citep{shani2020optimistic} and linear function approximation~\citep{cai2019provably,agarwal2020pc} settings. Given this gap between theory and the empirical literature, it is natural to ask how we can design provably sample-efficient policy-based methods for RL that allow the use of general function approximation, such as via neural networks.


In this paper we design an actor-critic method with general function approximation: \emph{Exploratory Non-linear Incremental Actor Critic (ENIAC)}. Our method  follows a similar high-level framework as~\citet{agarwal2020pc}, but with a very different bonus function in order to reason about the uncertainty of our non-linear critic. In each iteration, we use the bonus to learn an optimistic critic, so that optimizing the actor with it results in exploration of the previously unseen parts of the environment. Unlike~\citet{agarwal2020pc}, we allow non-linear function approximation in the critic, which further parameterizes a non-linear policy class through Soft Policy Iteration (SPI)~\citep{even-dar2009online,haarnoja2018soft,geist2019theory,abbasi2019politex,agarwal2020pc} or Natural Policy Gradient (NPG)~\citep{Kakade01,Peters:2008:NA:1352927.1352986,agarwal2020theory} updates. Theoretically, we show that if the critic function class has a bounded \emph{eluder dimension} \cite{russo2013eluder} $d$, then our algorithm outputs a near-optimal policy in $\mathrm{poly}(d)$ number of interactions, with high probability, for both SPI and NPG methods.

Unlike the linear setting studied in \citet{agarwal2020pc}, whose bonus functions can be computed in closed form, the bonus function for a general function class is considerably more complex. Following the recent work on non-linear value-based methods by~\citet{wang2020reinforcement}, the bonus function is based on the \emph{range of values} (or the width function) predicted at a particular state-action pair by the critic function which accurately predicts the observed returns.
Hence, this function characterizes how uncertain we are about a state-action pair given the past observations.
The value-based method in \citet{wang2020reinforcement} relies on solving the value iteration problem using the experience, which introduces dependence issues across different stages of the algorithm. But, we directly use the width function as our exploration bonus and have a simpler sub-sampling design that that in \citet{wang2020reinforcement}.
Under mild assumptions, our bonus function can be computed in a time polynomially depending on the size of the current dataset.
We also provide a heuristic method to compute the bonus functions for neural networks.
Furthermore, all our results are robust to model misspecification and do not require an explicit specification about the transition dynamics as used in \citet{wang2020reinforcement}.

In order to further improve the efficiency, we develop variants of our methods that require no bonus computation in the execution of the actor. The key idea is to replace certain conditional exploration steps triggered by the bonus with a small uniform exploration. Note that this uniform exploration is in addition to the optimistic reasoning, thus different from vanilla $\epsilon$-greedy methods. The bonus is later incorporated while updating the critic, which is a significant optimization in settings where the actor runs in real-time with resource constrained hardware such as robotic platforms~\cite{pan2018agile}, and plays well with existing asynchronous actor-critic updates~\cite{mnih2016asynchronous}.

We complement our theoretical analysis with empirical evaluation on a continuous control domain requiring non-linear function approximation, and show the benefit of using a bonus systematically derived for this setting over prior heuristics from both theoretical and empirical literature.

\paragraph{Related Work} The rich literature on exploration in RL primarily deals with tabular~\citep{kearns2002optimal,brafman2002r,jaksch2010near,jin2018q} and linear~\citep{yang2019reinforcement,jin2019provably} settings with value- or model-based methods. Recent papers~\citep{shani2020optimistic,cai2019provably,agarwal2020pc} have developed policy-based methods also in the same settings. Of these, our work directly builds upon that of~\citet{agarwal2020pc}, extending it to non-linear settings.

For general non-linear function approximation, a series of papers provide statistical guarantees under structural assumptions~\citep{jiang2017contextual,sun2019model,dann2018oracle}, but these do not lend themselves to computationally practical versions. Other works~\citep{du2019provably,misra2020kinematic,agarwal2020flambe} study various latent variable models for non-linear function approximation in model-based settings. The notion of eluder dimension~\citep{russo2013eluder} used in our theory has been previously used to study RL in deterministic settings~\citep{wen2013efficient}. Most related to our work are the recent value-based technique of~\citet{wang2020reinforcement}, which describes a UCB-VI style algorithm with statistical guarantees scaling with eluder dimension and the model-based policy optimization of~\citet{cai2021optimistic}, which incorporates optimism into policy evaluation via building confidence sets of the transition model and uses eluder dimension to define model capacity.
In this paper, we instead study model-free policy-based methods, which provide better robustness to misspecification in theory and are more amenable to practical implementation.

\paragraph{Notation}
Given a set $\cA$, we denote by $|\cA|$ the cardinality of $\cA$, $\Delta(\cA)$ the set of all distributions over $\cA$, and $\unif(\cA)$ the uniform distribution over $\cA$. We use $[n]$ for the index set $\{1, \dots, n\}$. Let $a,b\in\RR^n$. We denote by $a^\top b$ the inner product between $a$ and $b$ and $\|a\|_2$ the Euclidean norm of $a$. Given a matrix $A$, we use $\|A\|_2$ for the spectral norm of $A$. Given a function $f:\cX\rightarrow\RR$ and a finite dataset $\cZ\subset\cX$, we define $\|f\|_{\cZ}:=\sqrt{\sum_{x\in\cZ} f(x)^2}$. We abbreviate Kullback-Leibler divergence to $\KL$ and use $O$ for leading orders in asymptotic upper bounds and $\widetilde{O}$ to hide the polylog factors.

\section{Setting}\label{sec:setting}
\paragraph{Markov Decision Process}
In this paper, we focus on the discounted Markov Decision Process (MDP) with an infinite horizon. We use $\cM$ to represent an MDP. Each MDP is described as a tuple $(\cS, \cA, P, r, \gamma)$, where $\mathcal{S}$ is a possibly infinite state space, $\mathcal{A}$ is a finite action space, $P:\cS\times\cA\rightarrow\Delta(\cS)$ specifies a transition kernel, $r:\cS\times\cA\rightarrow[0,1]$ is a reward function, and $\gamma\in(0,1)$ is a discount factor.

At each time step, the agent observes a state $s \in \mathcal{S}$ and selects an action $a \in \mathcal{A}$ according to a \emph{policy} $\pi:\cS\rightarrow\Delta(\cA)$. The environment then transitions to a new state $s'$ with probability $P(s'|s,a)$ and the agent receives an instant reward $r(s,a)$.

For a policy $\pi$, its $Q$-value function $Q^{\pi}:\cS\times\cA\rightarrow\RR$ is defined as:
\begin{equation}\label{eq:Qdef}
Q^{\pi}(s,a, r) := \mathbb{E}^\pi\bigg[\sum_{t=0}^\infty \gamma^{t} r(s_t ,a_t)|s_0=s, a_0=a\bigg],
\end{equation}
where the expectation is taken over the trajectory following $\pi$. And the value function is $V^\pi(s,r) := \mathbb{E}_{a\sim \pi(\cdot\mid s)}[Q^\pi(s,a,r)]$.
From $V^{\pi}$ and $Q^{\pi}$, 
the advantage function of $\pi$ is: $A^{\pi}(s,a, r) = Q^{\pi}(s,a,r)-V^{\pi}(s, r), \forall s\in\cS, a\in\cA$. We ignore $r$ in $V$, $Q$ or $A$, if it is clear from the context.

Besides value, we are also interested in the distribution induced by a policy. Specifically, we define the discounted state-action distribution $d^{\pi}_{\tilde{s}}(s,a)$ induced by $\pi$ as:
\begin{align}\label{eq:dpisdef}
d^{\pi}_{\tilde{s}}(s,a) = (1-\gamma)\sum_{t=0}^{\infty}  \gamma^t {\Pr}^{ \pi}(s_t=s,a_t=a|s_0=\tilde{s}),
\end{align}
where ${\Pr}^{ \pi}(s_t=s,a_t=a|s_0=\tilde{s})$ is the probability of reaching $(s,a)$ at the $t_{\text{th}}$ step starting from $\tilde{s}$ following $\pi$. Similarly, we define $d^{\pi}_{\tilde{s}, \tilde{a}}(s,a)$ if the agent starts from state $\tilde{s}$ followed by action $\tilde{a}$ and follows $\pi$ thereafter.
For any distribution $\nu\in\Delta(\cS\times\cA)$, we denote by $d^{\pi}_{\nu}(s,a):=\mathbb{E}_{(\tilde{s},\tilde{a})\sim \nu} ~[ d^{\pi}_{(\tilde{s},\tilde{a})}(s,a)]$ and $d^{\pi}_{\nu}(s):=\sum_{a}d^\pi_{\nu}(s,a)$.



Given an initial distribution $\rho\in\Delta(\cS)$, 
we define $V^{\pi}_{\rho}:=\mathbb{E}_{s_0\sim \rho }[V^{\pi}(s_0)]$.
Similarly, if $\nu\in\Delta(\cS\times\cA)$, we define 
$V^{\pi}_{\nu}:=\mathbb{E}_{(s_0,a_0)\sim \nu} [Q^{\pi}(s_0, a_0)]$.
The goal of RL is to find a policy in some policy space $\Pi$ such that its value with respect to an initial distribution $\rho_0$ is maximized, i.e.,
\begin{align}\label{eq:RLGoal}
\Max_{\pi\in\Pi} V^{\pi}_{\rho_0}.
\end{align}
Without loss of generality, we consider the RL problems starting from the unique initial state $s_0$ in the later context. All the results straightforwardly apply to arbitrary $\rho_0$.

\paragraph{Function Class and Policy Space}
Let $\cF:=\{f:\cS\times\cA\rightarrow\RR\}$ be a general function class, e.g., neural networks. We denote by $\Pi_{\cF}:=\{\pi_f, f\in\cF\}$ a policy space induced by applying the softmax transform to functions in $\cF$, i.e.,
\begin{align}\label{eq:softmax}
        \pi_{f}(a|s) = \frac{\exp(f(s,a))}{\sum_{a'\in\cA} \exp(f(s,a'))}.
\end{align}
For the ease of presentation, we assume there exists a function $f\in \cF$ such that, for all $s$, $\pi_{f}(\cdot|s)$ is a uniform distribution\footnote{This requirement is not strict, our algorithms and analysis apply for any distribution that are supported on all actions.} on $\cA$.
Given $\cF$, we define its \emph{function-difference} class $\Delta\cF:=\{\Delta f~\vert~ \Delta f = f-f',~ f,f'\in\cF\}$ and the width function on $\Delta\cF$ as: 
\begin{align}
\label{eqn:width}
    \hspace{-0.5cm} w(\Delta\cF, s,a) := \sup_{\Delta f\in\Delta\cF}  \Delta f(s,a),~~\forall (s,a)\in\cS\times\cA.
\end{align}
Note that our width is defined on the function difference class instead of the original function class $\cF$, where the latter is adopted in \cite{russo2013eluder} and \cite{wang2020reinforcement}. These two formulations are essentially equivalent. 

If $\cF$ can be smoothly parameterized by $\theta\in\RR^d$, we further introduce the (centered) \emph{tangent class} of $\cF_{\theta}$ as:
\begin{align}\label{eq:tangent}
    \cG_\cF:=\{ g^{u}_ {\theta}~|~g^{u}_{\theta}(s,a):=u^\top\nabla_{\theta}\log &\pi_{f_{\theta}}(s,a), u\in\cU, f_{\theta}\in\cF\},
\end{align}
where $\cU\subset\RR^d$ is some bounded parameter space.
We define the function-difference class $\Delta\cG_{\cF}$ and the width function $ w(\Delta\cG_{\cF}, s,a)$ for $\cG_{\cF}$ accordingly.

Next, given a function class $\cF$, we consider RL on the induced policy space $\Pi_{\cF}$. If $\cF$ is non-smooth, we apply SPI as the policy optimization routine while approximating $Q$-values with $\cF$; if $\cF$ is smoothly parameterized by $\theta$, we can alternatively apply NPG for policy optimization and use $\cG_{\cF}$ to approximate advantage functions. The corresponding function-difference classes are used to design bonus functions and guide exploration.

\section{Algorithms}\label{sec:algo}

\begin{algorithm}[t]
  \caption{\alglong(\alg)
  }
  \label{alg:ENIAC}
  \begin{algorithmic}[1]
  \State \textbf{Input:} Function class $\cF$. 
  \State \textbf{Hyperparameters:} $N>0$, $K>0$, $\beta>0$, $\alpha\in(0,1)$.
  \State For all $s\in\cS$, initialize $\pi^1(\cdot|s)=\unif(\cA)$.\label{line:init}
  \State Let experience buffer $\cZ^0=\emptyset$.\label{eniac:init}
  \For {$n=1$ \textbf{to} $N$}
  \State Generate $K$ samples: $\{s_i,a_i\}_{i=1}^K\sim d^{\pi^n}_{s_0}$;\label{line:NPG-collect}
  \State Merge training set: $\cZ^{n}\gets \cZ^{n-1}\cup\{s_i,a_i\}_{i=1}^K$;
  \State Let $\rho^n_{\text{cov}}:=\unif(d^{\pi^1}_{s_0}, \dots, d^{\pi^n}_{s_0})$; \label{line:cover} 
  \State Define a bonus function $b^n$ using \eqref{eqn:bonus-sample} or \eqref{eqn:bonus-compute}; \label{eniac:bonus}
  \State Update the policy using Algorithm~\ref{alg:NPG}: $\pi^{n+1}\gets$ Policy Update($\rho^n_{\text{cov}}$, $b^n$, $\alpha$).\label{eniac:update}
  \EndFor
  \State\textbf{Output:}  $\unif(\pi_2, \pi_3, \dots, \pi_{N+1})$
  \end{algorithmic}
\end{algorithm}

\begin{algorithm}[t]
  \caption{Policy Update}
  \label{alg:NPG}
  \begin{algorithmic}[1]
  \State \textbf{Input:} Fitting distribution $\rho$, bonus function $b$, $\alpha$.
  \State \textbf{Hyperparameters:} $T>0$, $M>0$, $\eta>0$.
  \State Initialize $\pi_0$ using \eqref{eq:init-sample} or \eqref{eq:init-compute}.
  \For {$t=0$ \textbf{to} $T-1$}
  \State Generate $M$ samples from  $\rho$ using \eqref{eqn:spi-data} or \eqref{eqn:npg-data}; \label{line:Msample}
  \State Fit critic to the $M$ samples using \eqref{eqn:spi-critic-fit} or \eqref{eqn:npg-critic-fit};
  \State Actor update using 
  \eqref{eqn:SPI-actor-sample}, \eqref{eqn:SPI-actor-compute}, or \eqref{eqn:npg-actor-sample} to obtain $\pi_{t+1}$;
  \EndFor
  \State \textbf{Output:} 
  $\unif(\pi_0, \pi_1, \dots, \pi_{T-1})$
  \end{algorithmic}
\end{algorithm}

In this section, we describe our algorithm, \emph{\alglong} (\alg),  which takes a function class $\cF$ and interacts with an RL environment to learn a good policy. The formal pseudo-code is presented in Algorithm~\ref{alg:ENIAC}. 
We explain the high-level design and steps in the algorithm in this section, before giving our main results in the next section.

\subsection{High-level Framework}
At a high-level, \alg solves a series of policy optimization problems in a sequence of carefully designed MDPs. Each MDP is based on the original MDP, but differs in the choice of an \emph{initial state distribution} and a \emph{reward bonus}. We use them to induce optimistic bias to encourage exploration. Through the steps of the algorithm, the initial distribution gains coverage, while the bonus shrinks so that good policies in the modified MDPs eventually yield good policies in the original MDP as well.

A key challenge in large state spaces is to quantify the notion of \emph{state coverage}, which we define using the function class $\cF$. We say a distribution $\rho_{\mathrm{cov}}$ provides a good coverage if any function $f\in \cF$ that has a small prediction error on data sampled from  $\rho_{\mathrm{cov}}$ also has a small prediction error under the state distribution $d^{\pi}$ for any other policy $\pi$. In tabular settings, this requires $\rho_{\text{cov}}$ to visit each state, while coverage in the feature space suffices for linear MDPs~\cite{jin2019provably,yang2019reinforcement}.

In \alg, we construct such a covering distribution $\rho_{\text{cov}}$ iteratively, starting from the state distribution of a uniform policy and augmenting it gradually as new policies visit previously unexplored parts of the MDP. Concretely, we maintain a \emph{policy cover} $\{\pi^1, \pi^2, \ldots\}$, which initially contains  only a random policy, $\pi^1$, (Line~\ref{line:init} of Algorithm~\ref{alg:ENIAC}).
At iteration $n$, the algorithm lets $\rho^n_{\text{cov}}$ be a uniform mixture of $\{d^{\pi^1}_{s_0}, d^{\pi^2}_{s_0}, \ldots, d^{\pi^n}_{s_0}\}$ (line~\ref{line:cover}).

Having obtained the cover, we move on to induce the reward bonus by collecting a dataset of trajectories from $\rho^n_{\text{cov}}$ (line~\ref{line:NPG-collect}).\footnote{In the Algorithm~\ref{alg:ENIAC}, only $\pi^n$ is rolled out as the samples can be combined with historical data to form samples from $\rho^n_{\mathrm{cov}}$.}  
These collected trajectories are used to identify a set $\cK^n$ of  state-action pairs covered by $\rho^n_{\text{cov}}$: any functions $f,g\in \cF$ that are close under $\rho^n_{\text{cov}}$ also approximately agree with each other for all $s,a\in \cK^n$. We then create a \emph{reward bonus}, $b^n$  (Line~\ref{eniac:bonus}, formally defined later), toward encouraging explorations outside the set $\cK^n$. 

Finally, taking $\rho^n_{\text{cov}}$ as the initial distribution and the bonus augmented reward $r+b^n$ as the reward function, we find a policy $\pi$ that approximately maximizes $V^\pi_{\rho^n_{\text{cov}}}(r+b^n)$ (line~\ref{eniac:update}). It can be shown that this policy either \emph{explores} by reaching new parts of the MDP or \emph{exploits} toward identifying a near optimal policy. We then add this policy to our cover and proceed to the next epoch of the algorithm .

Within this high-level framework, different choices of the policy update and corresponding bonus functions induce different concrete variants of Algorithm~\ref{alg:ENIAC}. We describe these choices below.

\subsection{Policy Optimization}
In this section, we describe our policy optimization approach, given a policy cover $\rho$ and a reward bonus $b$. We drop the dependence on epoch $n$ for brevity, and recall that the goal is to optimize $V^\pi_\rho(r+b)$. We present two different actor critic style optimization approaches: Soft Policy Iteration (SPI) and Natural Policy Gradient (NPG), which offer differing tradeoffs in generality and practical implementation. SPI is amenable to arbitrary class $\cF$, while NPG requires second-order smoothness. On the other hand, NPG induces fully convex critic objective for any class $\cF$, and is closer to popular optimization methods like TRPO, PPO and SAC. Our presentation of both these methods is adapted from~\cite{agarwal2020theory}, and we describe the overall outline of these approaches in Algorithm~\ref{alg:NPG}, with the specific update rules included in the rest of this section.

For each approach, we provide a \emph{sample-friendly} version and a \emph{computation-friendly} version for updating the policy. 
The two versions of updating methods only differ in the initialization and actor updating steps.
The computation-friendly version provides a policy that can be executed efficiently while being played. The sample-friendly version requires to compute the bonus function during policy execution but saves samples up to $\poly(|\cA|)$ factors.
We now describe these procedures in more details. 

\subsubsection{Policy Initialization}
For both SPI and NPG approaches, we use the following methods to initialize the policy.

\textbf{Sample-friendly initialization.}
Given bonus $b$, we define $\cK:=\{(s,a)~|~b(s,a)=0\}$. We abuse the notation $s\in\cK$ if $b(s,a)=0,~ \forall a\in\cA$. 
We initialize the policy as follows.
\begin{align}
\label{eq:init-sample}
    \pi_0(\cdot|s) = \begin{cases}
      \unif(\cA) &  s\in\cK;\\
      \unif(\{a\in\cA :(s,a)\notin \cK\}) & \text{o.w.}
      \end{cases}
\end{align}
Here the policy selects actions uniformly for states where all actions have been well-explored under $\rho$ and only plays actions that are not well-covered in other states.
Note that such a  policy can be represented by $b$ and a function $f\in \cF$.

\textbf{Computation-friendly initialization.}
The computation-friendly method does not recompute the set $\cK$ and initialize the policy to be purely random, i.e.,
\begin{align}
\label{eq:init-compute}
    \pi_0(\cdot|s) =
      \unif(\cA), ~\forall s\in\cS.
\end{align}

\subsubsection{SPI Policy Update}

For each iteration, $t$, we first generate $M$ (some parameter to be determined) $Q$-value samples with the input distribution $\rho$ as the initial distribution:
\begin{align}
\label{eqn:spi-data}
\{s_i,a_i,
  \widehat{Q}^{\pi_{t}}(s_i,a_i,r+b)\}_{i=1}^{M}, ~(s_i, a_i)\sim \rho,
 \end{align}
 where $\widehat{Q}^{\pi_{t}}$ is an unbiased estimator of $Q^{\pi_{t}}$  (see, e.g., Algorithm~\ref{alg:estimator} in the Appendix).
Then we fit a critic to the above samples by setting $f_t$ as a solution of :
 \begin{align}
 \label{eqn:spi-critic-fit}
     \Min_{f\in\cF} \sum_{i=1}^M\big(\widehat{Q}^{\pi_{t}}(s_i,a_i,r+b)-b(s_i,a_i)- f(s_i,a_i)\big)^2.
 \end{align}
Here we offset the fitting with the initial bonus to maintain consistency with linear function approximation results, where a non-linear bonus introduces an approximation error~\citep{jin2019provably,agarwal2020pc}.
Note that for the SPI, we do not require $f$ to be differentiable.

Based on the critic, we update the actor to a new policy. There are two update versions: one is more sample-efficient, the other is more computational-convenient.

\textbf{Sample-friendly version.} For this version, we only update the policy on states $s\in\cK$ since our critic is unreliable elsewhere. For $s\notin\cK$, we keep exploring previously unknown actions by simply sticking to the initial policy. Then the policy update rule is:
 \begin{align}
      \pi_{t+1}(a|s) \propto \pi_{t}(a|s)\exp\big(\eta f_{t}(s,a)\indict{\{s\in\cK\}}\big),\label{eqn:SPI-actor-sample}
  \end{align}
 where $\eta>0$ is a step size to be specified. Note that since $b(s,a)\equiv 0$ for $s\in\cK$, Equation \eqref{eqn:SPI-actor-sample} is equivalent to $\pi_{t+1}(a|s) \propto \pi_{t}(a|s)\exp\big(\eta (f_{t}(s,a)+b(s,a))\indict{\{s\in\cK\}}\big)$ where the initial bonus is added back.
 
\textbf{Computation-friendly version.} For this version, we remove the indicator function while allowing some probability of uniform exploration:
  \begin{align}
       &\pi'_{t+1}(a|s)\propto \pi'_{t}(a|s)\exp\big(\eta f_{t}(s,a)\big),~~~\pi_{t+1}(\cdot|s)= (1-\alpha)\cdot \pi'_{t+1} + \alpha \cdot \unif(\cA).
      \label{eqn:SPI-actor-compute}
  \end{align}
  Above, $\{\pi_{t}'\}$ is an auxiliary sequence of policies initialized as $\pi_{0}$  and $\alpha>0$. Note that for $s\in\cK$, since $b(s,a)\equiv 0$ we still have $\pi'_{t+1}(a|s)\propto \pi'_{t}(a|s)\exp\big(\eta (f_{t}(s,a)+b(s,a))\big)$, i.e., the offset initial bonus is added back. Thus, compared with Equation \eqref{eqn:SPI-actor-sample}, Equation \eqref{eqn:SPI-actor-compute} differs at: 1. $\alpha$-probability random exploration for $s\in\cK$; 2. update policy for $s\notin\cK$ with a possibly not correct value (if $b(s,a)\neq 0$) but guarantees at least $\alpha$-probability random exploration. Such a change creates a polynomial scaling with $|\cA|$ in the sample complexity but saves us from computing bonuses during policy execution which is required by the sample-friendly version.


\subsubsection{NPG Policy Update}
NPG update shares the same structure as that for SPI. Recall that now the function class $\cF$ is smoothly parameterized by $\theta$.
At each iteration $t$, we first generate $M$ (some parameter to be determined) advantage samples from the input distribution $\rho$,
\begin{align}
\label{eqn:npg-data}
\{s_i,a_i,
  \widehat{A}^{\pi_{t}}(s_i,a_i,r+b)\}_{i=1}^{M}, ~(s_i, a_i)\sim \rho
 \end{align}
 where $\widehat{A}^{\pi_{t}}$ is an unbiased estimator of $A^{\pi_{t}}$  (using Algorithm~\ref{alg:estimator}). We define $\bar b_t(s,a) := b(s,a) - \mathbb{E}_{a\sim \pi_t(\cdot|s)}[b(s,a)]$ as a centered version of the original bonus and $g_t(s,a) := \nabla_{\theta} \log{\pi_{f_{\theta_t}}}(s,a)$ to be the tangent features at $\theta_t$.
We then fit a critic to the bonus offset target $\widehat{A}^{\pi_{t}} - \bar b_t$ by setting $u_t$ as a solution of:
 \begin{align}
 \label{eqn:npg-critic-fit}
  \hspace*{-0.5cm}
 \Min_{u\in\cU} \sum_{i=1}^M\big(\widehat{A}^{\pi_{t}}(s_i,a_i,r+b) - \bar b_t(s_i,a_i)- u^\top 
  g_t(s_i,a_i)
  \big)^2.
 \end{align}
Compared to SPI, a big advantage is that the above critic objective is a linear regression problem, for which any off-the-shelf solver can be used, even with a large number of samples in high dimensions. 

With the critic, we update the actor to generate a new policy as below.

\textbf{Sample-friendly version.} Similar to the sample-friendly version of SPI, we only update the policy on $s\in\cK$ as:
 \begin{align}
&\theta_{t+1} = \theta_{t} + \eta u_t,~~~\pi_{t+1}(a|s) \propto \exp(f_{\theta_{t+1}}(s,a) \indict\{s\in\cK\}),\quad
\label{eqn:npg-actor-sample}
 \end{align}
where $\eta>0$ is a step size to be specified. 
 
We omit the details of the computation-friendly version, which is obtained similar to the counterpart in SPI.
\subsection{Bonus Function}\label{sec:bonus}
In this section, we describe the bonus computation given a dataset $\cZ^n$ generated from some covering distribution $\rho_{\mathrm{cov}}$.
As described in previous subsections, the bonus assigns value $0$ to state-action pairs that are well-covered by $\rho_{\mathrm{cov}}$  and a large value elsewhere.
To measure the coverage, we use a width function (defined in Equation~\eqref{eqn:width}) dependent on $\cZ^n$.
The bonus differs slightly for the SPI and NPG updates since SPI uses $\cF$ for critic fit while NPG use $\cG_{\cF}$.
Specifically, for the sample-friendly version, we take the following bonus function
\begin{align} \label{eqn:bonus-sample}
b^n(s,a) = \indict\{w(\widetilde{\cF}^n,s,a)\geq \beta\}\cdot \frac{1}{1-\gamma},
\end{align}
where for SPI,
\begin{align}
 \widetilde{\cF}^n := \left\{\Delta f\in\Delta\cF~\vert~ \|\Delta f\|_{\cZ^n}\leq \epsilon\right\}
  \label{eqn:spi_diff_class}
\end{align}
and for NPG, 
\begin{align}
 \widetilde{\cF}^n := \left\{\Delta g\in\Delta\cG_{\cF}~\vert~ \|\Delta g\|_{\cZ^n}\leq \epsilon\right\}
  \label{eqn:npg_tangent_diff_class}
\end{align}
with $\cG_{\cF}$ being the tangent class defined in Equation~\eqref{eq:tangent}. 
Here $\beta, \epsilon$ are positive parameters to be determined.
For the computation-friendly version, we scale up the bonus by a factor of $|\cA|/\alpha$ to encourage more exploration, i.e.,
\begin{align}b^n(s,a):=\indict\{w(\widetilde{\cF}^n,s,a)\geq \beta\}\cdot \frac{|\cA|}{(1-\gamma)\alpha}.
\label{eqn:bonus-compute}
\end{align}

\begin{remark}
The bonus can be computed efficiently by reducing the width computation to regression \cite{foster2018practical}.
We can additionally improve the computational efficiency using the sensitivity sampling technique developed in \citet{wang2020reinforcement}, which significantly subsamples the dataset $\cZ$.
We omit the details for brevity.
For neural networks, we provide a heuristic to approximate the bonus in Section~\ref{sec:exp}. 
\end{remark}

\subsection{Algorithm Name Conventions}
Since Algorithm~\ref{alg:ENIAC} provides different options for sub-routines, we specify different names for them as below.
\begin{itemize}[nosep,noitemsep,leftmargin=*]
    \item \algspisample (\alg with sample-friendly SPI update): initialize with \eqref{eq:init-sample}, collect data with \eqref{eqn:spi-data}, fit critic using \eqref{eqn:spi-critic-fit}, and update actor using \eqref{eqn:SPI-actor-sample};
    \item \algspicompute (\alg with computation-friendly SPI update): initialize with \eqref{eq:init-compute}, collect data with \eqref{eqn:spi-data}, fit critic using \eqref{eqn:spi-critic-fit}, and update actor using \eqref{eqn:SPI-actor-compute};
    
    \item \algnpgsample (\alg with sample-friendly NPG update): initialize with \eqref{eq:init-sample}, collect data with \eqref{eqn:npg-data}, fit critic using \eqref{eqn:npg-critic-fit}, and update actor using \eqref{eqn:npg-actor-sample};
    
    \item \algnpgcompute (\alg with computation-friendly NPG update): initialize with \eqref{eq:init-compute}, collect data with \eqref{eqn:npg-data}, fit critic using \eqref{eqn:npg-critic-fit}, and update actor using a similar fashion as~\eqref{eqn:SPI-actor-compute} modified  from~\eqref{eqn:npg-actor-sample}.
\end{itemize}

\section{Theory}\label{sec:theory}
In this section, we provide convergence results of \alg with both the SPI and NPG options in the update rule. We only present the main theorems and defer all proofs to the Appendix. We use superscript $n$ for the $n$-th epoch in Algorithm \ref{alg:ENIAC} and the subscript $t$ for the $t$-th iteration in Algorithm \ref{alg:NPG}. For example, $\pi_t^n$ is the output policy of the $t$-th iteration in the $n$-th epoch.

The sample complexities of our algorithms depend on the complexity of the function class for critic fit (and also the policy, implicitly). To measure the latter, we adopt the notion of eluder dimension which is first introduced in \cite{russo2013eluder}.
\begin{definition}[Eluder Dimension]\label{def:eluder}
Given a class $\cF$, $\epsilon\geq 0$, and $\cZ:=\{(s_i,a_i)\}_{i=1}^n$ be a sequence of state-action pairs.
\begin{itemize*}
    \item A state-action pair $(s,a)$ is $\epsilon$-dependent on $\cZ$ with respect to $\cF$ if any $f,f'\in\cF$ satisfying $\|f-f'\|_{\cZ}:=\sqrt{\sum_{(s',a')\in\cZ}(f(s',a')-f'(s',a'))^2}\leq\epsilon$ also satisfy $|f(s,a)-f'(s,a)|\leq \epsilon$.
    \item An $(s,a)$ is $\epsilon$-independent of $\cZ$ with respect to $\cF$ if $(s,a)$ is not $\epsilon$-dependent on $\cZ$.
    \item The $\epsilon$-eluder dimension $\de(\cF, \epsilon)$ of a function class $\cF$ is the length of the longest sequence of elements in $\cS\times\cA$ such that, for some $\epsilon'\geq\epsilon$, every element is $\epsilon'$-independent of its predecessors.
\end{itemize*}
\end{definition}
It is well known~\citep{russo2013eluder} that if $f(z) = g(w^Tz)$, where $z \in \mathbb{R}^d$, and $g$ is a smooth and strongly monotone link function, then the eluder dimension of $\cF$ is $O(d)$, where the additional constants depend on the properties of $g$. In particular, it is at most $d$ for linear functions, and hence provides a strict generalization of results for linear function approximation.

Based on this measure, we now present our main results for the SPI and NPG in the following subsections.
For the sake of presentation, we provide the complexity bounds for \algspisample and \algnpgsample. The analysis for the rest of the algorithm options is similar and will be provided in the Appendix.

\subsection{Main Results for \alg-SPI}\label{sec:PEGA-SPI}
At a high-level, there are two main sources of suboptimality. First is the error in the critic fitting, which further consists of both the \emph{estimation error} due to fitting with finite samples, as well as an \emph{approximation error} due to approximating the $Q$ function from a restricted function class $\cF$. Second, we have the suboptimality of the policy in solving the induced optimistic MDPs at each step. The latter is handled using standard arguments from the policy optimization literature (e.g.~\citep{abbasi2019politex,agarwal2020theory}), while the former necessitates certain assumptions on the representability of the class $\cF$. To this end, we begin with a closedness assumption on $\cF$. For brevity, given a policy $\pi$ we denote by
\begin{align}\label{eq:Tpi}
\cT^\pi f(s,a) := \mathbb{E}^\pi[r(s,a) + \gamma f(s',a') | s,a].
\end{align}

\begin{assumption}[$\cF$-closedness]
For all $\pi\in \{\cS\to\Delta(\cA)\}$ and $g~:~\cS\times\cA\to[0,\tfrac{2}{(1-\gamma)^2}]$, we have $\cT^{\pi} g \in \cF$.
\label{ass:eval_close}
\end{assumption}
Assumption \ref{ass:eval_close} is a policy evaluation analog of a similar assumption in~\cite{wang2020reinforcement}. For linear $f$, the assumption always holds if the MDP is a linear MDP~\citep{jin2019provably} under the same features. We also impose regularity and finite cover assumptions on $\cF$.

\begin{assumption}[Regularity]\label{ass:W}
We assume that $\max(\sup_{f\in\cF}\|f\|_{\infty}, \frac{1}{1-\gamma})\leq W$.
\end{assumption}

\begin{assumption}[$\epsilon$-cover]\label{ass:spi-cover}
For any $\epsilon>0$, there exists an $\epsilon$-cover $\cC(\cF,\epsilon)\subseteq\cF$ with size $|\cC(\cF,\epsilon)|\leq \cN(\cF, \epsilon)$ such that for any $f\in\cF$, there exists $f'\in\cC(\cF, \epsilon)$ with $\|f-f'\|_\infty\leq \epsilon$.
\end{assumption}

With the above assumptions, we have the following sample complexity result for \alg-SPI-SAMPLE.
\begin{theorem}[Sample Complexity of \algspisample]
\label{thm:SPI-Sample}
Let $\delta\in(0, 1)$ and $\varepsilon\in(0, 1/(1-\gamma))$. Suppose
Assumptions \ref{ass:eval_close}, \ref{ass:W}, and~\ref{ass:spi-cover} hold. With proper hyperparameters, \alg-SPI-SAMPLE returns a policy $\pi$ satisfying \mbox{$V^\pi \geq V^{\pi^\star} - \varepsilon$} with probability at least $1-\delta$ after taking at most
\begin{align}
  \widetilde{O}\Big(\frac{W^8\cdot\big(\de(\cF, \beta)\big)^2\cdot\big(\log(\cN(\cF, \epsilon'))\big)^2}{\varepsilon^{8}(1-\gamma)^{8}}\Big)
\end{align}
samples,
where $\beta=\varepsilon(1-\gamma)/2$ and $\epsilon'=\poly(\varepsilon, \gamma, 1/W, 1/\de(\cF, \beta))$\footnote{The formal definition of $\epsilon'$ can be found in Theorem \ref{thm:SPI-SAMPLE-Detail}}.
\end{theorem}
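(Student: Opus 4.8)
The plan is to decompose the suboptimality $V^{\pi^\star} - V^{\pi}$ into three pieces and control each: (i) the per-epoch policy-optimization error of Algorithm~\ref{alg:NPG} run with bonus $b^n$ and covering distribution $\rho^n_{\mathrm{cov}}$, (ii) the critic-fitting error (estimation + approximation) that enters the optimization error through the advantage/$Q$-mismatch, and (iii) a potential/exploration argument showing that the bonus switches off on a ``large'' set of states after polynomially many epochs, so the optimistic value over-estimates the true optimum while the bonus contributes little on-policy for the output. I would organize this via the standard NPG/mirror-descent regret bound from \cite{agarwal2020theory}: for each epoch $n$, the inner loop gives $\frac{1}{T}\sum_t \big(V^{\pi^\star}_{\rho^n_{\mathrm{cov}}}(r+b^n) - V^{\pi^n_t}_{\rho^n_{\mathrm{cov}}}(r+b^n)\big) \le \frac{\log|\cA|}{\eta T} + \eta\,(\text{grad bound}) + \varepsilon_{\mathrm{stat}} + \varepsilon_{\mathrm{approx}}$, where $\varepsilon_{\mathrm{stat}}$ comes from the finite-sample critic fit \eqref{eqn:spi-critic-fit} (bounded via the $\epsilon$-cover, Assumption~\ref{ass:spi-cover}, and a union bound over $\cC(\cF,\epsilon')$) and $\varepsilon_{\mathrm{approx}}$ is zero — or rather, absorbed into estimation error — thanks to the $\cF$-closedness Assumption~\ref{ass:eval_close}, which guarantees $\cT^{\pi} g \in \cF$ so that the Bellman backup of the bonus-offset target lies in the class.

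The heart of the argument is the exploration/elimination bookkeeping. Here I would use the eluder-dimension machinery exactly as in \cite{wang2020reinforcement,russo2013eluder}: because $\cZ^n$ accumulates $K$ fresh samples from $d^{\pi^n}_{s_0}$ each epoch, a pigeonhole/potential argument on $\de(\cF,\beta)$ shows that the number of epochs $n$ in which $\Pr_{(s,a)\sim d^{\pi^n}_{s_0}}[\,w(\widetilde{\cF}^n,s,a)\ge\beta\,]$ is non-negligible is at most $\widetilde O(\de(\cF,\beta))$ (up to the sampling error controlled by $K$ and the cover). Two complementary facts then close the loop: (a) on the ``good'' epochs, $b^n\equiv 0$ along trajectories of the relevant policies with high probability, so $V^{\pi^n_t}(r+b^n)\approx V^{\pi^n_t}(r)$; and (b) the bonus is an over-estimate, i.e. $V^{\pi^\star}_{\rho^n_{\mathrm{cov}}}(r+b^n)\ge V^{\pi^\star}_{s_0}(r) - (\text{coverage slack})$, using that $\rho^n_{\mathrm{cov}}$ mixes in all $d^{\pi^j}_{s_0}$, $j\le n$, together with the performance-difference lemma and the coverage property that small prediction error under $\rho^n_{\mathrm{cov}}$ implies small error under $d^{\pi^\star}$ on $\cK^n$ (the contribution outside $\cK^n$ being dominated by $b^n=\frac{1}{1-\gamma}$). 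Averaging the per-epoch bounds over $n=1,\dots,N$ and invoking that the output is $\unif(\pi^2,\dots,\pi^{N+1})$, the ``bad'' epochs contribute at most $\widetilde O(\de(\cF,\beta))/N$ to the average suboptimality, which is made $\le\varepsilon/2$ by taking $N=\widetilde O(\de(\cF,\beta)/((1-\gamma)\varepsilon))$ (times lower-order factors).

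Finally I would propagate parameters: set $\beta=\varepsilon(1-\gamma)/2$ so the residual bonus on good epochs is at most $\varepsilon/2$; choose the critic-fit sample size $M$ so that $\varepsilon_{\mathrm{stat}}=\widetilde O(W^{?}\sqrt{\log\cN(\cF,\epsilon')/M})$ is below the target, with the cover radius $\epsilon'=\poly(\varepsilon,\gamma,1/W,1/\de(\cF,\beta))$ chosen fine enough that discretization is negligible relative to $\beta$ and the width threshold; choose $K$ to make the empirical width $\|\cdot\|_{\cZ^n}$ concentrate around its population version (this is where the sub-sampling discussion in the Remark after \eqref{eqn:bonus-compute} would be invoked or replaced by a direct Bernstein bound); and pick $T,\eta$ optimally in the NPG regret bound. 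Multiplying the per-epoch sample cost ($N\cdot T\cdot M\cdot K$, plus the $\widetilde O(1/(1-\gamma))$ horizon truncation for each $\widehat Q$ estimate from Algorithm~\ref{alg:estimator}) and tracking the $W$ and $(1-\gamma)$ powers yields the claimed $\widetilde O\big(W^8 (\de(\cF,\beta))^2 (\log\cN(\cF,\epsilon'))^2 / (\varepsilon^8(1-\gamma)^8)\big)$. The main obstacle I anticipate is step (iii): carefully handling the statistical dependence between the dataset $\cZ^n$ (which defines the bonus, hence the reward, hence the policies rolled out in later epochs) and the concentration of the empirical width — the excerpt advertises a ``simpler sub-sampling design'' than \cite{wang2020reinforcement}, and making that rigorous (freezing the bonus via a net over possible $\widetilde{\cF}^n$, or a martingale argument over epochs) while keeping the $\de(\cF,\beta)^2$ rather than a worse power is the delicate part.
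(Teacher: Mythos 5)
Your proposal follows essentially the same route as the paper's proof: a per-epoch decomposition into (i) a cumulative-bonus/exploration term bounded via the eluder dimension of $\cF$ plus a Hoeffding term in $K$, (ii) a mirror-descent (soft policy iteration) regret bound restricted to the known set $\cK^n$, and (iii) critic-fit error controlled by Bernstein concentration over an $\epsilon'$-cover, with the dependence between $\cZ^n$ and the fitted critic handled exactly as you anticipate — an Azuma--Hoeffding martingale over epochs combined with a cover of the function-difference class — and the same hyperparameter propagation with $\beta=\varepsilon(1-\gamma)/2$. The one place where the paper's mechanics differ in form from your sketch is the optimism step: instead of your inequality $V^{\pi^\star}_{\rho^n_{\mathrm{cov}}}(r+b^n)\ge V^{\pi^\star}_{s_0}(r)-\mathrm{slack}$, it constructs an auxiliary MDP with an absorbing maximum-reward action available only outside $\cK^n$ and a correspondingly modified comparator policy, which is how the intuition that ``the contribution outside $\cK^n$ is dominated by the bonus'' is made rigorous while keeping all values measured from $s_0$.
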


One of the technical challenges of proving this theorem is to establish an eluder dimension upper bound on the sum of the error sequence. Unlike that in \cite{russo2013eluder} and \cite{wang2020reinforcement}, who apply the eluder dimension argument directly to a sequence of data points, we prove a new bound that applies to the sum of expectations over a sequence of distributions.
This bound is then carefully combined with the augmented MDP argument in \cite{agarwal2020pc} to establish our exploration guarantee. The proof details are displayed in Appendix \ref{app:SPI}.
We now make a few remarks about the result.

\paragraph{Linear case.}
When $f(s,a) = u^T\phi(s,a)$ with $u,\phi(s,a) \in \mathbb{R}^d$, $\de(\cF, \beta) = O(d)$. Our result improves that of \cite{agarwal2020pc} by using Bernstein concentration inequality to bound the generalization error. If Hoeffding inequality is used instead, our complexity will match that of \cite{agarwal2020pc}, thereby strictly generalizing their work to the non-linear setting.

\paragraph{Model misspecification}
Like the linear case, \alg-SPI (both SAMPLE and COMPUTE) is robust to the failure of Assumption~\ref{ass:eval_close}. In Appendix \ref{app:SPI}, we provide a \emph{bounded transfer error} assumption, similar to that of~\citet{agarwal2020pc}, under which our guarantees hold up to an approximation error term. Informally, this condition demands that for any policy $\pi_t$, the best value function estimator $f_t^*$ computed from \emph{on-policy samples} also achieves a small approximation error for $Q^{\pi_t}$ under the distribution $d^{\pi^*}_{s_0}$. A formal version is presented in the Appendix.

\paragraph{Comparison to value-based methods.}
Like the comparison between LSVI-UCB and PC-PG in the linear case, our results have a poorer scaling with problem and accuracy parameters than the related work of~\citet{wang2020reinforcement}. However, they are robust to a milder notion of model misspecification as stated above and readily lend themselves to practical implementations as our experiments demonstrate.

\paragraph{Sample complexity of \algspicompute.} As remarked earlier, a key computational bottleneck in our approach is the need to compute the bonus while executing our policies. In Appendix \ref{sec:SPI-COMPUTE-PROOF} we analyze \algspicompute, which avoids this overhead and admits a
\begin{align}
  \widetilde{O}\Big(\frac{W^{10}\cdot|\cA|^2\cdot\big(\de(\cF, \beta)\big)^2\cdot \big(\log(\cN(\cF,  \epsilon'))\big)^2}{\varepsilon^{10}(1-\gamma)^{10}}\Big)
\end{align}
sample complexity under the same assumptions. The worse sample complexity of \algspicompute arises from: 1. the uniform sampling over all actions instead of targeted randomization only over unknown actions for exploration; 2. $\alpha$-probability uniform exploration even on known states.  

\subsection{Main Results for \alg-NPG}\label{sec:PEGA-NPG}
The results for \alg-NPG are qualitatively similar to those for \alg-SPI. However, there are differences in details as we fit the advantage function using the \emph{tangent class}  $\cG_{\cF}$ now, and this also necessitates some changes to the underlying assumptions regarding closure for Bellman operators and other regularity assumptions. We start with the former, and recall the definition of the tangent class $\cG_\cF$ in Equation~\eqref{eq:tangent}. For a particular function $f\in\cF$, we further use $\cG_f\subseteq \cG_\cF$ to denote the subset of linear functions induced by the features $\nabla_{\theta} \log \pi_{f_\theta}$.

\begin{assumption}[$\cG_f$-closedness]
For any $f\in\cF$, let $\pi_f(a|s)\propto\exp(f(s,a))$. For any measurable set $\cK\in\cS\times\cA$ and $g~:~\cS\times\cA\to[0,\frac{4}{(1-\gamma)^2}]$, we have $\cT^{\pi_{f,\cK}} g-\mathbb{E}_{\pi_{f,\cK}}[\cT^{\pi_{f,\cK}}g] \in \cG_f$, where
\begin{align}
    \pi_{f,\cK}(\cdot|s) =\begin{cases} \pi_f(\cdot|s),  &\text{if for all } a\in\cA, (s,a)\in\cK\\
     \unif(\{a| (s,a)\notin \cK\}), & o.w.
     \end{cases}
\end{align}
and the operator $\cT$ is defined in Equation \eqref{eq:Tpi}.
\label{ass:eval_close_npg}
\end{assumption}
One may notice that the policy $\pi_{f, \cK}$ complies with our actor update in \eqref{eqn:npg-actor-sample} since $b=0$ for $s\in\cK$. We also impose regularity and finite cover assumptions on $\cG_\cF$ as below.
\begin{assumption}[Regularity]\label{ass:NPG-REG}
We assume that $\|u\|_2\leq B$ for all $u\in \cU\subset \RR^d$, and $f_\theta$ is twice differentiable for all $f_\theta \in \cF$, and further satisfies:
\begin{equation*}
    \|f_{\theta}\|_{\infty}\leq W, ~~\|\nabla f_{\theta}\|_2\leq G ~~\text{and}~~\|\nabla^2 f_{\theta}\|_2\leq \Lambda.
\end{equation*}
We denote by $D:=\max(BG, 1/(1-\gamma))$.
\end{assumption}

\begin{assumption}[$\epsilon$-cover]\label{ass:NPG-COVER}
For the function class $\cG_{\cF}$, for any $\epsilon>0$, there exists an $\epsilon$-cover $\cC(\cG_{\cF},\epsilon)\subseteq\cG_{\cF}$ with size $|\cC(\cG_{\cF},\epsilon)|\leq \cN(\cG_{\cF}, \epsilon)$ such that for any $g\in\cG_{\cF}$, there exists $g'\in\cC(\cG_{\cF}, \epsilon)$ with $\|g-g'\|_\infty\leq \epsilon$.
\end{assumption}
We provide the sample complexity guarantee for \alg-NPG-SAMPLE as below.
\begin{theorem}[Sample Complexity of \algnpgsample]\label{thm:NPG-Sample}Let $\delta\in(0, 1)$ and $\varepsilon\in(0, 1/(1-\gamma))$. Suppose
Assumptions \ref{ass:eval_close_npg}, \ref{ass:NPG-REG}, and \ref{ass:NPG-COVER} hold. With proper hyperparameters, \alg-NPG-SAMPLE returns a policy $\pi$ satisfying \mbox{$V^\pi \geq V^{\pi^\star} - \varepsilon$} with probability at least $1-\delta$ after taking at most
\begin{align}
      \widetilde{O}\Big( \frac{D^6(D^2+\Lambda B^2)\cdot\big(\de(\cG_{\cF}, \beta)\big)^2\cdot \big(\log(\cN(\cG_\cF, \epsilon'))\big)^2}{\varepsilon^{8}(1-\gamma)^{8}}\Big)
\end{align}
samples, where $\beta=\varepsilon(1-\gamma)/2$ and $\epsilon'=poly(\varepsilon, \gamma, 1/D, 1/\de(\cG_\cF, \beta))$\footnote{The formal definition of $\epsilon'$ can be found in Theorem \ref{thm:NPG-SAMPLE-DETAIL}.}.
\end{theorem}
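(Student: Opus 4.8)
The plan is to mirror the proof of Theorem~\ref{thm:SPI-Sample}, replacing the critic class $\cF$ by the tangent class $\cG_{\cF}$ and $Q$-fitting by advantage-fitting. There are four pieces: (i) a per-epoch regret bound for the inner NPG loop (Algorithm~\ref{alg:NPG} with update~\eqref{eqn:npg-actor-sample}); (ii) a finite-sample error bound for the least-squares critic~\eqref{eqn:npg-critic-fit}; (iii) a transfer step converting $\rho^n_{\mathrm{cov}}$-accuracy into pointwise accuracy on the low-width set $\cK^n$; and (iv) an eluder-dimension potential argument bounding the number of \emph{exploring} epochs. Piece (iv) is the only genuinely new ingredient and is the main obstacle; the rest recombines known tools.

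For (i), fix epoch $n$ and write $\rho=\rho^n_{\mathrm{cov}}$, $b=b^n$, $\cK=\cK^n$. Because~\eqref{eqn:npg-actor-sample} updates the policy only on $s\in\cK$ and does so by the exact NPG step $\theta_{t+1}=\theta_t+\eta u_t$, I would run the standard NPG-with-function-approximation analysis adapted from~\cite{agarwal2020theory}: by the performance-difference lemma, $V^{\pi^\star}_{s_0}(r+b)-\tfrac1T\sum_{t=0}^{T-1}V^{\pi_t^n}_{s_0}(r+b)$ is at most $\tfrac{\log|\cA|}{\eta T}+\eta\cdot O(D^2+\Lambda B^2)$ plus two residuals, where the constant $D^2+\Lambda B^2$ in the $\eta$-term comes from a second-order Taylor expansion of $\log\pi_{\theta_{t+1}}-\log\pi_{\theta_t}$ using $\|u_t\|_2\le B$ and the Hessian bound $\|\nabla^2 f_\theta\|_2\le\Lambda$ of Assumption~\ref{ass:NPG-REG}. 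The first residual is the averaged critic error $\tfrac1T\sum_t\bigl|\EE_{(s,a)\sim d^{\pi^\star}_{s_0}}[\indict\{s\in\cK\}(u_t^\top g_t(s,a)-(A^{\pi_t^n}(s,a,r+b)-\bar b_t(s,a)))]\bigr|$; the second is $\tfrac{2}{(1-\gamma)^2}\Pr_{d^{\pi^\star}_{s_0}}[s\notin\cK]$, coming from the states where the actor stays frozen at the initialization~\eqref{eq:init-sample}.

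For (ii)--(iii): a union bound over an $\epsilon'$-cover of $\cG_{\cF}$ (Assumption~\ref{ass:NPG-COVER}) together with Bernstein's inequality (the variance-aware tail keeps the $D$- and $(1-\gamma)^{-1}$-exponents under control) shows the empirical minimizer $u_t$ of~\eqref{eqn:npg-critic-fit} obeys $\EE_{\rho}[(u_t^\top g_t-g_t^\star)^2]\le\widetilde{O}\bigl(D^2\log\cN(\cG_{\cF},\epsilon')/M\bigr)$, where $g_t^\star$ is the population least-squares target; checking that the underlying $g=Q^{\pi_t^n}(\cdot,r+b)$ lies in $[0,4/(1-\gamma)^2]$, Assumption~\ref{ass:eval_close_npg} gives $g_t^\star=A^{\pi_t^n}(\cdot,r+b)-\bar b_t$ $\rho$-almost surely (under misspecification, replaced by a bounded-transfer-error term as in~\cite{agarwal2020pc}). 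Since $\cZ^n$ is i.i.d.\ from $\rho$, a concentration step compares $\|\Delta g\|_{\cZ^n}$ with $\sqrt{|\cZ^n|\,\EE_\rho[\Delta g^2]}$; choosing $\epsilon$ in~\eqref{eqn:npg_tangent_diff_class} large enough that $\Delta g:=u_t^\top g_t-g_t^\star\in\widetilde{\cF}^n$ yet small enough to be informative, the definition $\cK^n=\{w(\widetilde{\cF}^n,\cdot)<\beta\}$ forces $|u_t^\top g_t-g_t^\star|<\beta$ on $\cK^n$, so the first residual from (i) is $O(\beta)$. Combining (i)--(iii) with $b\ge 0$ and $V^{\pi}_{s_0}(b)=\tfrac{1}{(1-\gamma)^2}\EE_{(s,a)\sim d^{\pi}_{s_0}}[\indict\{w(\widetilde{\cF}^n,s,a)\ge\beta\}]$, each epoch $n$ satisfies, with high probability, the dichotomy: either $V^{\pi^{n+1}}_{s_0}(r)\ge V^{\pi^\star}_{s_0}(r)-\varepsilon$, or $\EE_{(s,a)\sim d^{\pi^{n+1}}_{s_0}}[\indict\{w(\widetilde{\cF}^n,s,a)\ge\beta\}]\ge\phi$ with $\phi=\Theta(\varepsilon(1-\gamma)^2)$ --- the second alternative absorbing both the $\Pr_{d^{\pi^\star}_{s_0}}[s\notin\cK^n]$ residual (whose largeness forces a large optimal bonus value and hence a large post-convergence bonus collection by $\pi^{n+1}$, via the augmented-MDP argument of~\cite{agarwal2020pc}) and the event that $\pi^{n+1}$ is itself still exploring.

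For (iv), I would show that the number of epochs for which the second alternative holds is $\widetilde{O}\bigl(\de(\cG_{\cF},\beta)/\phi\bigr)$. This is the delicate step: classically the eluder pigeonhole is applied directly to a sequence of state-action pairs, but here the \emph{hard} quantities are the expectations $\EE_{d^{\pi^{n+1}}_{s_0}}[\indict\{w(\widetilde{\cF}^n,\cdot)\ge\beta\}]$ over a changing sequence of \emph{distributions} whose samples are progressively folded into the common pool $\cZ^n$; the argument therefore first uses concentration to pass from these expectations to fresh point masses in $\cZ^n$, and only then runs an eluder/pigeonhole count on that pool, which is where the factor $(\de(\cG_{\cF},\beta))^2$ and its coupling with $\log\cN(\cG_{\cF},\epsilon')$ come from. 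Finally, since \alg outputs $\unif(\pi^2,\dots,\pi^{N+1})$, taking $N$ a constant multiple of the explore-epoch count forces a $1-o(1)$ fraction of epochs into the \emph{exploit} alternative, making the returned policy $\varepsilon$-optimal after rescaling constants; summing the $K$ cover-rollout samples and the $T$ inner iterations (each using $M$ advantage estimates of trajectory length $\widetilde{O}(1/(1-\gamma))$) over $N$ epochs and substituting $\beta=\varepsilon(1-\gamma)/2$, $\epsilon'=\poly(\varepsilon,\gamma,1/D,1/\de(\cG_{\cF},\beta))$, and the matching $\poly$ choices of $\eta,T,M,\epsilon$ delivers the claimed sample bound.
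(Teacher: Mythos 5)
Your pieces (i)--(iii) and the ``Hoeffding to fresh samples, then eluder pigeonhole'' idea in (iv) do correspond to the paper's ingredients (Lemmas~\ref{lemma:NPG-I-Conv}, \ref{lemma:npgstat}, \ref{lemma:NPG-I-Approx}, \ref{lemma:NPG-I-Bonus}), but the way you assemble them has two genuine gaps. First, the off-$\cK^n$ residual $\tfrac{2}{(1-\gamma)^2}\Pr_{d^{\pi^\star}}[s\notin\cK^n]$ cannot be ``absorbed into the explore alternative'' as you claim: $s\notin\cK^n$ only means \emph{some} action at $s$ has width $\ge\beta$, not the action $\pi^\star$ plays, so a large value of $\Pr_{d^{\pi^\star}}[s\notin\cK^n]$ does not force $V^{\pi^\star}_{s_0}(b^n)$ to be large, and hence does not force $\pi^{n+1}$ to collect bonus mass; epochs with small bonus collection but large residual fall into neither branch of your dichotomy. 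The paper removes this residual rather than absorbing it: it compares against the modified comparator $\tilde{\pi}^n$ on the auxiliary MDP $\cM^n$ with the self-loop action $a^\dagger$ (reward $1$), uses $V^{\tilde{\pi}^n}_{\cM^n}\ge V^{\tilde{\pi}}$ (Lemma~\ref{lemma:Vpibound}), and kills the off-$\cK^n$ terms by showing $A^{t}_{\cM^n}(s,a^\dagger)\le 0$ there, which hinges on the sample-friendly initialization~\eqref{eq:init-sample} keeping $\pi_t$ on unknown actions so that $V^{t}_{\cM^n}(s)\ge 1/(1-\gamma)$. Your sketch cites this device but does not actually use it where it is needed.

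Second, the epoch-counting finish does not yield the stated rate. Taking $N$ a \emph{constant} multiple of the explore-epoch count leaves a constant fraction of epochs whose policies may be worthless, so the output mixture $\unif(\pi^2,\dots,\pi^{N+1})$ can be $\Omega(1/(1-\gamma))$-suboptimal; to make the mixture $\varepsilon$-optimal you need the exploring fraction to be $O(\varepsilon(1-\gamma))$, i.e.\ $N\gtrsim N_0/(\varepsilon(1-\gamma))$ with $N_0=\widetilde{O}\big(D^2\de(\cG_{\cF},\beta)/(\beta^2\varepsilon(1-\gamma)^2)\big)$, which inflates $N$ (and hence $N(K+TM)$) by roughly $1/(\varepsilon(1-\gamma)^2)$ over the claimed $\widetilde{O}\big(\varepsilon^{-8}(1-\gamma)^{-8}\big)$ bound. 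The paper avoids this loss by never classifying epochs: it bounds the cumulative quantity $\sum_{n=1}^N\big(V^{\pi^{n+1}}_{b^n}-V^{\pi^{n+1}}\big)$ directly via the eluder argument on the pooled fresh samples (Lemma~\ref{lemma:NPG-I-Bonus}) and divides by $N$, so exploring epochs cost only their bonus value, not $1/(1-\gamma)$. A smaller fixable inaccuracy: $\cZ^n$ is \emph{not} i.i.d.\ from $\rho^n_{\mathrm{cov}}$ (the policies generating it are data-dependent, and $u_t$ depends on $\cZ^n$ through $b^n$); the comparison of $\|\Delta g\|_{\cZ^n}$ with $NK\,\EE_{\rho^n_{\mathrm{cov}}}[\Delta g^2]$ must go through an Azuma--Hoeffding martingale bound uniformly over a cover of $\cG_{\cF}$, as in the proof of Lemma~\ref{lemma:SPI-I-Approx} and its NPG analogue feeding into Theorem~\ref{thm:NPG-SAMPLE-DETAIL}.
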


Notice that the differences between Theorems~\ref{thm:SPI-Sample} and~\ref{thm:NPG-Sample} only arise in the function class complexity terms and the regularity parameters, where the NPG version pays the complexity of the tangent class instead of the class $\cF$ as in the SPI case. NPG, however, offers algorithmic benefits as remarked before, and the result here extends to a more general form under a bounded transfer error condition that we present in Appendix \ref{app:NPG}. As with the algorithms, the theorems essentially coincide in the linear case. One interesting question for further investigation is the relationship between the eluder dimensions of the classes $\cF$ and $\cG_{\cF}$, which might inform statistical preferences between the two approaches.

\section{Experiment}\label{sec:exp}
We conduct experiments to testify the effectiveness of ENIAC. Specifically, we aim to show that
\begin{enumerate*}
\item ENIAC is competent to solve RL problem which requires exploration.
\item Compared with PC-PG which uses linear feature for bonus design, the idea of width in ENIAC performs better when using complex neural networks.
\end{enumerate*}
Our code is available at \href{https://github.com/FlorenceFeng/ENIAC}{https://github.com/FlorenceFeng/ENIAC}.

\subsection{Implementation of ENIAC}
We implement ENIAC using PPO \cite{schulman2017proximal} as the policy update routine and use fully-connected neural networks (FCNN) to parameterize actors and critics. At the beginning of each epoch $n\in[N]$, as in Algorithm \ref{alg:ENIAC}, we add a policy $\pi^n$ into the cover set, generate visitation samples following $\pi^n$, update the replay buffer to $\cZ^n$, and compute an approximate width function $w^n:\cS\times\cA\rightarrow\RR$. Recall that $w^n(s,a)$ is rigorously defined as: 
\begin{align}\label{eq:width-loss-ideal}
    \max_{f,f'\in\cF} ~f(s,a)-f'(s,a), \text{~subject to~} \|f-f'\|_{\cZ^n}\leq\epsilon.
\end{align}
To approximate this value in a more stable and efficient manner, we make several revisions to \eqref{eq:width-loss-ideal}: 
\begin{enumerate*}
    \item instead of training both $f$ and $f'$, we fix $f'$ and only train $f$;
    \item due to 1., we change the objective from $f-f'$ to $(f-f')^2$ for symmetry;
    \item instead of always retraining $f$ for every query point $(s,a)$, we gather a batch of query points $\cZ^n_Q$ and train in a finite-sum formulation.
\end{enumerate*}
Specifically, we initialize $f$ as a neural network with the same structure as the critic network (possibly different weights and biases) and initialize $f'$ as a copy of $f$. Then we fix $f'$ and train $f$ by maximizing the following loss:
\begin{align}\label{eq:width-loss-train}
\sum_{(s,a)\in\cZ_Q^n} \frac{\lambda\big(f(s,a)-f'(s,a)\big)^2}{|\cZ^n_Q|} - \sum_{(s',a')\in\cZ^n} \frac{\big(f(s',a')-f'(s',a')\big)^2}{|\cZ^n|}  -  \sum_{(s,a)\in\cZ_Q^n} \frac{\lambda_1\big(f(s,a)-f'(s,a)\big)}{|\cZ_Q^n|},\nonumber\\
\end{align}
where the last term is added to avoid a zero gradient (since $f$ and $f'$ are identical initially). We generate $\cZ_Q^n$ by using the current policy-cover as the initial distribution then rolling out with $\pi^n$. \eqref{eq:width-loss-train} can be roughly regarded as a Lagrangian form of \eqref{eq:width-loss-ideal} with regularization. The intuition is that we want the functions to be close on frequently visited area (the second term) and to be as far as possible on the query part (the first term). If a query point is away from the frequently visited region, then the constraint is loose and the difference between $f$ and $f'$ can be enlarged and a big bonus is granted; otherwise, the constraint becomes a dominant force and the width is fairly small. After training for several steps of stochastic gradient descent, we freeze both $f$ and $f'$ and return $|f(s,a)-f'(s,a)|$ as $w^n(s,a)$.
During the experiments, we set bonus as $0.5\cdot \frac{ w^n(s,a)}{\max_{\cZ^n_Q}w^n}$ without thresholding for the later actor-critic steps. A more detailed width training algorithm can be found in Appendix \ref{app:algo}.

We remark that in practice width training can be fairly flexible and customized for different environments. For example, one can design alternative loss functions as long as they follow the intuition of width; $f$ and $f'$ can be initialized with different weights and the loss function plays a pulling-in role instead of a stretching-out force as in our implementation; $\cZ_Q^n$ can be generated with various distributions as long as it has a relatively wide cover to ensure the quality of a batch-trained width. 




\subsection{Environment and Baselines}
We test on a continuous control task which requires exploration: continuous control MountainCar\footnote{\href{https://gym.openai.com/envs/MountainCarContinuous-v0/}{https://gym.openai.com/envs/MountainCarContinuous-v0/}} from OpenAI Gym~\cite{brockman2016openai}. This environment has a 2-dimensional continuous state space and a 1-dimensional continuous action space $[-1,1]$. The agent only receives a large reward $(+100)$ if it can reach the top of the hill and small negative rewards for any action. A locally optimal policy is to do nothing and avoid action costs. The length of horizon is 100 and $\gamma=0.99$.

We compare five algorithms: ENIAC, vanilla PPO, PPO-RND, \pcpg, and ZERO. All algorithms use PPO as their policy update routine and the same FCNN for actors and critics. The vanilla PPO has no bonus; PPO-RND uses RND bonus~\cite{burda2018exploration} throughout training; \pcpg iteratively constructs policy cover and uses linear features (kernel-based) to compute bonus as in the implementation of~\citet{agarwal2020pc}, which we follow here; ZERO uses policy cover as in \pcpg and the bonus is all-zero. For ENIAC, PC-PG, and ZERO, instead of adding bonuses to extrinsic rewards, we directly take the larger ones, i.e., the agent receives $\max(r, b)$ during exploration\footnote{This is simply for implementation convenience and does not change the algorithm. One can also adjust bonus as $\max(r,b)-r$.}. In ENIAC, we use uniform distribution to select policy from the cover set, i.e., $\rho^n_{\text{cov}}=\unif(d^{\pi^1}, \dots, d^{\pi^n})$ as in the main algorithm; PC-PG optimizes the selection distribution based on the policy coverage (see \cite{agarwal2020pc} for more details). We provide hyperparameters for all the methods in the Appendix \ref{app:algo}.

\begin{figure*}[!t]
    \centering
    \includegraphics[scale=0.38]{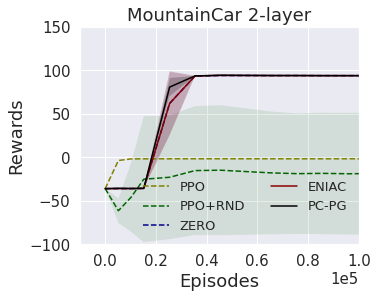}
    ~\includegraphics[scale=0.38]{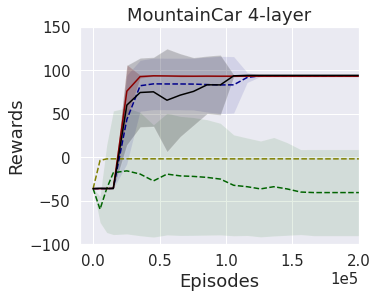}
    ~\includegraphics[scale=0.38]{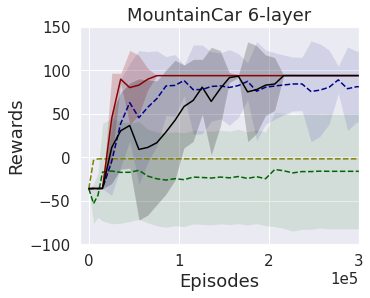}
    \vspace{-.1cm}
    \caption{Performance of different methods on MountainCar as we vary the netural network depth. The performances are evaluated over 10 random seeds where lines are means and shades represent standard deviations. We stop training once the policy can obtain rewards$>93$. 
    \vspace{-2mm}}
    \label{fig:rewards}
\end{figure*}

\begin{figure*}[!t]
    \centering 
     \includegraphics[scale=0.35]{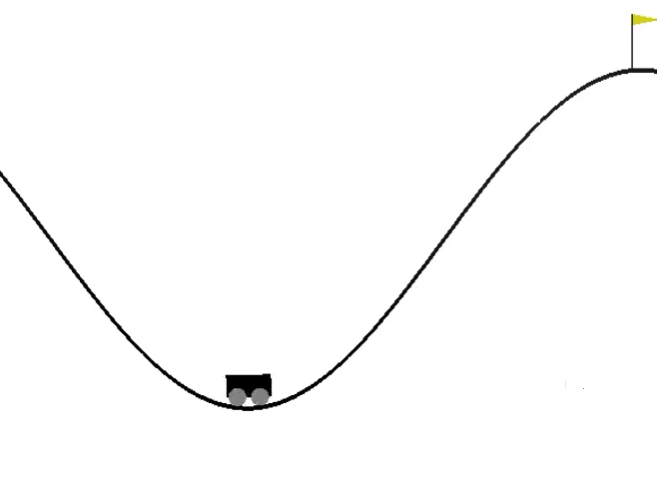}
     ~
    \includegraphics[scale=0.7]{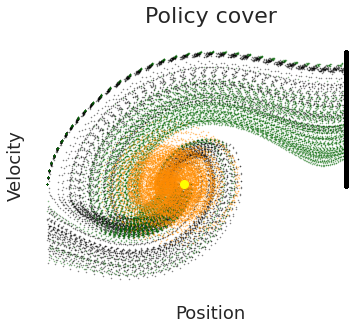}
    ~~
    \includegraphics[scale=0.7]{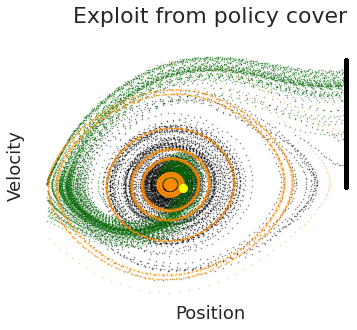}
    \caption{The MountainCar environment (left). Trajectories of exploration (middle) and exploitation (right) policies of \alg, with colors denoting different epochs: orange for the first policy in the cover set, black for the second, and green for the third. Agent starts from the centric area (near the yellow circle) and the black vertical line on the right represents goal positions. }
    \label{fig:visitation}
\end{figure*}

\begin{figure*}[!t]
    \centering
    \includegraphics[scale=0.3]{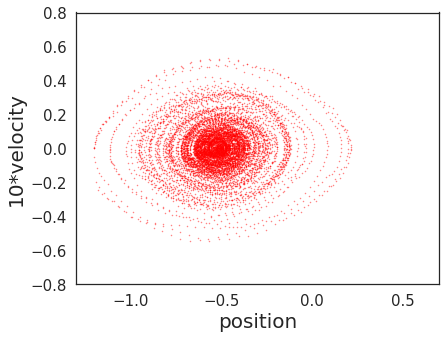}
    ~\includegraphics[scale=0.3]{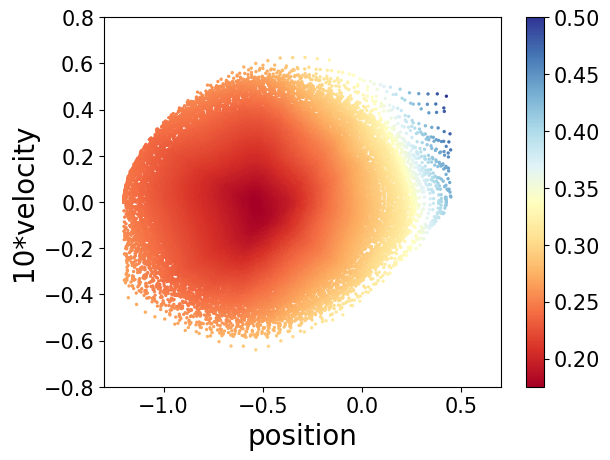}
    ~\includegraphics[scale=0.3]{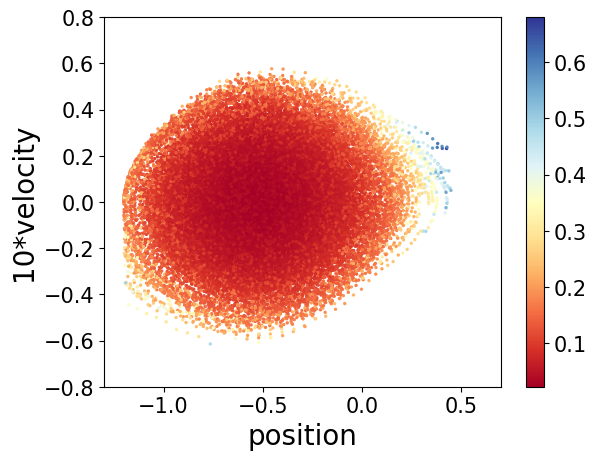}
    \caption{Bonus function comparison. [Left]: The trajectories of a chosen policy . [Middle]: the bonus function built by ENIAC upon the policy. [Right]: The bonus built by PC-PG upon the policy. See text for details.}
    \label{fig:bonus}
\end{figure*}

\subsection{Results}
We evaluate all the methods for varying depths of the critic network: 2-layer stands for (64, 64) hidden units, 4-layer for (64, 128, 128, 64), and 6-layer for (64, 64, 128, 128, 64, 64). Layers are connected with ReLU non-linearities for all networks. In Figure~\ref{fig:rewards}, we see that \alg robustly achieves high performance consistently in all cases. Both \pcpg and ZERO perform well for depth 2, but as we increase the depth, the heuristic kernel-based bonus and the 0-offset bonus do not provide a good representation of the critic's uncertainty and its learning gets increasingly slower and unreliable. PPO and PPO-RND perform poorly, consistent with the results of~\citet{agarwal2020pc}. One can also regard the excess layers as masks on the true states and turn them into high-dimensional observations. When observations become increasingly complicated, more non-linearity is required for information processing and ENIAC is a more appealing choice.

We visualize \alg's policies in Figure~\ref{fig:visitation}, where we plot the state visitations of the exploration policies from the cover, as well as the exploitation policies trained using the cover with just the external reward, for varying number of epochs. We see that \alg quickly attains exploration in the vicinity of the optimal state, allowing the exploitation policy to become optimal. Since the bonus in our experiments is smaller than the maximum reward, the effect of the bonus dissipates once we reach the optimal state, even for the exploration policies. We also visualize typical landscapes of bonus functions in ENIAC and \pcpg in Figure \ref{fig:bonus}. Both bonuses grant small values on frequently visited area and large values on scarsely visited part. But the bonus in ENIAC changes in a smoother way than the one in \pcpg. This might inspire future study on the shaping of bonuses.


The results testify the competence of ENIAC on the exploration problem. Especially, compared with PC-PG, the usage of width is more suitable for complex function approximation.


\section{Conclusion}
In this paper, we present the first set of policy-based techniques for RL with non-linear function approximation. Our methods provide interesting tradeoffs between sample and computational complexities, while also inspire an extremely practical implementation. Empirically, our results demonstrate the benefit of correctly reasoning about the learner's uncertainty under a non-linear function class, while prior heuristics based on linear function approximation fail to robustly work as we vary the function class. Overall, our results open several interesting avenues of investigation for both theoretical and empirical progress. In theory, it is quite likely that our sample complexity results have scope for a significant improvement. A key challenge here is to enable better sample reuse, typically done with bootstrapping techniques for off-policy learning, while preserving the robustness to model misspecification that our theory exhibits. Empirically, it would be worthwhile to scale these methods to complex state and action spaces such as image-based inputs, and evaluate them on more challenging exploration tasks with a longer effective horizon.

\bibliographystyle{plainnat}
\bibliography{references}
\newpage
{
\hypersetup{linkcolor=black}
\renewcommand{\baselinestretch}{1.0}\normalsize
\tableofcontents
\renewcommand{\baselinestretch}{1.0}\normalsize
}

\newpage
\appendix
\cut{
\section{\textcolor{blue}{Corrections to the main body}}

In this section, we detail minor changes to our technical assumptions and their effects on our sample complexity results. We begin with the SPI algorithms.

\subsection{Updates for SPI results}

Assumption~\ref{ass:eval_close} needs to be strengthened to hold for all policies as described below.
\begin{assumption}[Correction to Assumption~\ref{ass:eval_close}]
For all $\pi\in \{\cS\to\Delta(\cA)\}$ and $g~:~\cS\times\cA\to[0,\tfrac{2}{(1-\gamma)^2}]$, we have $\cT^{\pi} g \in \cF$.
\label{ass:eval_close_fix}
\end{assumption}

The reason for this stronger assumption is that SPI constructs policies which are not greedy policies $\pi_f$, but rather take the form $\pi_t\propto \exp(\sum_{i=1}^t f_i)$ on the set $\cK^n$ at epoch $n$, and uniform otherwise. Since the set $\cK^n$ is not easily described in any parametric or functional form, we make the assumption for all randomized policies. Note that this assumption holds, for instance, when $\cF$ consists of linear functions in the features defining a linear MDP~\citep{yang2019reinforcement,jin2019provably}.

The second change is that we need an additional complexity assumption on $\cF$ to enable uniform convergence like in~\citet{wang2020reinforcement}. This assumption is identical to the one made in their earlier work, and a fairly standard one for learning over a function class $\cF$.

\begin{assumption}[$\epsilon$-cover]\label{ass:compact}
Given a function class $\cF$, for any $\epsilon>0$, there exists an $\epsilon$-cover $\cC(\cF,\epsilon)\subseteq\cF$ with size $|\cC(\cF,\epsilon)|\leq \cN(\cF, \epsilon)$. such that for any $f\in\cF$, there exists $f'\in\cC(\cF, \epsilon)$ with $\|f-f'\|_\infty\leq \epsilon$.
\end{assumption}

\begin{theorem}[Correction of Theorem~\ref{thm:SPI-Sample}]
Let $\delta\in(0, 1)$ and $\varepsilon\in(0, 1/(1-\gamma))$. Suppose
Assumptions \ref{ass:eval_close_fix} and~\ref{ass:compact} holds and $\|f\|_{\infty}\leq W$ for every epoch $n\in[N]$ and every $f\in\cF$. With proper hyperparameters, \alg-SPI returns a policy $\pi$ satisfying \mbox{$V^\pi \geq V^{\pi^\star} - \varepsilon$}, after taking at most
\begin{align}
  \widetilde{O}\Big(\frac{W^8\big(\de(\cF, \beta)\big)^3\cdot \log(\cN(\cF, \epsilon'))}{\varepsilon^{11}(1-\gamma)^{15}}\Big),
\end{align}
with probability at least $1-\delta$,
where 
$\widetilde{O}$ hides polylog factors in $|\cA|, W, 1/\varepsilon$  and $1/(1-\gamma)$.
\end{theorem}

As clearly seen, the main difference in sample complexity is the appearance of the log-covering number. In the linear setting, this only introduces an extra factor of $d$ for features $\phi\in \mathbb{R}^d$.

\subsection{Updates for NPG results}
Similarly, Assumption~\ref{ass:eval_close_npg} needs to be strengthened to hold for all policies as described below.
\begin{assumption}[Correction to Assumption~\ref{ass:eval_close}]
Let $\pi_f(a|s)\propto\exp(f(s,a)), f\in\cF$. For any measurable set $\cK\in\cS\times\cA$ and $g~:~\cS\times\cA\to[0,\frac{4}{(1-\gamma)^2}]$, we have $\cT^{\pi_{f,\cK}} g-\mathbb{E}_{\pi_{f,\cK}}[\cT^{\pi_{f,\cK}}g] \in \cF$, where
\begin{align}
    \pi_{f,\cK}(\cdot|s) &= \pi_f(\cdot|s), \quad \text{if for all } a\in\cA, (s,a)\in\cK\\
    \pi_{f,\cK}(\cdot|s) &= \unif(\{a| (s,a)\notin \cK\}),\quad o.w.
\end{align}
\label{ass:eval_close_npg_fix}
\end{assumption}
The reason for this stronger assumption is that NPG constructs policies which are not greedy policies $\pi_{f}$, but rather $\pi_t\propto\exp(\pi_{f}\cdot\indict\{\cK^n\})$ and the latter may not represented as a parameterized policy.

As in SPI, we also need an additional complexity assumption on $\cG_{\cF}$

\begin{assumption}[$\epsilon$-cover]\label{ass:NPG-COVER}
Given a function class $\cG_{\cF}$, for any $\epsilon>0$, there exists an $\epsilon$-cover $\cC(\cG_{\cF},\epsilon)\subseteq\cG_{\cF}$ with size $|\cC(\cG_{\cF},\epsilon)|\leq \cN(\cG_{\cF}, \epsilon)$ such that for any $g\in\cG_{\cF}$, there exists $g'\in\cC(\cG_{\cF}, \epsilon)$ with $\|g-g'\|_\infty\leq \epsilon$.
\end{assumption}

\begin{theorem}[Correction of Theorem~\ref{thm:NPG-Sample}]
Let $\delta\in(0, 1)$ and $\varepsilon\in(0, 1/(1-\gamma))$. Suppose
Assumptions \ref{ass:eval_close_npg_fix}, \ref{ass:NPG-COVER}, and \ref{ass:NPG-REG} hold. With proper hyperparameters, \alg-NPG returns a policy $\pi$ satisfying \mbox{$V^\pi \geq V^{\pi^\star} - \varepsilon$}, after taking at most
\begin{align}
    \widetilde{O}\Big(\frac{B^8G^6(G^2+\Lambda)\big(\de(\cG_{\cF}, \beta)\big)^3\cdot \log(\cN(\cG_{\cF},  \epsilon')/\delta)}{\epsilon^{11}(1-\gamma)^{15}}\Big)
\end{align}
samples with probability at least $1-\delta$,
where 
$\widetilde{O}$ hides the polylog factor in $|\cA|, B, G, \Lambda, 1/\varepsilon$, and $1/(1-\gamma)$.
\end{theorem}

As clearly seen, the main difference in sample complexity is the appearance of the log-covering number. In the linear setting, this only introduces an extra factor of $d$ for features $\phi\in \mathbb{R}^d$.
}
\section{Omitted pseudocodes from main text}

We give the pseudocodes for value estimators and visitation distribution sampler in Algorithms~\ref{alg:estimator} and~\ref{alg:sampler} respectively. Combining them, we are able to generate samples for critic fit.

\begin{algorithm}[t]
  \caption{Value Estimators}
  \label{alg:estimator}
  \begin{algorithmic}[1]
   \State \textbf{Routine:} $V^{\pi}$-\texttt{ESTIMATOR}
  \State \hskip1em \textbf{Input:} starting state $s$.
  \State \hskip1em Execute $\pi$ from $s$; at any step $t$ with $(s_t,a_t)$, terminate with probability $1-\gamma$.
  \State \hskip1em \textbf{Return:} $\hat{V}^{\pi}(s)=\sum_{i=0}^t r(s_i,a_i)$, where $s_0=s$.\\
  \vspace{5pt}
  \State \textbf{Routine:} $Q^{\pi}$-\texttt{ESTIMATOR}
  \State \hskip1em \textbf{Input:} starting state-action $(s,a)$.
  \State \hskip1em Execute $\pi$ from $(s,a)$; at any step $t$ with $(s_t,a_t)$, terminate with probability $1-\gamma$.
  \State \hskip1em \textbf{Return:} $\hat{Q}^{\pi}(s,a)=\sum_{i=0}^t r(s_i,a_i)$, where $(s_0,a_0)=(s,a)$.
  \cut{\State \textbf{Routine:} $Q^{\pi}$-\texttt{ESTIMATOR for Stop Policies}
  \State \hskip1em \textbf{Input:} starting state-action $(s,a)$, a stop policy $\pi$, $r(s,a)$.
  \State \hskip1em Execute $\pi$ from $(s,a)$;
  \State \hskip1em \textbf{Return:} $\hat{Q}^{\pi}(s,a)=\sum_{i=0}^t \gamma^i \cdot r(s_i,a_i)$, where $(s_0,a_0)=(s,a)$.
  \\}
\cut{  \State \textbf{Routine:} $Q^{\pi}$-\texttt{ESTIMATOR for a terminating policy $\tpi$}
  \State \hskip1em \textbf{Input:} starting state-action $(s,a)$, $\pi, r(s,a)$.
  \State \hskip1em Execute $\tpi$ from $(s,a)$.
  \State \hskip1em \textbf{Return:} $\hat{Q}^{\pi}_{\texttt{ONE}}(s,a)=\sum_{i=0}^{t} \gamma^t \cdot r(s_i,a_i)$, where $(s_0,a_0)=(s,a)$.}
  \end{algorithmic}
\end{algorithm}

\begin{algorithm}[t]
  \caption{$d^{\pi}$ Sampler}
  \label{alg:sampler}
  \begin{algorithmic}[1]
  \State \textbf{Routine:} $d^{\pi}_{\nu}$-\texttt{SAMPLER}
  \State \hskip1em \textbf{Input:} $\nu\in\Delta(\cS\times\cA), \pi$.
  \State \hskip1em Sample $s_0,a_0\sim \nu$;
  \State \hskip1em Execute $\pi$ from $s_0,a_0$; at any step $t$ with $(s_t,a_t)$, terminate with probability $1-\gamma$.
  \State \hskip1em \textbf{Return:} $(s_t,a_t)$.
  \cut{\State \textbf{Routine:} $d^{\pi}_{\nu}$-\texttt{ONE-STEP-FORWARD-SAMPLER}
  \State \hskip1em \textbf{Input:} $\nu\in\Delta(\cS\times\cA), \pi$.
  \State \hskip1em Run $d^{\pi}_{\nu}$-\texttt{SAMPLER} and assume the termination happens at $(s_t,a_t,s_{t+1})$
  \State \hskip1em Keep executing one more an action $a_{t+1}\sim \unif(\cA)$ from $s_{t+1}$ and reach a new state $s'$.
  \State \hskip1em \textbf{Return:} $(s_{t+1}, a_{t+1}, s')$.}
  \end{algorithmic}
\end{algorithm}

\section{Proof Setup}
\subsection{Definition and Notation}\label{sec:proofdef}
We denote by $\cM$ the original MDP and $\tilde{\pi}$ an arbitrary fixed comparator policy (e.g., an optimal policy). Our target is to show that after $N$ epochs, ENIAC is able to output a policy whose value is larger than $V^{\tilde{\pi}}$ minus some problem-dependent constant. First we describe the construction of some auxiliary MDPs, which is conceptually similar to~\citet{agarwal2020pc}, modulo the difference in the bonus functions.

For each epoch $n\in[N]$, we  consider three MDPs: the original MDP $\cM$, the bonus-added MDP $\cM_{b^n}:=(\cS, \cA, P, r+b^n, \gamma)$, and an auxiliary MDP $\cM^n$. $\cM^n$ is defined as $(\cS,\cA\cup\{a^\dagger\}, P^n, r^n, \gamma)$, where $a^\dagger$ is an extra action which is only available for $s\notin \cK^n$ (recall that $s\in\cK^n$ if and only if $b^n(s,a)\equiv 0$ for all $a\in\cA$). For all $(s,a)\in\cS\times\cA$,
\begin{align}
   P^n(\cdot|s,a) = P(\cdot|s,a), ~~~ r^n(s,a) = r(s,a) + b^n(s,a).
\end{align}
For $ s\notin\cK^n$,
\begin{align}
  P^n(s|s,a^\dagger)=1, ~~~  r^n(s,a^{\dagger})=1.
\end{align}
Basically, $a^\dagger$ allows the agent to stay in a state $s\notin\cK^n$ while accumulating maximum instant rewards.

Given $\cM^n$, we further define $\tilde{\pi}^n$ such that $\tilde{\pi}^n(\cdot|s) = \tilde{\pi}(\cdot|s)$ for $s\in\cK^n$ and $\tilde{\pi}^n(a^{\dagger}|s)=1$ for $s\notin\cK^n$. We denote by $\tilde{d}_{\cM^n}$ the state-action distribution induced by $\tilde{\pi}^n$ on $\cM^n$ and $d^{\tilde{\pi}}$ the state-action distribution induced by $\tilde{\pi}$ on $\cM$.

\paragraph{Additional Notations}
Given a policy $\pi$, we denote by $V^\pi_{b^n}, Q^\pi_{b^n}$, and $A^\pi_{b^n}$ the state-value, $Q$-value, and advantage function of $\pi$ on $\cM_{b^n}$ and $V^\pi_{\cM^n}, Q^\pi_{\cM^n}$, and $A^\pi_{\cM^n}$ for the counterparts on $\cM^n$. For the policy $\pi^n_t$, i.e., the policy at the $t_{\text{th}}$ iteration in the $n_{\text{th}}$ epoch of ENIAC, we further simplify the notation as $V^t_{b^n}, Q^t_{b^n}$, and $A^t_{b^n}$ and also $V^t_{\cM^n}, Q^t_{\cM^n}$, and $A^t_{\cM^n}$.

\begin{remark}\label{rmk:equiv}
Note that only $\tilde{\pi}^n$ can take the action $a^{\dagger}$ for $s\notin\cK^n$. All policies $\{\pi^n_t\}$ is not aware of $a^{\dagger}$ and therefore, $V^t_{b^n}=V^t_{\cM^n}$, $Q^t_{b^n}=Q^t_{\cM^n}$, and $A^t_{b^n}=A^t_{\cM^n}$.
\end{remark}

Based on the above definitions, we directly have the following two lemmas.

\begin{lemma}\label{lemma:dbound}
Consider any state $s\in\cK^n$, we have:
\begin{align}
    \tilde{d}_{\cM^n}(s,a)\leq d^{\tilde{\pi}}(s,a), ~~~\forall a\in\cA.
\end{align}
\end{lemma}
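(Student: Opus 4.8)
The plan is to unroll both discounted state-action distributions as sums over trajectories and compare them term by term, exploiting the fact that $s\in\cK^n$ forces $\tilde\pi^n$ to agree with $\tilde\pi$ at that state. Recall that by definition
\[
\tilde d_{\cM^n}(s,a) = (1-\gamma)\sum_{t=0}^\infty \gamma^t \Pr^{\tilde\pi^n}_{\cM^n}(s_t=s, a_t=a \mid s_0),
\qquad
d^{\tilde\pi}(s,a) = (1-\gamma)\sum_{t=0}^\infty \gamma^t \Pr^{\tilde\pi}_{\cM}(s_t=s, a_t=a \mid s_0).
\]
The key observation is that the two MDPs $\cM^n$ and $\cM$ share the transition kernel $P$ on the original action set $\cA$, and the only difference is the extra action $a^\dagger$ available to $\tilde\pi^n$ at states outside $\cK^n$; moreover $\tilde\pi^n(\cdot\mid s') = \tilde\pi(\cdot\mid s')$ whenever $s'\in\cK^n$. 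So I would first argue that for the fixed target state $s\in\cK^n$ and any $t$,
\[
\Pr^{\tilde\pi^n}_{\cM^n}(s_t=s \mid s_0) \le \Pr^{\tilde\pi}_{\cM}(s_t=s \mid s_0),
\]
i.e. the probability of \emph{reaching} $s$ at step $t$ under $\tilde\pi^n$ on $\cM^n$ is no larger than under $\tilde\pi$ on $\cM$. Intuitively this holds because the trajectory of $\tilde\pi^n$ on $\cM^n$ that reaches $s$ at time $t$ must pass only through states/actions in $\cA$ (since $a^\dagger$ is absorbing, once it is taken the agent can never leave that state, hence can never arrive at $s\in\cK^n$ later), so each such trajectory corresponds to a trajectory of $\tilde\pi$ on $\cM$ of the same probability — but $\tilde\pi^n$ additionally "loses" probability mass to the $a^\dagger$ branches at the out-of-$\cK^n$ states it visits before time $t$. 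Formally this can be proved by induction on $t$ on the occupancy measures restricted to $\cK^n$ (or to all states, using that mass on $a^\dagger$-absorbed states never returns), comparing $\Pr^{\tilde\pi^n}_{\cM^n}(s_t=\cdot)$ and $\Pr^{\tilde\pi}_{\cM}(s_t=\cdot)$ coordinatewise on the relevant states.

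Once the reaching-probability inequality is in hand, I would finish by conditioning on $s_t=s$: since $s\in\cK^n$, we have $\tilde\pi^n(a\mid s) = \tilde\pi(a\mid s)$ for every $a\in\cA$, so
\[
\Pr^{\tilde\pi^n}_{\cM^n}(s_t=s, a_t=a \mid s_0) = \Pr^{\tilde\pi^n}_{\cM^n}(s_t=s \mid s_0)\,\tilde\pi(a\mid s)
\le \Pr^{\tilde\pi}_{\cM}(s_t=s \mid s_0)\,\tilde\pi(a\mid s) = \Pr^{\tilde\pi}_{\cM}(s_t=s, a_t=a \mid s_0).
\]
Multiplying by $(1-\gamma)\gamma^t$ and summing over $t\ge 0$ yields $\tilde d_{\cM^n}(s,a)\le d^{\tilde\pi}(s,a)$ for all $a\in\cA$, which is the claim.

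The main obstacle is making the coupling/domination argument between the two Markov chains rigorous — i.e. cleanly establishing the per-step reaching-probability inequality. The subtlety is that $\tilde\pi^n$ and $\tilde\pi$ take actions according to the same conditional distribution only on $\cK^n$, while on $\cS\setminus\cK^n$ the policy $\tilde\pi^n$ diverts \emph{all} its mass to $a^\dagger$; one must check that this divergence can only \emph{decrease} the mass that eventually arrives at the fixed $s\in\cK^n$, never increase it. The induction should be set up so that the inductive hypothesis compares the full state-occupancy vectors at step $t$ and then pushes them forward one step through $P$ (for $\tilde\pi^n$, only the $\cA$-part of the mass propagates via $P$, the $a^\dagger$-part getting "frozen"), which preserves the coordinatewise inequality. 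This is the one place where some care with notation is needed, but it is conceptually standard; everything else is bookkeeping with the geometric series.
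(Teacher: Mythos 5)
Your proposal is correct and follows essentially the same route as the paper: an induction over time steps showing that the per-step occupancy of $\tilde{\pi}^n$ on $\cM^n$ is dominated by that of $\tilde{\pi}$ on $\cM$ at states in $\cK^n$ (because mass diverted to $a^\dagger$ at states outside $\cK^n$ is frozen there and never re-enters $\cK^n$), followed by multiplying by the common action probabilities $\tilde{\pi}(a|s)$ at $s\in\cK^n$ and summing the discounted series. One caveat: your parenthetical variant comparing the \emph{full} state-occupancy vectors coordinatewise over all states does not hold (occupancy at absorbed states outside $\cK^n$ can be larger under $\tilde{\pi}^n$), so the induction must be restricted to $\cK^n$ as in your primary formulation and in the paper, or else phrased only for the not-yet-absorbed portion of the mass.
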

\begin{proof}
The proof follows that of Lemma B.1. in \cite{agarwal2020pc}. We present below for the readers' convenience.

We prove by induction over the time steps along the horizon. Recall $\tilde{d}_{\cM^n}$ is the state-action distribution of $\tilde{\pi}^n$ over $\cM^n$ and $d^{\tilde{\pi}}$ is the state-action distribution of $\tilde{\pi}$ on both $\cM_{b^n}$ and $\cM$ as they share the same dynamics. We use another subscript $h$ to indicate the step index, e.g., $\tilde{d}_{\cM^n,h}$ is the state-action distribution at the $h_{\text{th}}$ step following $\tilde{\pi}^n$ on $\cM^n$.

Starting at $h=0$, if $s_0\in\cK^n$, then $\tilde{\pi}^n(\cdot|s_0)=\tilde{\pi}(\cdot|s_0)$ and we can easily get:
\begin{align}
    \tilde{d}_{\cM^n,0}(s_0,a)= d^{\tilde{\pi}}_0(s_0,a), ~~~\forall a\in\cA.
\end{align}
Now we assume that at step $h$, for all $s\in\cK^n$, it holds that
 \begin{align}
     \tilde{d}_{\cM^n,h}(s,a)\leq d^{\tilde{\pi}}_h(s,a), ~\forall a\in\cA.
 \end{align}
 Then, for step $h+1$, by definition we have that for $s\in\cK^n$
 \begin{align}
     \tilde{d}_{\cM^n, h+1}(s) &= \sum_{s',a'} \tilde{d}_{\cM^n,h}(s',a') P_{\cM^n}(s|s',a')\\
     &=\sum_{s',a'}\indict\{s'\in\cK^n\} \tilde{d}_{\cM^n,h}(s',a') P_{\cM^n}(s|s',a')\\
     &=\sum_{s',a'}\indict\{s'\in\cK^n\} \tilde{d}_{\cM^n,h}(s',a') P(s|s',a'),
 \end{align}
 where the second line is due to that if $s'\notin \cK^n$, $\tilde{\pi}$ will deterministically pick $a^\dagger$ and $P_{\cM^n}(s|s',a^\dagger)=0$. On the other hand, for $d^{\tilde{\pi}}_{h+1}(s,a)$, it holds that for $s\in\cK^n$,
 \begin{align}
     d^{\tilde{\pi}}_{h+1}(s) &= \sum_{s',a'}d^{\tilde{\pi}}_h(s',a')P(s|s',a') \\
     &= \sum_{s',a'}\indict\{s'\in\cK^n\} d^{\tilde{\pi}}_{h}(s',a') P(s|s',a') + \sum_{s',a'}\indict\{s'\notin\cK^n\} d^{\tilde{\pi}}_{h}(s',a')P(s|s',a')\\
     &\geq \sum_{s',a'}\indict\{s'\in\cK^n\} d^{\tilde{\pi}}_h(s',a')P(s|s',a')\\
     &\geq \sum_{s',a'}\indict\{s'\in\cK^n\}\tilde{d}_{\cM^n,h}(s',a')P(s|s',a')=\tilde{d}_{\cM^n,h+1}(s).
 \end{align}
 Using the fact that $\tilde{\pi}^n(\cdot|s) = \tilde{\pi}(\cdot|s)$ for $s\in\cK^n$, we conclude that the inductive hypothesis holds at $h+1$ as well. Using the definition of the average state-action distribution, we conclude the proof.
\end{proof}

\begin{lemma}\label{lemma:Vpibound}
For any epoch $n\in[N]$, we have
\begin{align}
    &V^{\tilde{\pi}^n}_{\cM^n} \geq V^{\tilde{\pi}}_{\cM}.
\end{align}
\end{lemma}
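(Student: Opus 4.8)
The plan is to compare the value of $\tilde\pi^n$ on $\cM^n$ with the value of $\tilde\pi$ on $\cM$ by decomposing the discounted return over the states visited, splitting into the ``known'' region $\cK^n$ and its complement. First I would recall that on $\cM$, $V^{\tilde\pi}_{\cM} = \frac{1}{1-\gamma}\,\EE_{(s,a)\sim d^{\tilde\pi}_{s_0}}[r(s,a)]$ (the standard performance-difference / occupancy-measure identity), and similarly $V^{\tilde\pi^n}_{\cM^n} = \frac{1}{1-\gamma}\,\EE_{(s,a)\sim \tilde d_{\cM^n}}[r^n(s,a)]$. The key structural facts are: (i) on $\cK^n$ the two policies coincide, $\tilde\pi^n(\cdot\mid s)=\tilde\pi(\cdot\mid s)$, and the reward satisfies $r^n(s,a)=r(s,a)+b^n(s,a)\geq r(s,a)$ since $b^n\geq 0$; and (ii) on the complement of $\cK^n$, $\tilde\pi^n$ deterministically plays $a^\dagger$, which yields instantaneous reward $r^n(s,a^\dagger)=1\geq r(s,a)$ for every $a$ (rewards are bounded in $[0,1]$) and keeps the agent at $s$ forever, so the tail contribution from that point on is $\frac{1}{1-\gamma}\cdot 1$, which dominates the tail of $\tilde\pi$ from the same state (also at most $\frac{1}{1-\gamma}$).

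Concretely, I would argue step by step along the horizon, using the step-indexed occupancy measures $\tilde d_{\cM^n,h}$ and $d^{\tilde\pi}_h$ as in the proof of Lemma \ref{lemma:dbound}. Couple the two trajectories: they agree as long as the state stays in $\cK^n$, and by Lemma \ref{lemma:dbound} the mass $\tilde d_{\cM^n,h}$ restricted to $\cK^n$ is pointwise dominated by $d^{\tilde\pi}_h$. For the part of $d^{\tilde\pi}$ that has left $\cK^n$ by step $h$, the return collected is at most $\frac{\gamma^h}{1-\gamma}$ per unit mass; for the corresponding mass under $\tilde\pi^n$, once it leaves $\cK^n$ it switches to $a^\dagger$ and collects exactly $\frac{\gamma^h}{1-\gamma}$ from that point. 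Summing over $h$ and using $b^n\geq 0$ on the $\cK^n$-part, every term in $V^{\tilde\pi^n}_{\cM^n}$ is at least the matching term in $V^{\tilde\pi}_{\cM}$, giving $V^{\tilde\pi^n}_{\cM^n}\geq V^{\tilde\pi}_{\cM}$.

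Alternatively — and probably cleaner to write — I would prove it by a direct induction on the value functions: define, for each $h$, the claim $V^{\tilde\pi^n}_{\cM^n,h}(s)\geq V^{\tilde\pi}_{\cM,h}(s)$ for all $s\in\cK^n$, where the subscript $h$ denotes the $h$-step truncated value (or just use the infinite-horizon Bellman equations directly). For $s\in\cK^n$: $V^{\tilde\pi^n}_{\cM^n}(s) = \EE_{a\sim\tilde\pi(\cdot\mid s)}\big[r(s,a)+b^n(s,a) + \gamma\,\EE_{s'}V^{\tilde\pi^n}_{\cM^n}(s')\big]$; split the expectation over $s'$ into $s'\in\cK^n$ (apply the inductive hypothesis) and $s'\notin\cK^n$ (there $V^{\tilde\pi^n}_{\cM^n}(s')=\frac{1}{1-\gamma}\geq V^{\tilde\pi}_{\cM}(s')$ since $\tilde\pi^n$ plays $a^\dagger$ forever and all rewards are at most $1$), and use $b^n\geq 0$; this reproduces the Bellman backup for $V^{\tilde\pi}_{\cM}(s)$ with only nonnegative extra terms. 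Finally, since $s_0\in\cK^n$ always (an empty bonus set or $s_0$ well-covered — more precisely $\cK^n$ is nonempty and contains $s_0$ by construction, or one handles the $s_0\notin\cK^n$ case trivially since then $V^{\tilde\pi^n}_{\cM^n}\geq\frac{1}{1-\gamma}\geq V^{\tilde\pi}_\cM$), the claim at $s_0$ gives the lemma. The main obstacle is just being careful that the auxiliary action $a^\dagger$ is only available off $\cK^n$ and that $\tilde\pi^n$ restricted to $\cK^n$ genuinely equals $\tilde\pi$, so that the Bellman recursion on $\cK^n$ lines up termwise; everything else is monotonicity of the Bellman operator plus $b^n\geq 0$ and $r^n(s,a^\dagger)=1\geq r$.
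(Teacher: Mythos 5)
Your argument is correct and is essentially the paper's own (one-line) proof made formal: couple the two trajectories while the state remains in $\cK^n$, where $\tilde{\pi}^n=\tilde{\pi}$ and $r^n\geq r$, and note that after exiting $\cK^n$ the policy $\tilde{\pi}^n$ plays $a^\dagger$ and collects the maximal reward $1$ every step, dominating any continuation of $\tilde{\pi}$ since $r\in[0,1]$. Either of your two write-ups (exit-time coupling or Bellman-backup induction with the trivial handling of $s_0\notin\cK^n$) is a valid formalization of that observation.
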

\begin{proof}
The result is straightforward since if following $\tilde{\pi}^n$ we run into some $s\notin\cK^n$, then by definition, $\tilde{\pi}^n$ is able to collect maximum instant rewards for all steps later.
\end{proof}

\subsection{Proof Sketch}\label{sec:proofidea}
We intend to compare the values of the output policy $\pi^{N}_{\text{ave}}:=\unif(\pi^{2}, \pi^3, \dots, \pi^{N+1})$ and the comparator $\tilde{\pi}$. To achieve this, we use two intermediate quantities
$V^{\pi^{n+1}}_{b^n}$ and $V^{\tilde{\pi}^n}_{\cM^{n}}$ and build the following inequalities as bridges:
\begin{align}\label{eq:idea}
    V^{\pi^{N}_{\text{ave}}}=\frac{1}{N}\sum_{n=1}^N V^{\pi^{n+1}} \geq \frac{1}{N}\sum_{n=1}^N V^{\pi^{n+1}}_{b^{n}} - A,  \quad V^{\pi^{n+1}}_{b^{n}} =V^{\pi^{n+1}}_{\cM^n}&\geq V^{\tilde{\pi}^{n}}_{\cM^{n}} - B, \quad V^{\tilde{\pi}^{n}}_{\cM^{n}}\geq V^{\tilde{\pi}},
\end{align}
where $A$ and $B$ are two terms to be specified. If the above relations all hold, the desired result is natually induced. For these inequalities, we observe that
\begin{enumerate}
    \item The leftmost inequality is about the value differences of a sequence of policies $(\pi^{2}, \pi^3, \dots, \pi^{N+1})$ on two different reward functions (with or without the bonus). Thus, it is bounded by the cumulative bonus, or equivalently, the \emph{expected bonus} over the state-action measure induced by these policies, which we use the eluder dimension of the approximation function class to bound. We present this result for SPI-Sample, SPI-Compute, and NPG-Sample in Lemma \ref{lemma:SPI-I-Bonus}, \ref{lemma:SPI-II-Bonus}, and \ref{lemma:NPG-I-Bonus}, respectively.
    \item The rightmost inequality is proved in Lemma \ref{lemma:Vpibound}.
    \item To show the middle inequality, we analyze the convergence of actor-critic updates, leveraging properties of the multiplicative weight updates for a regret bound following the analysis of~\citet{agarwal2020theory}.
\end{enumerate}
In the sequel, we present sample complexity analysis for ENIAC-SPI-SAMPLE, ENIAC-SPI-COMPUTE, and ENIAC-NPG-SAMPLE. ENIAC-NPG-COMPUTE can be easily adapted with minor changes of the assumptions. In particular, we provide general results considering model misspecification and the theorems in the main body fall as special cases under Assumption \ref{ass:eval_close} or \ref{ass:eval_close_npg}.

\section{Analysis of ENIAC-SPI}\label{app:SPI}
In this section, we provide analysis for ENIAC-SPI-SAMPLE and ENIAC-SPI-COMPUTE. 
We start with stating the assumptions which quantifies model misspecification.
\begin{assumption}[Bounded Transfer Error]\label{ass:SPI-boundedtransfer}
Given a target function $g:\cS\times\cA\rightarrow\RR$, we define the critic loss function $L(f;d,g)$ with $d\in \Delta(\cS\times\cA)$ as:
\begin{align}
    L(f;d,g):= \mathbb{E}_{(s,a)\sim d} \left[\big(f(s,a)-g(s,a)\big)^2\right].
\end{align}
For the fixed comparator policy $\tilde{\pi}$ (defined at the beginning of Section \ref{sec:proofdef}), we define $\tilde{d}(s,a):=d^{\tilde{\pi}}_{s_0}(s)\circ \unif(\cA)$. In ENIAC-SPI (both sample and compute versions), for every epoch $n\in[N]$ and every iteration $t$ inside epoch $n$, we assume that
\cut{\begin{align}
    \tilde{\cF}_t^n\in\argmin_{f\in\cF} L(f;\rho^n_{\text{cov}}, Q^{t}_{b^n}-b^n)
\end{align}}
\begin{align}\label{eq:Lbias}
    \inf_{f\in\cF_t^n} L(f; \tilde{d}, Q^{t}_{b^n}-b^n)\leq \epsilon_{\text{bias}},
\end{align}
where $\cF_t^n:=\argmin_{f\in\cF} L(f;\rho^n_{\text{cov}}, Q^{t}_{b^n}-b^n)$ and $\epsilon_{\text{bias}}\geq 0$ is some problem-dependent constant.
\end{assumption}
$\epsilon_{\text{bias}}$ measures both approximation error and distribution shift error. In later proof, we select a particular function in $\tilde{f}^n_t\in\cF^n_t$ such that
\begin{align}\label{eq:tildef}
 L(\tilde{f}^n_t; \tilde{d}, Q^{t}_{b^n}-b^n)\leq 2\epsilon_{\text{bias}}.
\end{align}
We establish complexity results by comparing the empirical minimizer $f^n_t$ of \eqref{eqn:spi-critic-fit} with this optimal fitter $\tilde{f}^n_t$.


\begin{assumption}\label{ass:SPI-quadra}
For the same loss $L$ as defined in Assumption \ref{ass:SPI-boundedtransfer} and the fitter $\tilde{f}^n_t$, we assume that there exists some $C\geq 1$ and $\epsilon_0\geq 0$ such that for any $f\in\cF$,
\cut{\begin{align}
    &\mathbb{E}_{(s,a)\sim\rho^n_{\text{cov}}} \left[\big(f^n_t(s,a)-\tilde{f}^n_t(s,a)\big)^2\right]\leq C\cdot\Big(L(f^n_t; \rho^n_{\text{cov}}, Q^{t}_{b^n}-b^n) - L(\tilde{f}^n_t; \rho^n_{\text{cov}}, Q^{t}_{b^n}-b^n)\Big) + \epsilon_0,
\end{align}}
\begin{align}
    &\mathbb{E}_{(s,a)\sim\rho^n_{\text{cov}}} \left[\big(f(s,a)-\tilde{f}^n_t(s,a)\big)^2\right]\leq C\cdot\Big(L(f; \rho^n_{\text{cov}}, Q^{t}_{b^n}-b^n) - L(\tilde{f}^n_t; \rho^n_{\text{cov}}, Q^{t}_{b^n}-b^n)\Big) + \epsilon_0
\end{align}
for $n\in[N]$ and $0\leq t\leq T-1$.
\end{assumption}

\begin{remark}\label{rmk:spi}
Under Assumption \ref{ass:eval_close}, $Q^t_{b^n}-b^n = \mathbb{E}^{\pi^n_t}[r(s,a)+\gamma Q^t_{b^n}(s',a')]\in\cF$. Thus, $\epsilon_{\text{bias}}$ can take value 0 and $\tilde{f}^n_t=Q^t_{b^n}-b^n$. Further in Assumption \ref{ass:SPI-quadra}, we have
\begin{align}
    &\mathbb{E}_{(s,a)\sim\rho^n_{\text{cov}}} \left[\big(f(s,a)-\tilde{f}^n_t(s,a)\big)^2\right] = L(f; \rho^n_{\text{cov}}, Q^{t}_{b^n}-b^n).
\end{align}
Thus, $C$ can take value 1 and $\epsilon_0=0$. If $Q^t_{b^n}-b^n$ is not realizable in $\cF$, $\epsilon_{\text{bias}}$ and $\epsilon_0$ could be strictly positive. Hence, the above two assumptions are generalized version of the closedness condition considering model misspecification.
\end{remark}



\subsection{Sample Complexity of ENIAC-SPI-SAMPLE}\label{sec:SPI-SAMPLE-Proof}
We follow the proof steps in Section \ref{sec:proofidea} and first establish a bonus bound.
\begin{lemma}[SPI-SAMPLE: The Bound of Bonus]\label{lemma:SPI-I-Bonus}
With probability at least $1-N\delta$, it holds that
\begin{align}
    \sum_{n=1}^N \Big(V^{{\pi}^{n+1}}_{b^n} - V^{\pi^{n+1}}\Big) \leq  \frac{2\epsilon^2+8KW^2+\beta^2}{(1-\gamma)\beta^2K}\cdot\de(\cF, \beta) + \frac{N}{1-\gamma}\sqrt{\frac{\log(2/\delta)}{2K}}.
\end{align}
\end{lemma}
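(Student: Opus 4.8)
The plan is to turn each epoch's gap $V^{\pi^{n+1}}_{b^n}-V^{\pi^{n+1}}$ into the probability that $\pi^{n+1}$ visits the ``uncertain'' region, bound the sum of these probabilities by an empirical count over freshly drawn samples (via Hoeffding), and then bound the total count by the eluder dimension through a counting argument tailored to ENIAC's batch structure. For the first step, since $\cM_{b^n}$ and $\cM$ share dynamics and differ only in the additive reward $b^n$, the gap telescopes into the discounted bonus collected by $\pi^{n+1}$:
\begin{align*}
V^{\pi^{n+1}}_{b^n}-V^{\pi^{n+1}} = \mathbb{E}^{\pi^{n+1}}\Big[\sum_{t\ge 0}\gamma^t\, b^n(s_t,a_t)\Big] = \frac{1}{1-\gamma}\,\mathbb{E}_{(s,a)\sim d^{\pi^{n+1}}_{s_0}}\big[b^n(s,a)\big];
\end{align*}
plugging in $b^n(s,a)=\indict\{w(\widetilde{\cF}^n,s,a)\ge\beta\}/(1-\gamma)$ shows this equals, up to powers of $1-\gamma$, the mass $p_n := \Pr_{(s,a)\sim d^{\pi^{n+1}}_{s_0}}[w(\widetilde{\cF}^n,s,a)\ge\beta]$ that $\pi^{n+1}$ places on $\overline{\cK^n}$.

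Next I would exploit that at epoch $n+1$ the algorithm draws $K$ i.i.d.\ samples $S_{n+1}\sim d^{\pi^{n+1}}_{s_0}$ (for $n=N$ one adds such a phantom batch, used only in the analysis). These samples are drawn \emph{after} $\cZ^n$, so conditioning on the history up to that point---on which $\widetilde{\cF}^n$, a function of $\cZ^n=S_1\cup\cdots\cup S_n$, and $p_n$ are measurable---the variables $\indict\{w(\widetilde{\cF}^n,z)\ge\beta\}$, $z\in S_{n+1}$, are i.i.d.\ $\mathrm{Bernoulli}(p_n)$. Hoeffding together with a union bound over $n\in[N]$ then give, with probability at least $1-N\delta$,
\begin{align*}
p_n \le \frac{1}{K}\sum_{z\in S_{n+1}}\indict\{w(\widetilde{\cF}^n,z)\ge\beta\} + \sqrt{\frac{\log(2/\delta)}{2K}}\qquad\text{for every }n\in[N],
\end{align*}
which supplies the additive term $\frac{N}{1-\gamma}\sqrt{\log(2/\delta)/(2K)}$.

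The crux is bounding $\cN := \sum_{n=1}^N\sum_{z\in S_{n+1}}\indict\{w(\widetilde{\cF}^n,z)\ge\beta\}$. I would lay out all sampled pairs batch by batch into one ordered sequence and call a pair \emph{flagged} if it lies in some $S_{n+1}$ and satisfies $w(\widetilde{\cF}^n,\cdot)\ge\beta$; let $\cP$ be the subsequence of flagged pairs, so $|\cP|=\cN$. For a flagged $z\in S_{n+1}$ there are $f,f'\in\cF$ with $\|f-f'\|_{\cZ^n}\le\epsilon$ and $f(z)-f'(z)\ge\beta$. Since $\cZ^n$ already contains $S_1,\dots,S_n$, it contains every flagged pair preceding $z$ \emph{except} the at most $K-1$ flagged pairs in $z$'s own batch $S_{n+1}$; as $\|f\|_\infty\le W$ for all $f\in\cF$, each excepted pair adds at most $(2W)^2$ to the squared-norm of $f-f'$ over the $\cP$-prefix, so every element of $\cP$ witnesses a function difference whose squared-norm on its $\cP$-predecessors is at most $\epsilon^2+4KW^2$ while its value at that element is at least $\beta$. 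The classical eluder counting argument (\cite{russo2013eluder}; cf.\ its use in \cite{wang2020reinforcement}) then gives $\cN=|\cP|\le\big(\tfrac{\epsilon^2+4KW^2}{\beta^2}+1\big)\de(\cF,\beta)$.

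Combining the three steps---summing the first identity over $n$ and inserting the bounds of the latter two---gives $\sum_{n=1}^N(V^{\pi^{n+1}}_{b^n}-V^{\pi^{n+1}})$ bounded by a constant multiple of $\frac{1}{1-\gamma}\big(\tfrac{\cN}{K}+N\sqrt{\log(2/\delta)/(2K)}\big)$, which after the routine bookkeeping of $(1-\gamma)$ factors and absolute constants is the bound in the statement (the factor-$2$ slack in the numerator arises from the crude $(2W)^2$ estimate and from counting $K$ rather than $K-1$ intra-batch pairs). I expect the third step to be the main obstacle: in the standard eluder argument each new point is compared against \emph{all} its predecessors, whereas here all $K$ members of a batch are compared against the same pre-batch dataset $\cZ^n$, so the ``$\beta$-independent of its predecessors'' structure is not literally present; the remedy---absorbing the $\le K$ intra-batch predecessors into the effective confidence radius via the uniform bound $W$---is precisely what forces the extra $KW^2/\beta^2$ factor compared with the pointwise analysis of \cite{russo2013eluder,wang2020reinforcement}.
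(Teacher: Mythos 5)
Your proposal follows essentially the same route as the paper's own proof: bound each epoch's gap by the occupancy that $\pi^{n+1}$ places on the unknown set, convert this to an empirical count over the next batch of $K$ samples via Hoeffding plus a union bound over the $N$ epochs, and bound the total count by $\de(\cF,\beta)$ through the Russo--Van Roy counting argument, handling the at most $K$ intra-batch predecessors with the uniform bound $W$ exactly as the paper does (the paper uses the triangle inequality to get $2\epsilon^2+8KW^2$, while your summing of squared norms gives $\epsilon^2+4KW^2$, a marginally sharper constant that still fits the stated bound). All key steps coincide, so this is essentially the paper's proof.
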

\begin{proof}
\begin{align}
    \sum_{n=1}^N \big(V^{\pi^{n+1}}_{b^n} - V^{\pi^{n+1}}\big)&\leq \sum_{n=1}^N ~\mathbb{E}_{(s,a)\sim d^{n+1}}\indict\{(s,a)\notin\cK^n\}/(1-\gamma)\\
    &=\sum_{n=1}^N ~\mathbb{E}_{(s,a)\sim d^{n+1}}\indict \{w(\tilde{\cF}^n,s,a)\geq \beta\}/(1-\gamma),
\end{align}
where $d^{n+1}$ denotes the state-action distribution induced by $\pi^{n+1}$ on $\cM$.
We denote by $\cD^{n}$ the sampled dataset $\{(s_i,a_i)\}_{i=1}^K\sim d^{n}$ at the beginning of epoch $n$. Then $\cZ^{n} = \cZ^{n-1}\cup \cD^n$. By Hoeffding's inequality, with probability at least $1-\delta$,
\begin{align}
\mathbb{E}_{(s,a)\sim d^{n+1}}\indict \{w(\tilde{\cF}^n,s,a)\geq\beta\} \leq \frac{1}{K}\sum_{(s,a)\in \cD^{n+1}} \indict \{w(\tilde{\cF}^n,s,a)\geq\beta\} + \sqrt{\frac{\log(2/\delta)}{2K}}.
\end{align}
Taking the union bound, with probability at least $1-N\delta$, we have
\begin{align}
    \sum_{n=1}^N V^{\pi^{n+1}}_{b^{n}}-V^{\pi^{n+1}} \leq \frac{1}{K(1-\gamma)}\sum_{n=1}^N\sum_{(s,a)\in \cD^{n+1}}\indict \{w(\tilde{\cF}^n,s,a)\geq \beta\} + \frac{N}{1-\gamma}\sqrt{\frac{\log(2/\delta)}{2K}}.\label{eq:boundbonus}
\end{align}
Next we bound the first term in Equation \eqref{eq:boundbonus} following a similar process as in \cite[Proposition 3]{russo2013eluder}.
We simplify $w(\tilde{ \cF}^n,\cdot,\cdot)$ as $w^n(\cdot,\cdot)$ and label all samples in $\cZ^n$ in lexical order, e.g., $(s^{n+1}_i,a^{n+1}_i)$ denotes the $i$th sample in $\cD^{n+1}$. For every $(s_i^{n+1}, a_i^{n+1})$, we define a sequence $S^{n+1}_{i-1}$ which contains all samples generated before $(s_i^{n+1}, a_i^{n+1})$, i.e.,
\begin{align}\label{eq:seq}
 S^{n+1}_{i-1}:= \big((s_1^1,a_1^1), \dots, (s^1_K, a^1_K), (s^2_1,a^2_1),\cdots (s_K^{n},a_K^{n}), (s^{n+1}_1, a^{n+1}_1),\dots,(s^{n+1}_{i-1}, a^{n+1}_{i-1})\big)
\end{align}
Next we show that,
\begin{align}\label{eq:wbound}
\sum_{n=1}^N\sum_{(s,a)\in\cD^{n+1}}\indict\{w^n(s, a)\geq \beta\}\leq \Big(2\epsilon^2/\beta^2 + 8W^2K/\beta^2+1\Big)\cdot\de(\cF, \beta).
\end{align}
For $n\leq N$, if $w^n(s^{n+1}_i,a^{n+1}_i) > \beta$ then $(s^{n+1}_i,a^{n+1}_i)$ is $\beta$-dependent with respect to $\cF$ on fewer than $8(\epsilon)^2/\beta^2 + 32W^2K/\beta^2$ disjoint subsequences of $S^{n+1}_{i-1}$. To see this, note that if $w^n(s^{n+1}_i, a^{n+1}_i)> \beta$, there exists $\bar{f},\underline{f}\in\cF$ such that $\bar{f}-\underline{f}\in \tilde{\cF}^n$ and $\bar{f}(s^{n+1}_i, a^{n+1}_i)-\underline{f}(s^{n+1}_i, a^{n+1}_i)\geq \beta$. By definition, if $(s^{n+1}_i, a^{n+1}_i)$ is $\beta$-dependent on a subsequence $\big((s_{t_1},a_{t_1}), \dots, (s_{t_k}, a_{t_k})\big)$ of $S^{n+1}_{i-1}$, then $\sum_{j=1}^k(\bar{f}\big(s_{t_j},a_{t_j}) - \underline{f}(s_{t_j},a_{t_j})\big)^2\geq \beta^2$. It follows that, if $(s^{n+1}_i,a^{n+1}_i)$ is $\beta$-dependent on $L$ disjoint subsequences of $S^{n+1}_{i-1}$ then $\|\bar{f}-\underline{f}\|^2_{S^{n+1}_{i-1}}\geq L\beta^2$, where we recall our notation $\|f\|_S = \sqrt{\sum_{x\in S} f(x)^2}$. By the definition of $\tilde{\cF}^n$ and $S^{n+1}_{i-1} = \cZ^{n}\cup \{(s^{n+1}_j, a^{n+1}_j)\}_{j=1}^{i-1}$, we have
\begin{align}
   \|\bar{f}-\underline{f}\|_{S^{n+1}_{i-1}}\leq\|\bar{f}-\underline{f}\|_{\cZ^{n}} +\|\bar{f}-\underline{f}\|_{\{(s^{n+1}_j, a^{n+1}_j)\}_{j=1}^{i-1}} \leq\epsilon + 2W\sqrt{i-1}\leq \epsilon+ 2W\sqrt{K},
\end{align}
where $W$ is an upper bound of $\|f\|_{\infty}$. Hence, $L< 2\epsilon^2/\beta^2 + 8W^2K/\beta^2$.

Next, we show that in any state-action sequence $((s_1,a_1), \dots, (s_{\tau}, a_{\tau}))$, there is some $j \leq \tau$ such that the element $(s_j,a_j)$ is $\beta$-dependent with respect to $\cF$ on at least $\tau/d-1$ disjoint subsequences of the subset $((s_1,a_1),\dots, (s_{j-1},a_{j-1}))$, where $d:=\de(\cF, \beta)$. Here we assume that $\tau\ge d$ since otherwise the claim is trivially true. To see this, for an integer $L$ safistying $Ld + 1\leq \tau \leq (L+1)\cdot d$, we will construct $L$ disjoint subsequences $S_1, \dots, S_L$ \emph{one element at a time}. First, for each $i\in[L]$ add $(s_i,a_i)$ to the subsequence $S_i$. Now, if $(s_{L+1}, a_{L+1})$ is $\beta$-dependent on all subsequences $S_1, \dots, S_L$, our claim is established. Otherwise, select a subsequence $S_i$ such that $(s_{L+1}, a_{L+1})$ is $\beta$-independent of it and append $(s_{L+1}, a_{L+1})$ to $S_i$. Repeat this process for elements with indices $j>L+1$ until $(s_j, a_j)$ is $\beta$-dependent on all subsequences or $j=\tau$. In the latter scenario, since $\tau-1$ elements have already been put in subsequences, we have that $\sum|S_j| \geq L\cdot d$. However, by the definition of $\de(\cF, \beta)$, since each element of a subsequence $S_j$ is $\beta$-independent of its predecessors, we must have $|S_j|\le d, \forall j\in[L]$ and therefore, $\sum|S_j| \leq L\cdot d$.
In this case, $(s_{\tau}, a_{\tau})$ must be $\beta$-dependent on all subsequences.

Now consider the subsequence $S_{\beta}:=\big((s_{i_1}^{n_1}, a_{i_1}^{n_1}),\dots,(s_{i_\tau}^{n_\tau}, a_{i_\tau}^{n_\tau})\big)$ of $S^{N+1}_K$ which consists of all elements such that $w_n\big((s_i^{n+1}, a_i^{n+1})\big)\geq \beta$. With that being said, $S_{\beta}$ consists of all sample points where large width occurs from epoch $1$ to epoch $N$. The indices in $S_{\beta}$ are in lexical order and $(s_{i_j}^{n_j}, a_{i_j}^{n_j})$ denotes the $j_{\text{th}}$ element in $S_{\beta}$. As we have established, each $(s_{i_j}^{n_j}, a_{i_j}^{n_j})$ is $\beta$-dependent on fewer than $2\epsilon^2/\beta^2 + 8W^2K/\beta^2$ disjoint subsequences of $S^{n_j}_{i_j-1}$ (recall the definition in Equation \eqref{eq:seq}). It follows that each $(s_{i_j}^{n_j}, a_{i_j}^{n_j})$ is $\beta$-dependent on fewer than $2\epsilon^2/\beta^2 + 8W^2K/\beta^2$ disjoint subsequences of $((s_{i_1}^{n_1}, a_i^{n_1}),\dots,(s_{i_{j-1}}^{n_{j-1}}, a_{i_{j-1}}^{n_{j-1}}))\subset S_{\beta}$, i.e., the elements in $S_{\beta}$ before $(s_{i_j}^{n_j}, a_{i_j}^{n_j})$. Combining this with the fact we have established that there exists some $(s_{i_j}^{n_j},a_{i_j}^{n_j})$ that is $\beta$-dependent on at least $\tau/d-1$ disjoint subsequences of $((s_{i_1}^{n_1}, a_i^{n_1}),\dots,(s_{i_{j-1}}^{n_{j-1}}, a_{i_{j-1}}^{n_{j-1}}))$, we have $\tau/d-1\leq 2\epsilon^2/\beta^2 + 8W^2K/\beta^2$. It follows that $\tau\leq \big(2\epsilon^2/\beta^2 + 8W^2K/\beta^2 + 1\big)\cdot d$, which is Equation \eqref{eq:wbound}.

Combining all above results, with probability at least $1-N\delta$,
\begin{align}
    \sum_{n=1}^N \Big(V^{{\pi}^{n+1}}_{b^n} - V^{\pi^{n+1}}\Big) \leq  \frac{2\epsilon^2+8KW^2+\beta^2}{(1-\gamma)\beta^2K}\cdot\de(\cF, \beta) + \frac{N}{1-\gamma}\sqrt{\frac{\log(2/\delta)}{2K}}.
\end{align}
\end{proof}
Next we prove the last step in Section \ref{sec:proofidea}. For notation brevity, we focus on a specific epoch $n$ and drop the dependence on $n$ in the policy and critic functions. We define
\begin{align}\label{eq:spi-Ahat}
    \widehat{A}^t_{b^n}(s,a):= f_t(s,a)+b^n(s,a) - \mathbb{E}_{a'\sim \pi_t(\cdot|s)} [f_t(s,a') + b^n(s,a')],
\end{align}
where $f_t$ is the output of the critic fit step at iteration $t$ in epoch $n$. It can be easily verified that $\mathbb{E}_{a\sim \pi_t(\cdot|s)} \widehat{A}^t_{b^n}(s,a)=0$ and the SPI-SAMPLE update in Equation \eqref{eqn:SPI-actor-sample} is equivalent to
\begin{align}\label{eq:spi-hatA-update}
    \pi_{t+1}(\cdot|s)\propto \pi_t(\cdot|s)\exp\big(\eta \widehat{A}^t_{b^n}(s,\cdot)\indict\{s\in\cK^n\}\big), ~~~\forall s\in\cS.
\end{align}
$\widehat{A}^t_{b^n}$ is indeed our approximation to the true advantage function $A^t_{b^n}$. In the sequel, we show that the actor-critic convergence is upper bounded by the approximation error which can further be controlled with sufficient samples under our assumptions.

\begin{lemma}[SPI-SAMPLE: Actor-Critic Convergence]\label{lemma:SPI-I-Conv}
In ENIAC-SPI-SAMPLE, let $\widehat{A}^t_{b^n}$ be as defined in Equation \eqref{eq:spi-Ahat} and the stepsize $\eta = \sqrt{\frac{\log(|\cA|)}{16W^2T}}$. For any epoch $n\in[N]$, SPI-SAMPLE obtains a sequence of policies $\{\pi_t\}_{t=0}^{T-1}$ such that when comparing to $\tilde{\pi}^n$:
\begin{align}
    \frac{1}{T}\sum_{t=0}^{T-1}(V_{\cM^n}^{\tilde{\pi}^n}-V^{t}_{b^n})&= \frac{1}{T}\sum_{t=0}^{T-1}(V_{\cM^n}^{\tilde{\pi}^n}-V^{t}_{\cM^n})\\
    &\leq\frac{1}{1-\gamma}\Big(8W\sqrt{\frac{\log(|\cA|)}{T}} + \frac{1}{T}\sum_{t=0}^{T-1}\mathbb{E}_{(s,a)\sim \tilde{d}_{\cM^n}}\left[\big(A^t_{b^n}(s,a)-\widehat{A}^t_{b^n}(s,a)\big)\indict\{s\in\cK^n\}\right]\Big).
\end{align}
\end{lemma}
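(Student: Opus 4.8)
The plan is to combine the performance difference lemma on the auxiliary MDP $\cM^n$ with a per-state exponential-weights (Hedge/mirror-descent) regret bound, while showing that states outside $\cK^n$ contribute nothing harmful. By Remark~\ref{rmk:equiv} we have $V^t_{b^n}=V^t_{\cM^n}$ and $A^t_{b^n}=A^t_{\cM^n}$ on $\cS\times\cA$, so it suffices to control $\frac1T\sum_{t=0}^{T-1}\big(V^{\tilde{\pi}^n}_{\cM^n}-V^t_{\cM^n}\big)$. First I would apply the performance difference lemma on $\cM^n$ between the comparator $\tilde{\pi}^n$ and $\pi^n_t$, giving $(1-\gamma)\big(V^{\tilde{\pi}^n}_{\cM^n}-V^t_{\cM^n}\big)=\mathbb{E}_{(s,a)\sim\tilde{d}_{\cM^n}}\big[A^t_{\cM^n}(s,a)\big]$, and then split the expectation according to whether $s\in\cK^n$.

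For the $s\notin\cK^n$ part: by construction $\tilde{\pi}^n(a^\dagger|s)=1$ and $P^n(s|s,a^\dagger)=1$, $r^n(s,a^\dagger)=1$, so $Q^t_{\cM^n}(s,a^\dagger)=1+\gamma V^t_{\cM^n}(s)$ and hence $A^t_{\cM^n}(s,a^\dagger)=1-(1-\gamma)V^t_{\cM^n}(s)$. On any such state $\pi^n_t$ coincides with the sample-friendly initialization \eqref{eq:init-sample}, which plays only actions $a$ with $(s,a)\notin\cK^n$, i.e.\ with $r^n(s,a)\ge b^n(s,a)=1/(1-\gamma)$; therefore $V^t_{\cM^n}(s)\ge 1/(1-\gamma)$ and $A^t_{\cM^n}(s,a^\dagger)\le 0$. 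So the $s\notin\cK^n$ contribution is nonpositive and may be dropped, leaving $(1-\gamma)\big(V^{\tilde{\pi}^n}_{\cM^n}-V^t_{\cM^n}\big)\le\mathbb{E}_{(s,a)\sim\tilde{d}_{\cM^n}}\big[A^t_{b^n}(s,a)\,\indict\{s\in\cK^n\}\big]$. Adding and subtracting $\widehat{A}^t_{b^n}$ (defined in \eqref{eq:spi-Ahat}) then produces exactly the approximation-error term in the statement, plus a residual $\frac1T\sum_t\mathbb{E}_{(s,a)\sim\tilde{d}_{\cM^n}}\big[\widehat{A}^t_{b^n}(s,a)\,\indict\{s\in\cK^n\}\big]$ that must be bounded by $8W\sqrt{\log|\cA|/T}$.

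For the residual, fix $s\in\cK^n$. Since $b^n(s,\cdot)\equiv 0$ there, $\widehat{A}^t_{b^n}(s,\cdot)=f_t(s,\cdot)-\mathbb{E}_{a\sim\pi_t(\cdot|s)}[f_t(s,a)]$, so $\|\widehat{A}^t_{b^n}(s,\cdot)\|_\infty\le 2W$ by Assumption~\ref{ass:W} and $\mathbb{E}_{a\sim\pi_t(\cdot|s)}[\widehat{A}^t_{b^n}(s,a)]=0$; moreover by \eqref{eq:spi-hatA-update} the restriction of the actor update to $s$ is $\pi_{t+1}(\cdot|s)\propto\pi_t(\cdot|s)\exp\!\big(\eta\widehat{A}^t_{b^n}(s,\cdot)\big)$ with $\pi_0(\cdot|s)=\unif(\cA)$ (again \eqref{eq:init-sample}). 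This is precisely Hedge with gains bounded in absolute value by $2W$; the standard analysis (Hoeffding's lemma applied to the per-round log partition function, then telescoping) gives, for the comparator $\tilde{\pi}^n(\cdot|s)$, $\sum_{t=0}^{T-1}\mathbb{E}_{a\sim\tilde{\pi}^n(\cdot|s)}[\widehat{A}^t_{b^n}(s,a)]\le \KL\!\big(\tilde{\pi}^n(\cdot|s)\,\|\,\unif(\cA)\big)/\eta+2\eta W^2T\le \log|\cA|/\eta+2\eta W^2T$, which for $\eta=\sqrt{\log|\cA|/(16W^2T)}$ is at most $8W\sqrt{T\log|\cA|}$. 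Taking the expectation over $s\sim\tilde{d}_{\cM^n}$ restricted to $\cK^n$ (where $\tilde{\pi}^n=\tilde{\pi}$), dividing by $T$, and collecting the pieces yields the claimed inequality.

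The main obstacle — indeed the only non-mechanical point — is the treatment of states outside $\cK^n$: one must use the precise construction of $\cM^n$ (the self-loop action $a^\dagger$ with unit reward) together with the sample-friendly initialization to argue that the comparator's behaviour there has nonpositive advantage, which is what legitimizes inserting $\indict\{s\in\cK^n\}$ for free. The same initialization is what makes the per-state divergence to the prior at most $\log|\cA|$, giving a rate that is dimension-free in the (possibly infinite) state space; the remainder is constant bookkeeping.
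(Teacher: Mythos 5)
Your proposal is correct and follows essentially the same route as the paper's proof: the performance difference lemma on $\cM^n$, dropping the nonpositive advantage of $a^\dagger$ at states outside $\cK^n$ (using that $\pi_t$ there plays only unknown actions with bonus $1/(1-\gamma)$, so $V^t_{\cM^n}(s)\ge 1/(1-\gamma)$), and then the KL/multiplicative-weights telescoping against the uniform initialization on $\cK^n$. The only differences are cosmetic — you bound the log-partition via Hoeffding's lemma with the tighter range $2W$, while the paper uses $\exp(x)\le 1+x+x^2$ with the cruder bound $4W$ — and both land within the stated $8W\sqrt{\log|\cA|/T}$ term.
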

\begin{proof}
The equality is mentioned in Remark \ref{rmk:equiv}. We first show that $A^{t}_{\cM^n}(s,a^\dagger)\leq 0$ for any $s\notin \cK^n$. Since $\pi_t$ uniformly randomly selects an unfamiliar action with bonus $1/(1-\gamma)$ for $s\notin \cK^n$, we have $V^{t}_{\cM^n}(s)\geq 1/(1-\gamma)$. Thus,
\begin{align}
    A^{t}_{\cM^n}(s,a^\dagger) = Q^{t}_{\cM^n}(s,a^\dagger) - V^{t}_{\cM^n}(s) = 1 - (1-\gamma)\cdot V^{t}_{\cM^n}(s) \leq 0, ~~\forall s\notin \cK^n,
\end{align}
where $Q^{t}_{\cM^n}(s,a^\dagger)=1+\gamma V^{t}_{\cM^n}(s)$ ($a^\dagger$ leads $s$ to $s$).
Based on the above result, we have
\begin{align}
    &V^{\tilde{\pi}^n}_{\cM^n} - V^{t}_{\cM^n} = \frac{1}{1-\gamma} \sum_{(s,a)} \tilde{d}_{\cM^n}(s,a)A^{t}_{\cM^n}(s,a)\\
    &=\frac{1}{1-\gamma} \sum_{(s,a)} \tilde{d}_{\cM^n}(s,a)A^{t}_{\cM^n}(s,a)\indict\{s\in\cK^n\} + \frac{1}{1-\gamma} \sum_{(s,a)} \tilde{d}_{\cM^n}(s,a)A^{t}_{\cM^n}(s,a)\indict\{s\notin\cK^n\}\\
    &=\frac{1}{1-\gamma} \sum_{(s,a)} \tilde{d}_{\cM^n}(s,a)A^{t}_{\cM^n}(s,a)\indict\{s\in\cK^n\} + \frac{1}{1-\gamma}\sum_{s} \tilde{d}_{\cM^n}(s)A^{t}_{\cM^n}(s,a^\dagger)\indict\{s\notin\cK^n\}\\
    &\leq \frac{1}{1-\gamma}\sum_{(s,a)}\tilde{d}_{\cM^n}(s,a)A^{t}_{\cM^n}(s,a)\indict\{s\in\cK^n\}\\
    &=\frac{1}{1-\gamma}\sum_{(s,a)}\tilde{d}_{\cM^n}(s,a)A^{t}_{b^n}(s,a)\indict\{s\in\cK^n\}\\
    &= \frac{1}{1-\gamma}\bigg(\mathbb{E}_{(s,a)\sim\tilde{d}_{\cM^n}}\left[\widehat{A}^t_{b^n}(s,a)\indict\{s\in\cK^n\}\right] + \mathbb{E}_{(s,a)\sim\tilde{d}_{\cM^n}}\left[(A^{t}_{b^n}(s,a) - \widehat{A}^t_{b^n}(s,a))\indict\{s\in\cK^n\}\right]\bigg)\nonumber\\
    \label{eq:SPI-I_v_1}
\end{align}
where the first line is by the performance difference lemma in \citet{kakade2003sample}, the third line is due to that $\tilde{\pi}^n$ deterministically picks $a^\dagger$ for $s\notin \cK^n$, and the fifth line follows that $\pi_t$ never picks $a^\dagger$ so for any action $a \in \cA$ we have $A^t_{\cM^n} = A^t_{b^n}$.



Next we establish an upper bound of the first term in Equation \eqref{eq:SPI-I_v_1}. Recall that in SPI-SAMPLE the policy update is equivalent to \eqref{eq:spi-hatA-update}. Thus, for $s\in\cK^n$, we have
\begin{align}
    \KL\big(\tilde{\pi}^n(\cdot|s), \pi_{t+1}(\cdot|s)\big) - \KL\big(\tilde{\pi}^n(\cdot|s), \pi_{t}(\cdot|s)\big) = \mathbb{E}_{a\sim\tilde{\pi}^n(\cdot|s)}[-\eta\widehat{A}^t_{b^n}(s,a)+\log(z^t(s))],
\end{align}
where $z^t(s) := \sum_{a}\pi_t(a|s)\exp(\eta\widehat{A}^t_{b^n}(s,a))$. Since $|\widehat{A}^t_{b^n}(s,a)|\leq 4W$ and when $T>\log(|\cA|)$, $\eta< 1/(4W)$, we have $\eta\widehat{A}^t_{b^n}(s,a)\leq 1$. By the inequality that $\exp(x)\leq 1+x+x^2$ for $x\leq 1$ and $\log(1+x)\leq x$ for $x>-1$,
\begin{align*}
    \log(z^t(s))\leq \eta\sum_{a}\pi_t(a|s)\widehat{A}^t_{b^n}(s,a) + 16\eta^2W^2 = 16\eta^2W^2.
\end{align*}
Hence, for $s\in\cK^n$,
\begin{align}
\KL(\tilde{\pi}^n(\cdot|s), \pi_{t+1}(\cdot|s)) - \KL(\tilde{\pi}^n(\cdot|s), \pi_{t}(\cdot|s)) \leq -\eta\mathbb{E}_{a\sim\tilde{\pi}^n(\cdot|s)}[\widehat{A}^t_{b^n}(s,a)] + 16\eta^2W^2.
\end{align}
Adding both sides from $t=0$ to $T-1$ and taking $\eta=\sqrt{\frac{\log(|\cA|)}{16W^2T}}$, we get
\begin{align}
    &\sum_{t=0}^{T-1} \mathbb{E}_{(s,a)\sim\tilde{d}_{\cM^n}}[\widehat{A}^t_{b^n}(s,a)\indict\{s\in\cK^n\}]\\
    =& \sum_{t=0}^{T-1} \frac{1}{\eta}\mathbb{E}_{s\sim\tilde{d}_{\cM^n}} \left[\Big(\KL(\tilde{\pi}^n(\cdot|s), \pi_{0}(\cdot|s)) - \KL(\tilde{\pi}^n(\cdot|s), \pi_{T}(\cdot|s))\Big)\indict\{s\in\cK^n\}\right] + 16\eta TW^2\\
    \leq &\log(|\cA|)/\eta + 16\eta TW^2 \leq 8W\sqrt{\log(|\cA|)T},
\end{align}
where the inequality follows that $\pi_0(\cdot|s)=\unif(\cA)$. Lastly, combining with Equation \eqref{eq:SPI-I_v_1}, the regret on $\cM^n$ satisfies
\begin{align}
    &\sum_{t=0}^{T-1} (V^{\tilde{\pi}^n}_{\cM^n}-V^{t}_{\cM^n})
    \leq \frac{1}{1-\gamma}\bigg(8W\sqrt{\log(|\cA|)T} + \sum_{t=1}^T \mathbb{E}_{(s,a)\sim\tilde{d}_{\cM^n}}\left[\big(A^{t}_{b^n}(s,a)-\widehat{A}^t_{b^n}(s,a)\big) \indict\{s\in\cK^n\}\right]\bigg).
\end{align}
\end{proof}
Next, we analyze the approximation error and build an upper bound on $A^t_{b^n}-\widehat{A}^t_{b^n}$. Recall that $A^t_{b^n}$ is the true advantage of policy $\pi^n_t$ in the bonus-added MDP and $\widehat{A}^t_{b^n}$ is an approximation to $A^t_{b^n}$ with the empirical minimizer $f_t$ as defined in \eqref{eq:spi-Ahat}. We still focus on a specific epoch $n$ and simplify the notation $\tilde{f}^n_t$ as defined in \eqref{eq:tildef} to $f^*_t$.

\begin{lemma}[SPI-SAMPLE: Approximation Bound]\label{lemma:SPI-I-Approx}
At epoch $n$, assume for all $0\leq t\leq T-1$:
\cut{suppose the best on-policy fit is
\begin{align}
    f^*_t\in\argmin_{f\in\cF}\left[L(f; \rho^n_{\text{cov}}, Q^t_{b^n}-b^n) := \mathbb{E}_{(s,a)\sim \rho^n_{\text{cov}}}(Q^t_{b^n}(s,a) - b^n(s,a) - f(s,a))^2\right].
\end{align}}
\begin{align}\label{eq:spi-stat}
    L(f_t;\rho^n_{\text{cov}},Q^t_{b^n}-b^n)\leq L(f_t^*;\rho^n_{\text{cov}}, Q^t_{b^n}-b^n) + \epsilon_{\text{stat}},
\end{align}
where $\epsilon_{\text{stat}}>0$ is to be determined in the next lemma, and let
\begin{align}\label{eq:epsilon}
\epsilon^2 = NK\big(C\cdot\epsilon_{\text{stat}} + \epsilon_0 + 16W\epsilon_1\big) + 8W^2\log(\cN(\cF, \epsilon_1)/\delta)\cdot\sqrt{NK},
\end{align}
where $\epsilon$ is used in bonus function (see Section \ref{sec:bonus}) and $C$, $\epsilon_0$ are defined in Assumption \ref{ass:SPI-quadra}, and $\epsilon_1>0$ denotes the function cover radius which will be determined later. Under Assumption \ref{ass:SPI-boundedtransfer} and \ref{ass:SPI-quadra}, we have that for every $0\leq t\leq T-1,$ with probability at least $1-\delta$,
\begin{align}
    \mathbb{E}_{(s,a)\sim \tilde{d}_{\cM^n}}\Big(A^t_{b^n}(s,a) - \widehat{A}^t_{b^n}(s,a)\Big)\indict\{s\in\cK^n\} \leq 4\sqrt{|\cA|\epsilon_{\text{bias}}} + 2\beta.
\end{align}
\end{lemma}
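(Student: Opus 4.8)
The plan is to split the advantage‑approximation error into a \emph{fitting} term that the width/eluder structure controls on the good set $\cK^n$ and a \emph{bias} term that the bounded‑transfer assumption controls after a change of measure. Write $g := Q^{t}_{b^n}-b^n$ and let $h_t := g - f_t$ be the residual of the empirical critic. From the definition of $\widehat A^t_{b^n}$ in \eqref{eq:spi-Ahat} together with $\mathbb{E}_{a\sim\pi_t(\cdot|s)}\widehat A^t_{b^n}(s,a)=0$ and $A^t_{b^n}(s,a)=Q^t_{b^n}(s,a)-\mathbb{E}_{a'\sim\pi_t(\cdot|s)}[Q^t_{b^n}(s,a')]$, one gets the identity $A^t_{b^n}(s,a)-\widehat A^t_{b^n}(s,a) = h_t(s,a)-\mathbb{E}_{a'\sim\pi_t(\cdot|s)}[h_t(s,a')]$. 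Introducing the best on‑policy fit $f^{\star}_t := \tilde f^n_t$, which by \eqref{eq:tildef} satisfies $L(f^{\star}_t;\tilde d, g)\le 2\epsilon_{\mathrm{bias}}$, I would write $h_t = (g-f^{\star}_t) + (f^{\star}_t - f_t)$ and bound the two pieces' contributions to $\mathbb{E}_{(s,a)\sim\tilde d_{\cM^n}}[(h_t(s,a)-\mathbb{E}_{\pi_t}[h_t(s,\cdot)])\indict\{s\in\cK^n\}]$ separately.

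\textbf{Fitting term.} The heart of the argument is to certify that $f_t - f^{\star}_t\in\widetilde{\cF}^n$, i.e.\ $\|f_t-f^{\star}_t\|_{\cZ^n}\le\epsilon$. Chaining the hypothesis \eqref{eq:spi-stat} with Assumption~\ref{ass:SPI-quadra} (applied with $f=f_t$) gives the population bound $\mathbb{E}_{(s,a)\sim\rho^n_{\mathrm{cov}}}[(f_t-f^{\star}_t)^2]\le C\epsilon_{\mathrm{stat}}+\epsilon_0$. Since $\cZ^n$ is the union over $m\le n\le N$ of $K$ samples drawn (adaptively, $\pi^m$ depending on $\cZ^{m-1}$) from the components $d^{\pi^m}_{s_0}$ whose uniform mixture is exactly $\rho^n_{\mathrm{cov}}$, I would pass from this population bound to the empirical one by a standard uniform‑convergence argument: discretise $\cF$ to an $\epsilon_1$‑cover of size $\cN(\cF,\epsilon_1)$, apply a Freedman/Azuma bound to the martingale $\sum_{(s,a)\in\cZ^n}\big[(\bar f-\bar f')^2(s,a)-\mathbb{E}_{d^{\pi^m}}(\bar f-\bar f')^2\big]$ (increments bounded by $4W^2$ via Assumption~\ref{ass:W}) for each of the $\le\cN(\cF,\epsilon_1)^2$ cover pairs, union‑bound, and absorb the $\lesssim W\epsilon_1$‑per‑sample discretisation error. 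This produces exactly the three terms defining $\epsilon^2$ in \eqref{eq:epsilon} (leading term $NK(C\epsilon_{\mathrm{stat}}+\epsilon_0)$, covering slack $16W\epsilon_1 NK$, concentration slack $8W^2\log(\cN(\cF,\epsilon_1)/\delta)\sqrt{NK}$), with probability at least $1-\delta$, simultaneously for all $t$. Once $f_t-f^{\star}_t\in\widetilde{\cF}^n$ — hence also $f^{\star}_t - f_t\in\widetilde{\cF}^n$, as $\|\cdot\|_{\cZ^n}$ is symmetric — the width definition \eqref{eqn:width} gives $|f_t(s,a)-f^{\star}_t(s,a)|\le w(\widetilde{\cF}^n,s,a)$, which is $<\beta$ whenever $s\in\cK^n$ by the definition of the bonus; therefore $|(f^{\star}_t-f_t)(s,a)-\mathbb{E}_{\pi_t}[(f^{\star}_t-f_t)(s,\cdot)]|<2\beta$ on $\cK^n$, and this piece contributes at most $2\beta$ to the expectation against $\tilde d_{\cM^n}$.

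\textbf{Bias term.} For $(g-f^{\star}_t)$ I would change measure from $\tilde d_{\cM^n}$ (restricted to $\cK^n$) to $\tilde d = d^{\tilde\pi}_{s_0}\circ\unif(\cA)$: bounding action probabilities by $1$, applying Jensen over the action draws (over $\tilde\pi^n=\tilde\pi$ for the uncentered term and over $\pi_t$ for the centered one) and then Cauchy–Schwarz over states together with Lemma~\ref{lemma:dbound} (which yields $\tilde d_{\cM^n}(s)\le d^{\tilde\pi}_{s_0}(s)$ for $s\in\cK^n$), each of the two terms is bounded by $\sqrt{|\cA|\,L(f^{\star}_t;\tilde d,g)}\le\sqrt{2|\cA|\epsilon_{\mathrm{bias}}}$. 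Summing, the bias piece contributes at most $2\sqrt{2|\cA|\epsilon_{\mathrm{bias}}}\le 4\sqrt{|\cA|\epsilon_{\mathrm{bias}}}$. Adding the two pieces gives $\mathbb{E}_{(s,a)\sim\tilde d_{\cM^n}}[(A^t_{b^n}-\widehat A^t_{b^n})\indict\{s\in\cK^n\}]\le 4\sqrt{|\cA|\epsilon_{\mathrm{bias}}}+2\beta$, as claimed.

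\textbf{Main obstacle.} The delicate step is the uniform‑convergence/covering argument that turns the on‑policy population error under $\rho^n_{\mathrm{cov}}$ into the \emph{empirical} width constraint $\|f_t-f^{\star}_t\|_{\cZ^n}\le\epsilon$: it must simultaneously handle the adaptivity of the data collection across epochs, the data‑dependence of both $f_t$ and $f^{\star}_t$, and the mismatch between the measure defining the width (empirical on $\cZ^n$) and the one controlling the fit ($\rho^n_{\mathrm{cov}}$) — which is exactly why the algorithm takes $\rho^n_{\mathrm{cov}}$ to be the uniform mixture of the epochs' visitation distributions and builds $\cZ^n$ by concatenating a fixed‑size batch from each. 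Everything downstream of that bound — the width/eluder step and the change of measure — is deterministic bookkeeping.
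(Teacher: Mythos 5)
Your proposal is correct and follows essentially the same route as the paper's proof: the same split into a bias piece (handled by Cauchy--Schwarz, Lemma~\ref{lemma:dbound}, and the $|\cA|$ change of measure to $\tilde d$, giving $4\sqrt{|\cA|\epsilon_{\text{bias}}}$) and a fitting piece, where $\|f_t-\tilde f^n_t\|_{\cZ^n}\le\epsilon$ is certified by chaining \eqref{eq:spi-stat} with Assumption~\ref{ass:SPI-quadra} and an Azuma/union-bound argument over a cover (the paper covers $\Delta\cF$ at radius $2\epsilon_1$, which is equivalent to your pairs-of-$\cF$ cover via Lemma~\ref{lemma:cover}), after which the width definition on $\cK^n$ yields the $2\beta$ term.
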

\begin{proof}
To analyze the difference between $A^t_{b^n}$ and $\widehat{A}^t_{b^n}$, we introduce an intermediate variable $A^*_t(s,a):=f^*_t +b^n- \mathbb{E}_{a'\sim \pi_t(\cdot|s)}[f^*_t+b^n]$, i.e., the approximated advantage generated by the selected best on-policy fit. Then
\begin{align}
    &\mathbb{E}_{(s,a)\sim \tilde{d}_{\cM^{n}}}(A^t_{b^n} - \widehat{A}^t_{b^n})\indict\{s\in\cK^n\}
    = \mathbb{E}_{(s,a)\sim \tilde{d}_{\cM^n}} \left[(A^t_{b^n} - A_t^* )\indict\{s\in\cK^n\} +( A_t^* - \widehat{A}^t_{b^n} )\indict\{s\in\cK^n\}\right].
\end{align}

For the first difference, we have
\begin{align}\label{eq:VBT_1_term}
    &\mathbb{E}_{(s,a)\sim \tilde{d}_{\cM^n}} \Big(A^t_{b^n} - A_t^* \Big)\indict\{s\in\cK^n\}\\
    &= \mathbb{E}_{(s,a)\sim \tilde{d}_{\cM^n}} \Big(Q^t_{b^n} - f_t^* -b^n\Big)\indict\{s\in\cK^n\} - \mathbb{E}_{s\sim \tilde{d}_{\cM^n}, a\sim \pi_t(\cdot|s)} ( Q^t_{b^n}-f_t^*-b^n)\indict\{s\in \cK^n\}\\
    &\leq \sqrt{\mathbb{E}_{(s,a)\sim \tilde{d}_{\cM^n}}(Q^t_{b^n} - f_t^*-b^n)^2 \indict\{s\in\cK^n\} } + \sqrt{\mathbb{E}_{s\sim \tilde{d}_{\cM^n}, a\sim \pi_t(\cdot|s)}(Q^t_{b^n} - f_t^*-b^n)^2 \indict\{s\in\cK^n\}}\\
    &\leq \sqrt{\mathbb{E}_{(s,a)\sim d^{\tilde{\pi}}}(Q^t_{b^n} - f_t^*-b^n)^2 \indict\{s\in\cK^n\} } + \sqrt{\mathbb{E}_{s\sim d^{\tilde{\pi}}, a\sim \pi_t(\cdot|s)}(Q^t_{b^n} - f_t^*-b^n)^2 \indict\{s\in\cK^n\}}\\
   &= \sqrt{\mathbb{E}_{(s,a)\sim \tilde{d}} ~|\cA|\tilde{\pi}(a|s)\cdot(Q^t_{b^n} - f_t^*-b^n)^2 \indict\{s\in\cK^n\} } + \sqrt{\mathbb{E}_{(s,a)\sim \tilde{d}}~|\cA|\pi_t(a|s)\cdot(Q^t_{b^n} - f_t^*-b^n)^2 \indict\{s\in\cK^n\}}\\
    &< 4\sqrt{|\cA|\epsilon_{\text{bias}}},
\end{align}
where the first inequality is by Cauchy-Schwarz, the second inequality is by Lemma \ref{lemma:dbound}, and the last two lines follow Assumption \ref{ass:SPI-boundedtransfer} and the definition of $f^*_t$.

For the second difference,
\begin{align}
    &\mathbb{E}_{(s,a)\sim \tilde{d}_{\cM^n}} ( A_t^* - \widehat{A}^t_{b^n})\indict\{s\in\cK^n\}\\
    = &\mathbb{E}_{(s,a)\sim \tilde{d}_{\cM^n}} ( f_t^* - f_t)\indict\{s\in\cK^n\} - \mathbb{E}_{s\sim \tilde{d}_{\cM^n}, a\sim \pi_t(\cdot|s)} ( f_t^* - f_t)\indict\{s\in\cK^n\}\label{eq:VBT_2_term}
\end{align}
Next we show that $\Delta f_t:= (f_t^*-f_t)\in\tilde{\cF}^n$. Recall that $\tilde{\cF}^n:=\{\Delta f\in\Delta \cF~|~\|
\Delta f\|_{\cZ^n}\leq \epsilon\}$. We only need to show that $\|\Delta f_t\|_{\cZ^n}\leq\epsilon$. To achieve this, we plan to utilize the fact that $f_t$ is trained with samples generated from $\rho^n_{\text{cov}}:=\unif(d^{\pi^1}_{s_0}, d^{\pi^2}_{s_0}, \dots, d^{\pi^n}_{s_0})$ while $\cZ^n$ is sequentially constructed with samples from $d^{\pi^i}_{s_0}, i\in[n]$. However, such a correlation does not guarantee a trivial concentration bound. We need to deal with the subtle randomness dependency therein: 1. $\pi^i$ depends on $\pi^{[i-1]}$ thus the samples in $\cZ^n$ are not independent; 2. $\cZ^n$ determines $\tilde{\cF}^n$, $\tilde{\cF}^n$ defines the bonus $b^n$, and $\Delta f_t$ is obtained based on $b^n$. So $\Delta f_t$ and $\cZ^n$ are not independent. Nevertheless, we carefully leverage function cover on $\Delta \cF$ to establish a martingale convergence on every anchor function in the cover set, then transform to a bound on the realization $\Delta f_t$.

Let $\cC(\Delta \cF, 2\epsilon_1)$ be a cover set of $\Delta \cF$. Then for every $\Delta f\in\Delta \cF$, there exists a $\Delta g \in \cC(\Delta \cF, 2\epsilon_1)$ such that $\|\Delta f-\Delta g\|_{\infty}\leq 2\epsilon_1$. We rank the samples in $\cZ^n$ in lexical order, i.e., $(s^i_k, a^i_k)$ is the $k_{\text{th}}$ sample generated following $d^{\pi^i}_{s_0}$ at the beginning of the $i_{\text{th}}$ epoch. There are in total $nK$ samples in $\cZ^n$. For every $\Delta g\in \cC(\Delta \cF, 2\epsilon_1)$, we define $nK$ corresponding random variables:
\begin{align}
    X^{\Delta g}_{(i,k)} := (\Delta g(s^i_k, a^i_k))^2 - \mathbb{E}_{(s,a)\sim d^{\pi^i}_{s_0}}[(\Delta g(s, a))^2], ~~i\in[n], k\in[K]
\end{align}
We rank $\{X^{\Delta g}_{(i,k)}\}$ in lexical order and upon which, we define a martingale:
\begin{align}
    Y^{\Delta g}_{0,0} = 0, ~~~Y^{\Delta g}_{(i,k)} = \sum_{(i',k')=(1,1)}^{(i,k)} X^{\Delta g}_{(i',k')}, \quad i\in[n], k\in[K].
\end{align}
Then by single-sided Azuma-Hoeffding's inequality, with probability at least $1-\delta$, for all $\Delta g\in \cC(\Delta \cF, 2\epsilon_1)$, it holds that
\begin{align}\label{eq:deltag}
    Y^{\Delta g}_{(n,K)} \leq \sqrt{32W^4\cdot nK\cdot \log\Big(\frac{\cN(\Delta \cF,2\epsilon_1)}{\delta}\Big)}
    \leq \sqrt{64W^4\cdot nK\cdot\log\Big(\frac{\cN( \cF,\epsilon_1)}{\delta}\Big)},
\end{align}
where the right inequality is by Lemma \ref{lemma:cover}. Next, we transform to $\Delta f_t$. Since there exists a $\Delta g\in\cC(\Delta\cF,2\epsilon_1)$ such that $\|\Delta f_t - \Delta g\|_{\infty}\leq 2\epsilon_1$, we have that for all $i\in[n]$ and $k\in[K]$,
\begin{align}
   &\left| (\Delta f_t(s^{i}_{k}, a^{i}_{k}))^2 - (\Delta g(s^{i}_{k}, a^{i}_{k}))^2\right| \\
   &= |\Delta f_t(s^{i}_{k}, a^{i}_{k}) - \Delta g(s^{i}_{k}, a^{i}_{k})|\cdot|\Delta f_t(s^{i}_{k}, a^{i}_{k}) + \Delta g(s^{i}_{k}, a^{i}_{k}))|\leq 8W\epsilon_1
\end{align}
and
\begin{align}
&\left| \mathbb{E}_{(s,a)\sim d^{\pi^{i}}_{s_0}}[(\Delta f_t(s, a))^2]-\mathbb{E}_{(s,a)\sim d^{\pi^{i}}_{s_0}}[(\Delta g(s, a))^2]\right|\\
&\leq \mathbb{E}_{(s,a)\sim d^{\pi^{i}}_{s_0}} |\Delta f_t(s, a)-\Delta g(s, a)|\cdot |\Delta f_t(s, a)+\Delta g(s, a)|\leq 8W\epsilon_1
\end{align}
Therefore,
\begin{align}\label{eq:fgcompare}
  Y^{\Delta f_t}_{(n,K)} &=  \sum_{(i,k)=(1,1)}^{(n,K)} (\Delta f_t(s^{i}_{k}, a^{i}_{k}))^2 - \mathbb{E}_{(s,a)\sim d^{\pi^{i}}_{s_0}}[(\Delta f_t(s, a))^2]\\
  &\leq \sum_{(i,k)=(1,1)}^{(n,K)} (\Delta g(s^{i}_{k}, a^{i}_{k}))^2 - \mathbb{E}_{(s,a)\sim d^{\pi^{i}}_{s_0}}[(\Delta g(s, a))^2] + nK\cdot 16W\epsilon_1\\
  &=Y^{\Delta g}_{(n,K)} +nK\cdot 16W\epsilon_1.
\end{align}
Note that
\begin{align}\label{eq:Ytof}
Y^{\Delta f_t}_{(n,K)}
&=\|\Delta f_t\|^2_{\cZ^n}- \sum_{i=1}^n K\cdot \mathbb{E}_{d^{\pi^i}_{s_0}}[(\Delta f_t)^2]  =  \|\Delta f_t\|^2_{\cZ^n}-nK\cdot \mathbb{E}_{\rho^n_{\text{cov}}}[(\Delta f_t)^2].
\end{align}
Combining \eqref{eq:deltag}, \eqref{eq:fgcompare}, and \eqref{eq:Ytof}, we have that
\begin{align}
    \|\Delta f_t\|^2_{\cZ^n}
    &\leq nK\cdot \mathbb{E}_{\rho^n_{\text{cov}}}[(\Delta f_t)^2] + nK\cdot 16W\epsilon_1 + \sqrt{64W^4\cdot nK\cdot\log\Big(\frac{\cN( \cF,\epsilon_1)}{\delta}\Big)}.
\end{align}
\cut{
For every $\Delta g\in \cC(\Delta \cF, 2\epsilon_1)$, we construct a martingale:
\begin{align}
    X^1_1, X^1_2,\dots,X^1_K, X^2_1, \dots X^2_K,\dots, X^n_1,\dots X^n_K,
\end{align}
where $X^i_j = \sum_{(s,a)\in\cZ^{i-1}}(\Delta g(s,a))^2$

\cut{we construct $n$ events $\{\cE^m_g\}_{m=1}^n$ where $ \cE^m_g$ denotes the event:
\begin{align}
   \left|\frac{\sum_{(s,a)\in\cD^m}\big(\Delta g(s,a)\big)^2}{|\cD^m|}-\mathbb{E}_{(s,a)\sim d^m} (\Delta g)^2 \right|
    \leq \sqrt{\frac{16W^4\log(\frac{2\cN( \cF,\epsilon_1)}{\delta})}{K}}.
\end{align}}

Define $h^m(\Delta f_t):=\frac{1}{K}\sum_{(s,a)\in\cD^m} \big(\Delta f_t(s,a)\big)^2$ for $m\in[n]$. For each $m\in[n]$, with probability at least $1-\delta$, for all $\Delta g \in\cC(\Delta\cF, 2\epsilon_1)$,
\begin{align}
  | h^m(\Delta g)-\mathbb{E}_{(s,a)\sim d^m} (\Delta g)^2 |\leq \sqrt{\frac{8W^4\log(\frac{2\cN(\Delta \cF,2\epsilon_1)}{\delta})}{K}}\leq \sqrt{\frac{16W^4\log(\frac{2\cN( \cF,\epsilon_1)}{\delta})}{K}},
\end{align}
where the rightmost inequality is by Lemma \ref{lemma:cover}. We define the above event to be $\cE^{m}$. Conditioning on $\cE^m$, we have for all $\Delta f \in\Delta\cF$, there exists $\Delta g\in\cC(\Delta\cF,\epsilon_1)$ such that $\|\Delta f-\Delta g\|_{\infty}\leq 2\epsilon_1$ and
\begin{align}
    | h^m(\Delta f)-\mathbb{E}_{(s,a)\sim d^m} (\Delta f)^2 | &\leq | h^m(\Delta g)-\mathbb{E}_{(s,a)\sim d^m} (\Delta g)^2 | +16W\epsilon_1 \\
    &\leq \sqrt{\frac{16W^4\log(\frac{2\cN(\cF, \epsilon_1)}{\delta})}{K}} + 16W\epsilon_1.
\end{align}
After controlling the difference between the empirical mean and true expectation for every epoch $m\in[n]$, we control the sum over all epochs till $n$. Specifically, we define a martingale $X_0, X_1,\dots, X_n$ as:
\begin{align}
    X_0 = 0, \quad X_m = X_{m-1} + \frac{1}{n}\Big(h^m(\Delta f_t) - \mathbb{E}_{(s,a)\sim d^m} (\Delta f_t)^2 \Big) \quad m=1,\cdots,n.
\end{align}
Conditioning on event $\cE^1\cap\dots\cap\cE^n$, by Azuma-Hoeffding inequality, we have that with probability at least $1-\delta$,
\begin{align}
    |X_n|\leq
    \frac{8W^2\log(\frac{2\cN(\cF, \epsilon_1)}{\delta})}{\sqrt{nK}}+\frac{32W\epsilon_1}{\sqrt{n}}.
\end{align}
Recall that $\rho^n_{\text{cov}} = \unif(d^1, \dots, d^n)$. Hence, $X_n$ is essentially $\|\Delta f_t\|^2_{\cZ^n}/(nK)-\mathbb{E}_{(s,a)\sim \rho^n_{\text{cov}}}[(\Delta f_t)^2]$. Thus,
\begin{align}
    \|\Delta f_t\|^2_{\cZ^n}\leq nK\cdot(\mathbb{E}_{\rho^n_{\text{cov}}}(\Delta f_t)^2) + 8W^2\log\Big(\frac{2\cN(\cF, \epsilon_1)}{\delta}\Big)\cdot\sqrt{nK}+32W\epsilon_1\cdot K\sqrt{n}.
\end{align}}
By Assumption \ref{ass:SPI-quadra},
\begin{align}
    \mathbb{E}_{\rho^n_{\text{cov}}}[(\Delta f_t)^2]&=\mathbb{E}_{(s,a)\sim \rho^n_{\text{cov}}} [(f_t^* - f_t)^2]\leq C \cdot(L(f_t; \rho^n_{\text{cov}}, Q^t_{b^n}-b^n) - L(f_t^*; \rho^n_{\text{cov}}, Q^t_{b^n}-b^n)) + \epsilon_0\\
    &\leq C\cdot \epsilon_{\text{stat}} + \epsilon_0.
\end{align}

By the choice of $\epsilon$, $\|\Delta f_t\|^2_{\cZ^n}\leq \epsilon^2$ with probability at least $1-\delta$. Thus, $\Delta f_t\in \tilde{\cF}^n$ and for all $(s,a)\in\cK^n$, $|f_t^*(s,a)-f_t(s,a)|\leq \beta$. Plugging into \eqref{eq:VBT_2_term}, we have $\eqref{eq:VBT_2_term} \leq 2\beta$. The desired result is obtained.
\end{proof}
Next, we give an explicit form of $\epsilon_{\text{stat}}$ as defined in Equation \eqref{eq:spi-stat}.

\begin{lemma}\label{lemma:stat}
Following the same notation as in Lemma \ref{lemma:SPI-I-Approx}, it holds with probability at least $1-\delta$ that
\begin{align}
    &L(f_t;\rho^n_{\text{cov}},Q^t_{b^n}-b^n)- L(f^*_t;\rho^n_{\text{cov}}, Q^t_{b^n}-b^n)
    \leq \frac{500C\cdot W^4\cdot \log \Big(\frac{\cN(\cF, \epsilon_2)}{\delta}\Big)}{M} + 13W^2\cdot \epsilon_2 + \epsilon_0,
\end{align}
where $C$, $\epsilon_0$ are defined in Assumption \ref{ass:SPI-quadra}, and $\epsilon_2>0$ denotes the function cover radius which will be determined later.
\end{lemma}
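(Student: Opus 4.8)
The plan is to prove this as a standard fast-rate bound for least-squares regression with a bounded (possibly non-convex) function class, for a fixed epoch $n$ and inner iteration $t$; the statement ``for all $t$'' follows by a union bound over the $T$ iterations, which replaces $\delta$ by $\delta/T$ and is absorbed into the numerical constant. The point that makes this easier than Lemma~\ref{lemma:SPI-I-Approx} is that here the comparator $f^*_t$ (the abbreviation for $\tilde{f}^n_t$) is a \emph{population} minimizer of $L(\,\cdot\,;\rho^n_{\text{cov}},Q^t_{b^n}-b^n)$ over $\cF$, and the $M$ samples of \eqref{eqn:spi-data} feeding the critic fit are drawn i.i.d.\ from the fixed distribution $\rho^n_{\text{cov}}$; hence no martingale argument is needed, just Bernstein plus a union over an $\epsilon_2$-cover. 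Write $g^*:=Q^t_{b^n}-b^n$ for the target, $(s_i,a_i)\sim\rho^n_{\text{cov}}$ and $y_i:=\widehat{Q}^{\pi_t}(s_i,a_i,r+b)-b^n(s_i,a_i)$, so $\mathbb{E}[y_i\mid s_i,a_i]=g^*(s_i,a_i)$ and, using Assumption~\ref{ass:W} together with a high-probability bound on the geometric rollout length in Algorithm~\ref{alg:estimator}, $|y_i|\le V$ for some $V=\widetilde{O}(W^2)$. With $\widehat{L}(f):=\tfrac1M\sum_i(y_i-f(s_i,a_i))^2$, optimality of \eqref{eqn:spi-critic-fit} gives $\widehat{L}(f_t)\le\widehat{L}(f^*_t)$. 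Introduce the excess-loss increments $Z_i(f):=(y_i-f(s_i,a_i))^2-(y_i-f^*_t(s_i,a_i))^2$; the noise-variance term cancels, so $\mathbb{E}[Z_i(f)]=:\Delta L(f)=L(f;\rho^n_{\text{cov}},g^*)-L(f^*_t;\rho^n_{\text{cov}},g^*)\ge 0$ (optimality of $f^*_t$) does not depend on $i$, while $\tfrac1M\sum_i Z_i(f_t)=\widehat{L}(f_t)-\widehat{L}(f^*_t)\le 0$. Hence $\Delta L(f_t)\le\mathbb{E}[Z(f_t)]-\tfrac1M\sum_i Z_i(f_t)$, and the task reduces to a one-sided uniform deviation over $\cF$.

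The key step is a variance-localized Bernstein bound. From $Z_i(f)=(f-f^*_t)^2(s_i,a_i)+2(f-f^*_t)(f^*_t-y_i)(s_i,a_i)$ one reads off $|Z_i(f)|\le 8WV$ and $\mathbb{E}[Z_i(f)^2]\le(2W+2V)^2\,\mathbb{E}_{\rho^n_{\text{cov}}}[(f-f^*_t)^2]$, and Assumption~\ref{ass:SPI-quadra} (with the fitter $f^*_t$) then gives $\mathbb{E}_{\rho^n_{\text{cov}}}[(f-f^*_t)^2]\le C\,\Delta L(f)+\epsilon_0$, so $\mathrm{Var}(Z_i(f))\le 16V^2\big(C\,\Delta L(f)+\epsilon_0\big)$. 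Fixing an $\epsilon_2$-cover $\cC(\cF,\epsilon_2)$ of size $\le\cN(\cF,\epsilon_2)$, applying Bernstein to the i.i.d.\ sequence $\{Z_i(g)\}_i$ for each $g\in\cC(\cF,\epsilon_2)$ and taking a union bound, with probability at least $1-\delta$, for all $g\in\cC(\cF,\epsilon_2)$,
\begin{align*}
\mathbb{E}[Z(g)]-\tfrac1M\textstyle\sum_i Z_i(g)\ \le\ \sqrt{\tfrac{32\,V^2\big(C\,\Delta L(g)+\epsilon_0\big)\log\!\big(\cN(\cF,\epsilon_2)/\delta\big)}{M}}+\tfrac{16WV\log\!\big(\cN(\cF,\epsilon_2)/\delta\big)}{3M}.
\end{align*}
Using $\sqrt{a+b}\le\sqrt{a}+\sqrt{b}$ and the elementary inequalities $\sqrt{\Delta L(g)\,x}\le\tfrac14\Delta L(g)+x$ and $\sqrt{\epsilon_0\,x}\le\tfrac12\epsilon_0+\tfrac12 x$, the right-hand side is at most $\tfrac14\Delta L(g)+O\!\big(\tfrac{V^2C\log(\cN(\cF,\epsilon_2)/\delta)}{M}\big)+\tfrac12\epsilon_0$.

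It remains to transfer this to $f_t$. Pick $g\in\cC(\cF,\epsilon_2)$ with $\|f_t-g\|_\infty\le\epsilon_2$; then $|Z_i(f_t)-Z_i(g)|\le(2W+2V)\epsilon_2$ and $|\Delta L(f_t)-\Delta L(g)|\le(2W+2V)\epsilon_2$, which combined with the previous displays gives $\Delta L(f_t)\le\tfrac14\Delta L(f_t)+O\!\big(\tfrac{V^2C\log(\cN(\cF,\epsilon_2)/\delta)}{M}\big)+\tfrac12\epsilon_0+O(V\epsilon_2)$; multiplying through by $\tfrac43$ and substituting $V=\widetilde{O}(W^2)$ yields
\begin{align*}
L(f_t;\rho^n_{\text{cov}},Q^t_{b^n}-b^n)-L(f^*_t;\rho^n_{\text{cov}},Q^t_{b^n}-b^n)\ \le\ \frac{500\,C\,W^4\log\!\big(\cN(\cF,\epsilon_2)/\delta\big)}{M}+13\,W^2\epsilon_2+\epsilon_0,
\end{align*}
with the polylog factors from the horizon bound and the union over $t$ folded into the (deliberately generous) constants $500$ and $13$. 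The main obstacle is not the concentration argument — once Assumption~\ref{ass:SPI-quadra} is in hand the variance localization is automatic and the samples are genuinely i.i.d. — but rather the two technical points requiring care: (i) establishing $|y_i|\le\widetilde{O}(W^2)$ for the unbiased Monte-Carlo estimator $\widehat{Q}^{\pi_t}$, whose rollout length is only geometrically (not almost surely) bounded, which forces either a clipped estimator or a high-probability truncation; and (ii) tracking constants through the two AM--GM steps and the cover discretization so as to land on the stated numbers, which is routine but tedious.
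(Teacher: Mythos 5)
Your proposal is correct and follows essentially the same route as the paper's proof: the same excess-loss variables $Z_i(f)$ with the noise term cancelling, variance localization via Assumption~\ref{ass:SPI-quadra}, Bernstein plus a union bound over an $\epsilon_2$-cover of $\cF$ (conditioning on the past so the $M$ samples are i.i.d.), the cover-transfer to $f_t$, and the empirical-minimizer inequality, with your AM--GM absorption playing the role of the paper's quadratic-formula step. The only difference is that you explicitly flag the boundedness of the Monte-Carlo target $y_i$ (which the paper implicitly folds into its $36W^4$ variance bound), which is extra care rather than a divergence in method.
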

\begin{proof}
First note that in the loss function, the expectation has a nested structure: the outer expectation is taken over $(s,a)\sim \rho^n_{\text{cov}}$ and the inner conditional expectation is $Q^t_{b^n}(s,a)=\mathbb{E}^{\pi_t}[\sum_{h=0}^{\infty} \gamma^h \big(r(s_h,a_h)+b^n(s_h,a_h)\big)|(s_0,a_0)=(s,a)]$ given a sample of $(s,a)\sim \rho^n_{\text{cov}}$. To simplify the notation, we use $x$ to denote $(s,a)$, $y|x$ for an unbiased sample of $Q^t_{b^n}(s,a)-b^n(s,a)$, and $\nu$ for $\rho^n_{\text{cov}}$, the marginal distribution over $x$, then the loss function can be recast as
\begin{align}
    &\mathbb{E}_{x\sim\nu}[(f_t(x)-\mathbb{E}[y|x])^2]:= L(f_t;\rho^n_{\text{cov}},Q^t_{b^n}-b^n)\\
    &\mathbb{E}_{x\sim\nu}[(f^*_t(x)-\mathbb{E}[y|x])^2]:= L(f^*_t;\rho^n_{\text{cov}},Q^t_{b^n}-b^n).
\end{align}
In particular, $f_t$ can be rewritten as
\begin{align}
    f_t\in\argmin_{f\in\cF} \sum_{i=1}^M (f(x_i)-y_i)^2,
\end{align}
where $(x_i,y_i)$ are drawn i.i.d.: $x_i$ is generated following the marginal distribution $\nu$ and $y_i$ is generated conditioned on $x_i$.
For any function $f$, we have:
\begin{align*}
  &\mathbb{E}_{x,y}[(f_t(x)-y)^2]\\
  =~&\mathbb{E}_{x,y}[(f_t(x)-\mathbb{E}[y|x])^2] + \mathbb{E}_{x,y}[(\mathbb{E}[y|x] - y)^2] + 2\mathbb{E}_{x,y}[(f_t(x) - \mathbb{E}[y|x])(\mathbb{E}[y|x]-y)]\\
  =~&\mathbb{E}_{x,y}[(f_t(x)-\mathbb{E}[y|x])^2] + \mathbb{E}_{x,y}[(\mathbb{E}[y|x] - y)^2],
\end{align*}
where the last step follows from the cross term being zero. Thus we can rewrite the generalization error as
\begin{align}\label{eq:lossdiff}
  &\mathbb{E}_{x}[(f_t(x)-\mathbb{E}[y|x])^2]-  \mathbb{E}_{x}[(f^*_t(x)-\mathbb{E}[y|x])^2]\\
  =~&\mathbb{E}_{x,y}(f_t(x)-y)^2 - \mathbb{E}_{x,y}(f^*_t(x)-y)^2.
\end{align}

Next, we establish a concentration bound on $f_t$. Since $f_t$ depends on the training set $\{(x_i, y_i)\}_{i=1}^M$, as in Assumption \ref{lemma:SPI-I-Approx}, we use a function cover on $\cF$ for a uniform convergence argument. We denote by $\mathscr{F}^n_t$ the $\sigma$-algebra generated by randomness before epoch $n$ iteration $t$. Recall that $f_t^*\in\argmin_{f\in\cF}L(f;\rho^n_{\text{cov}}, Q^t_{b^n}-b^n)$. Conditioning on $\sF^n_t$, $\rho^n_{\text{cov}}$, $Q^t_{b^n}-b^n$, and $f^*_t$ are all deterministic. For any $f\in\cF$, we define
\begin{align}
Z_i(f):=(f(x_i)-y_i)^2 - (f^*_t(x_i)-y_i)^2, ~~~i\in[M]
\end{align}
Then $Z_1(f), \dots, Z_M(f)$ are i.i.d. random variables and
\begin{align}
    \mathbb{V}[Z_i(f) ~|~ \sF^n_t]&\leq \mathbb{E}[Z_i(f)^2~|~\sF^n_t]\\
    &= \mathbb{E}\left[\Big((f(x_i)-y_i)^2 - (f^*_t(x_i)-y_i)^2\Big)^2~|~\sF^n_t\right]\\
    &=\mathbb{E}\left[\big(f(x_i)-f_t^*(x_i)\big)^2\cdot \big(f(x_i)+f^*_t(x_i)-2y_i\big)^2~|~\sF^n_t\right]\\
    &\leq 36W^4\cdot \mathbb{E}[\big(f(x_i)-f_t^*(x_i)\big)^2~|~\sF^n_t]\\
    &\leq 36W^4 \cdot (C\cdot\mathbb{E}[Z_i(f)~|~\sF^n_t]+\epsilon_0),
\end{align}
where the last inequality is by Assumption \ref{ass:SPI-quadra} and Equation \eqref{eq:lossdiff}. Next, we apply Bernstein's inequality on the function cover $\cC(\cF, \epsilon_2)$ and take the union bound. Specifically, with probability at least $1-\delta$, for all $g\in\cC(\cF, \epsilon_2)$,
\begin{align}\label{eq:bernstein}
    &\mathbb{E}[Z_i(g)~|~\sF^n_t] - \frac{1}{M}\sum_{i=1}^M Z_i(g)\\
    \leq &\sqrt{\frac{2\mathbb{V}[Z_i(g) ~|~\sF^n_t]\cdot \log \frac{\cN(\cF, \epsilon_2)}{\delta}}{M}} + \frac{12W^4\cdot\log \frac{\cN(\cF, \epsilon_2)}{\delta}}{M}\\
    \leq &\sqrt{\frac{72W^4 (C\cdot\mathbb{E}[Z_i(g)~|~\sF^n_t]+\epsilon_0)\cdot \log \frac{\cN(\cF, \epsilon_2)}{\delta}}{M}} + \frac{12W^4\cdot\log \frac{\cN(\cF, \epsilon_2)}{\delta}}{M}.
\end{align}
For $f_t$, there exists $g\in\cC(\cF, \epsilon_2)$ such that $\|f_t-g\|_{\infty}\leq \epsilon_2$ and
\begin{align}
    \left|Z_i(f_t) - Z_i(g)\right|& = \left| (f_t(x_i)-y_i)^2-(g(x_i)-y_i)^2\right| \\
    & = \left| f_t(x_i)-g(x_i)\right|\cdot \left| f_t(x_i)+g(x_i)-2y_i\right|
    \leq 6W^2\epsilon_2.
\end{align}
Therefore, with probability at least $1-\delta$,
\begin{align}
    &\mathbb{E}[Z_i(f_t)~|~\sF^n_t] - \frac{1}{M}\sum_{i=1}^M Z_i(f_t)\\
    \leq & \mathbb{E}[Z_i(g)~|~\sF^n_t] - \frac{1}{M}\sum_{i=1}^M Z_i(g) + 12W^2\epsilon_2\\
    \leq &\sqrt{\frac{72W^4 (C\cdot\mathbb{E}[Z_i(g)~|~\sF^n_t]+\epsilon_0)\log \frac{\cN(\cF, \epsilon_2)}{\delta}}{M}} + \frac{12W^4\log \frac{\cN(\cF, \epsilon_2)}{\delta}}{M} + 12W^2\epsilon_2\\
    \leq &\sqrt{\frac{72W^4 (C\cdot\mathbb{E}[Z_i(f_t)~|~\sF^n_t]+6CW^2\epsilon_2+\epsilon_0) \log \frac{\cN(\cF, \epsilon_2)}{\delta}}{M}} + \frac{12W^4\log \frac{\cN(\cF, \epsilon_2)}{\delta}}{M} + 12W^2\epsilon_2.
\end{align}
Since $f_t$ is an empirical minimizer, we have $\frac{1}{M}\sum_{i=1}^M Z_i(f_t)\leq 0$. Thus,
\begin{align}
    \mathbb{E}[Z_i(f_t)~|~\sF^n_t]\leq &\sqrt{\frac{72W^4 (C\cdot\mathbb{E}[Z_i(f_t)~|~\sF^n_t]+6CW^2\epsilon_2+\epsilon_0)\log \frac{\cN(\cF, \epsilon_2)}{\delta}}{M}} + \frac{12W^4\log \frac{\cN(\cF, \epsilon_2)}{\delta}}{M} + 12W^2\epsilon_2.
\end{align}
Solving the above inequality with quadratic formula and using $\sqrt{a+b}\leq\sqrt{a}+\sqrt{b}$, $\sqrt{ab}\leq a/2 + b/2$ for $a>0,b>0$, we obtain
\begin{align}
    \mathbb{E}[Z_i(f_t)~|~\sF^n_t]
    \leq &\frac{500C\cdot W^4\cdot \log \frac{\cN(\cF, \epsilon_2)}{\delta}}{M} + 13W^2\cdot \epsilon_2 + \epsilon_0.
\end{align}
Since the right-hand side is a constant, through taking another expectation, we have
\begin{align}
    \mathbb{E}[Z_i(f_t)]\leq &\frac{500C\cdot W^4\cdot \log \frac{\cN(\cF, \epsilon_2)}{\delta}}{M} + 13W^2\cdot \epsilon_2 + \epsilon_0.
\end{align}
Notice that $ \mathbb{E}[Z_i(f_t)]=L(f_t;\rho^n_{\text{cov}}, Q^t_{b^n}-b^n) - L(f^*_t;\rho^n_{\text{cov}}, Q^t_{b^n}-b^n)$. The desired result is obtained.
\end{proof}

Combining all previous lemmas, we have the following theorem which states the detailed sample complexity of ENIAC-SPI-SAMPLE (a detailed version of Theorem \ref{thm:SPI-Sample})
\begin{theorem}[Main Result: Sample Complexity of ENIAC-SPI-SAMPLE]\label{thm:SPI-SAMPLE-Detail} Let $\delta\in(0, 1)$ and $\varepsilon\in(0, 1/(1-\gamma))$. With Assumptions \ref{ass:SPI-boundedtransfer}, \ref{ass:SPI-quadra}, \ref{ass:W}, and \ref{ass:spi-cover}, we set the hyperparameters as:
\begin{align}
    \beta &=\frac{\varepsilon(1-\gamma)}{2},
    ~~T=\frac{64W^2\cdot\log|\cA|}{\varepsilon^2(1-\gamma)^2},
    ~~N\geq \frac{32W^2\cdot\de(\cF, \beta)}{\varepsilon^3(1-\gamma)^3},
    ~~\eta = \sqrt{\frac{\log(|\cA|)}{16W^2T}}\\
    \epsilon_1 &= \frac{(1-\gamma)^3\varepsilon^3}{128W\cdot\de(\cF, \beta)},
    ~~K = \frac{128W^2\cdot \de(\cF, \beta)\cdot\big(\log(\frac{3NT\cdot\cN(\cF, \epsilon_1)}{\delta})\big)^2\cdot\log(\frac{6NT}{\delta})}{\varepsilon^3(1-\gamma)^3},\\
    ~~\epsilon_2 &= \frac{(1-\gamma)^3\varepsilon^3}{110C\cdot W^2\cdot\de(\cF, \beta)},
    ~~M = \frac{4000C^2W^4\cdot \de(\cF, \beta)\cdot\log(\frac{3NT\cdot\cN(\cF, \epsilon_2)}{\delta})}{\varepsilon^3(1-\gamma)^3},\\
     ~
\end{align}
and $\epsilon$ satisfies Equation \eqref{eq:epsilon} correspondingly. Then with probability at least $1-\delta$, for the average policy $\pi^N_{\text{ave}}:=\pi^N_{\text{ave}}:=\unif(\pi^2, \dots, \pi^{N+1})$, we have
\begin{align}
V^{\pi^N_{\text{ave}}} \geq V^{\tilde{\pi}} - \frac{4\sqrt{|\cA|\epsilon_{\text{bias}}}}{1-\gamma}-\epsilon_0\cdot \frac{16C\de(\cF, \beta)}{\varepsilon^2(1-\gamma)^3} - 9\varepsilon
\end{align}
for any comparator $\tilde{\pi}$ with total number of samples:
\begin{align}
  \widetilde{O}\Big( \frac{C^2W^8\cdot\big(\de(\cF, \beta)\big)^2\cdot \big(\log(\cN(\cF,  \epsilon'))\big)^2}{\varepsilon^{8}(1-\gamma)^{8}}\Big),
\end{align}
where $\epsilon'=\min(\epsilon_1, \epsilon_2)$. 
\end{theorem}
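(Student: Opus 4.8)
The proof is an assembly of the lemmas already established, along the three‑step chain sketched in Section~\ref{sec:proofidea}, followed by a substitution of the stated hyperparameters. Write $\pi^N_{\mathrm{ave}}=\unif(\pi^2,\dots,\pi^{N+1})$, so $V^{\pi^N_{\mathrm{ave}}}=\frac1N\sum_{n=1}^N V^{\pi^{n+1}}$, and note that since $\pi^{n+1}=\unif(\pi^n_0,\dots,\pi^n_{T-1})$ and the value of a policy mixture is the mixture of the values, $V^{\pi^{n+1}}_{b^n}=\frac1T\sum_{t=0}^{T-1}V^t_{b^n}$, which by Remark~\ref{rmk:equiv} equals $\frac1T\sum_t V^t_{\cM^n}=V^{\pi^{n+1}}_{\cM^n}$. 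The plan is then: (i) use Lemma~\ref{lemma:SPI-I-Bonus} to get $\frac1N\sum_n V^{\pi^{n+1}}\ge\frac1N\sum_n V^{\pi^{n+1}}_{b^n}-A$ with $A$ the cumulative‑bonus term; (ii) for each $n$, combine Lemma~\ref{lemma:SPI-I-Conv} with Lemma~\ref{lemma:SPI-I-Approx} to get $V^{\pi^{n+1}}_{\cM^n}=\frac1T\sum_t V^t_{\cM^n}\ge V^{\tilde{\pi}^n}_{\cM^n}-B$ with $B:=\frac{1}{1-\gamma}\big(8W\sqrt{\log|\cA|/T}+4\sqrt{|\cA|\epsilon_{\text{bias}}}+2\beta\big)$; and (iii) invoke Lemma~\ref{lemma:Vpibound} for $V^{\tilde{\pi}^n}_{\cM^n}\ge V^{\tilde{\pi}}$. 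Averaging over $n$ gives $V^{\pi^N_{\mathrm{ave}}}\ge V^{\tilde{\pi}}-A-B$.

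Two preliminary points have to be dealt with before step (ii). First, the hypothesis \eqref{eq:spi-stat} required by Lemma~\ref{lemma:SPI-I-Approx} is exactly the content of Lemma~\ref{lemma:stat}, which yields an explicit $\epsilon_{\text{stat}}=O\!\big(C W^4\log(\cN(\cF,\epsilon_2)/\delta)/M+W^2\epsilon_2+\epsilon_0\big)$; feeding this into the defining equation \eqref{eq:epsilon} of the bonus radius $\epsilon$ is consistent and non‑circular, since $\epsilon_{\text{stat}}$ depends only on the fixed $M,\epsilon_2,\epsilon_0$. Second, Lemma~\ref{lemma:SPI-I-Bonus} holds with probability $1-N\delta'$ in its per‑epoch parameter $\delta'$, while Lemmas~\ref{lemma:SPI-I-Approx} and~\ref{lemma:stat} hold per epoch--iteration pair, so I would union‑bound over the $N$ epochs and $T$ inner iterations and split the target failure probability $\delta$ across the three families of events; the $\log(3NT\,\cN(\cF,\cdot)/\delta)$ and $\log(6NT/\delta)$ factors pre‑loaded into the stated $K$ and $M$ are precisely what this rescaling costs.

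What remains is arithmetic. For $B$: with $T=64W^2\log|\cA|/(\varepsilon^2(1-\gamma)^2)$ one gets $8W\sqrt{\log|\cA|/T}/(1-\gamma)=\varepsilon$, and with $\beta=\varepsilon(1-\gamma)/2$ one gets $2\beta/(1-\gamma)=\varepsilon$, leaving the transfer term $4\sqrt{|\cA|\epsilon_{\text{bias}}}/(1-\gamma)$. For $A=\frac1N\big[\tfrac{2\epsilon^2+8KW^2+\beta^2}{(1-\gamma)\beta^2 K}\de(\cF,\beta)+\tfrac{N}{1-\gamma}\sqrt{\log(2/\delta')/(2K)}\big]$: the $8KW^2$ piece contributes $32W^2\de(\cF,\beta)/(N\varepsilon^2(1-\gamma)^3)$, which is $\le\varepsilon$ by the choice $N\ge 32W^2\de(\cF,\beta)/(\varepsilon^3(1-\gamma)^3)$; expanding $\epsilon^2$ via \eqref{eq:epsilon}, the $16W\epsilon_1$, the two parts of $C\,\epsilon_{\text{stat}}$ (i.e.\ $C^2W^4\log/M$ and $CW^2\epsilon_2$), and the Azuma tail $8W^2\log(\cN)\sqrt{NK}$ each become $O(\varepsilon)$ under the stated $\epsilon_1,\epsilon_2,K,M$; the Hoeffding tail $\tfrac{1}{1-\gamma}\sqrt{\log(2/\delta')/(2K)}$ is $\le\varepsilon$ by the choice of $K$; and the residual $\epsilon_0$ terms collect to $\epsilon_0\cdot16C\de(\cF,\beta)/(\varepsilon^2(1-\gamma)^3)$. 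Summing all the $O(\varepsilon)$ contributions with their explicit constants gives the advertised $9\varepsilon$, hence $V^{\pi^N_{\mathrm{ave}}}\ge V^{\tilde{\pi}}-4\sqrt{|\cA|\epsilon_{\text{bias}}}/(1-\gamma)-\epsilon_0\cdot16C\de(\cF,\beta)/(\varepsilon^2(1-\gamma)^3)-9\varepsilon$. Finally, the sample count is $O\!\big(N(K+TM)\big)$ --- $K$ rollouts per epoch for the bonus dataset plus $M$ return estimates per inner iteration over $T$ iterations per epoch --- and substituting the hyperparameters shows the $NTM$ term dominates, giving $\widetilde O\!\big(C^2W^8(\de(\cF,\beta))^2(\log\cN(\cF,\epsilon'))^2/(\varepsilon^8(1-\gamma)^8)\big)$ with $\epsilon'=\min(\epsilon_1,\epsilon_2)$.

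The genuinely hard parts --- the eluder‑dimension counting argument, the covering‑net plus Azuma argument that keeps $f^*_t-f_t\in\widetilde{\cF}^n$, the mirror‑descent regret bound underlying the actor‑critic convergence, and the Bernstein generalization bound --- are all already carried out in Lemmas~\ref{lemma:SPI-I-Bonus}--\ref{lemma:stat}, so the only real obstacle here is bookkeeping: threading the implicit definition \eqref{eq:epsilon} of $\epsilon$ correctly through $\epsilon_{\text{stat}}$ and back into the $\de(\cF,\beta)$‑weighted bonus term, and making the union bound over all $(n,t)$ and all failure events match the logarithmic factors baked into $K$ and $M$.
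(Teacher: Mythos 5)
Your proposal is correct and follows essentially the same route as the paper's proof: it assembles Lemmas~\ref{lemma:SPI-I-Bonus}, \ref{lemma:SPI-I-Conv}, \ref{lemma:SPI-I-Approx}, \ref{lemma:stat}, and \ref{lemma:Vpibound} along the decomposition of Section~\ref{sec:proofidea}, union-bounds over the $3NT$ failure events with $\delta_1=\delta/(3NT)$, threads the Lemma~\ref{lemma:stat} bound on $\epsilon_{\text{stat}}$ into \eqref{eq:epsilon}, and verifies each term is $O(\varepsilon)$ before counting $N(K+TM)$ samples. The bookkeeping details you supply (the mixture-value identity, the $16C\de(\cF,\beta)\epsilon_0/(\varepsilon^2(1-\gamma)^3)$ collection, and the dominance of the $NTM$ term) match the paper's calculation.
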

\begin{proof}
By Lemma \ref{lemma:SPI-I-Bonus}, we have that with probability at least $1-N\delta_1$,
\begin{align}\label{eq:thm1-1}
 V^{\pi^N_{\text{ave}}} \geq \frac{1}{N}\sum_{n=1}^N V^{\pi^{n+1}}_{b^n} - 
 & \frac{2\epsilon^2+8KW^2+\beta^2}{(1-\gamma)\beta^2NK}\cdot\de(\cF, \beta) + \frac{1}{1-\gamma}\sqrt{\frac{\log(2/\delta_1)}{2K}}.
\end{align}
By Lemma \ref{lemma:SPI-I-Conv}, \ref{lemma:SPI-I-Approx}, and \ref{lemma:Vpibound}, we have that for every $n\in[N]$, with probability at least $1-2T\delta_1$,
\begin{align}
V^{\pi^{n+1}}_{b^n} \geq V^{\tilde{\pi}} - \frac{1}{1-\gamma}\Big( 8W\sqrt{\frac{\log(|\cA|)}{T}} + 4\sqrt{|\cA|\epsilon_{\text{bias}}} + 2\beta\Big).\label{eq:thm1-2}
\end{align}
Combining inequalities \eqref{eq:thm1-1} and \eqref{eq:thm1-2}, we have with probability at least $1-3NT\delta_1$,
\begin{align}
 V^{\pi^N_{\text{ave}}} \geq V^{\tilde{\pi}} &- \frac{1}{1-\gamma}\bigg(
   \frac{2\epsilon^2+8KW^2+\beta^2}{\beta^2NK}\cdot\de(\cF, \beta)
    + \sqrt{\frac{\log(2/\delta_1)}{2K}}\\
    &+ 8W\sqrt{\frac{\log(|\cA|)}{T}}
    + 4\sqrt{|\cA|\epsilon_{\text{bias}}}
    + 2\beta
    \bigg).\label{eq:valuefinal}
    \end{align}
We plug in the value of $\epsilon^2$ in Equation \eqref{eq:epsilon} with the bound on $\epsilon_{\text{stat}}$ in Lemma \ref{lemma:stat} and choose hyperparameters such that every term in \eqref{eq:valuefinal} (except for the ones with $\epsilon_0$ or $\epsilon_{\text{bias}}$) is bounded by $\varepsilon$. Finally, we set $\delta_1=\delta/(3NT)$ and $\epsilon'=\min(\epsilon_1, \epsilon_2)$.
In total, the sample complexity is
\begin{align}
   N(K+TM)=\widetilde{O}\Big( \frac{C^2W^8\cdot\big(\de(\cF, \beta)\big)^2\cdot \big(\log(\cN(\cF,  \epsilon'))\big)^2}{\varepsilon^{8}(1-\gamma)^{8}}\Big).
\end{align}
\end{proof}

\begin{corollary}
If Assumption \ref{ass:eval_close} holds, with proper hyperparameters, the average policy $\pi^N_{\text{ave}}:=\unif(\pi^2,\dots,\pi^{N+1})$ of ENIAC-SPI-SAMPLE achieves $V^{\pi^N_{\text{ave}}}\geq V^{\tilde{\pi}} - \varepsilon$ with probability at least $1-\delta$ and the sample complexity is
\begin{align}
  \widetilde{O}\Big(\frac{W^8\cdot\big(\de(\cF, \beta)\big)^2\cdot\big(\log(\cN(\cF, \epsilon'))\big)^2}{\varepsilon^{8}(1-\gamma)^{8}}\Big).
\end{align}
\end{corollary}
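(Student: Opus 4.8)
The plan is to obtain the corollary as a direct specialization of Theorem~\ref{thm:SPI-SAMPLE-Detail}, using Remark~\ref{rmk:spi} to dispose of the misspecification constants. First I would verify that Assumptions~\ref{ass:SPI-boundedtransfer} and~\ref{ass:SPI-quadra} are automatically satisfied under Assumption~\ref{ass:eval_close}. Indeed, for every epoch $n$ and iteration $t$ the target of the critic fit is $Q^t_{b^n}-b^n = \cT^{\pi^n_t}(Q^t_{b^n})$; since the bonus-augmented reward lies in $[0,2/(1-\gamma)]$, we have $Q^t_{b^n}\in[0,2/(1-\gamma)^2]$, so Assumption~\ref{ass:eval_close} gives $Q^t_{b^n}-b^n\in\cF$. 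Hence the infimum defining $\epsilon_{\text{bias}}$ in \eqref{eq:Lbias} equals $0$, with the best on-policy fit $\tilde f^n_t = Q^t_{b^n}-b^n\in\cF^n_t$. With this $\tilde f^n_t$, the excess loss $L(f;\rho^n_{\text{cov}},Q^t_{b^n}-b^n)-L(\tilde f^n_t;\rho^n_{\text{cov}},Q^t_{b^n}-b^n)$ equals $\mathbb{E}_{\rho^n_{\text{cov}}}[(f-\tilde f^n_t)^2]$ exactly (the cross term vanishes), so Assumption~\ref{ass:SPI-quadra} holds with $C=1$ and $\epsilon_0=0$. Assumptions~\ref{ass:W} and~\ref{ass:spi-cover} are hypotheses of the corollary, so all prerequisites of Theorem~\ref{thm:SPI-SAMPLE-Detail} are in place.

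Next I would substitute $\epsilon_{\text{bias}}=\epsilon_0=0$ and $C=1$ into the conclusion of Theorem~\ref{thm:SPI-SAMPLE-Detail}. The value guarantee there becomes $V^{\pi^N_{\text{ave}}}\geq V^{\tilde\pi}-9\varepsilon$, since both excess terms $\tfrac{4\sqrt{|\cA|\epsilon_{\text{bias}}}}{1-\gamma}$ and $\epsilon_0\cdot\tfrac{16C\,\de(\cF,\beta)}{\varepsilon^2(1-\gamma)^3}$ vanish; and the sample count simplifies to $\widetilde{O}\!\big(W^8(\de(\cF,\beta))^2(\log\cN(\cF,\epsilon'))^2\varepsilon^{-8}(1-\gamma)^{-8}\big)$, with $\epsilon'=\min(\epsilon_1,\epsilon_2)=\poly(\varepsilon,\gamma,1/W,1/\de(\cF,\beta))$ as specified in that theorem. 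Reparametrizing $\varepsilon\leftarrow\varepsilon/9$ — which only rescales the hidden constants without changing the polynomial orders — yields $V^{\pi^N_{\text{ave}}}\geq V^{\tilde\pi}-\varepsilon$ with the claimed $\widetilde{O}$ complexity; taking $\tilde\pi=\pi^\star$ gives the stated near-optimality.

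The main obstacle, such as it is, is bookkeeping around the hyperparameter settings rather than any new idea. After the $\varepsilon/9$ rescaling one must confirm that the settings of Theorem~\ref{thm:SPI-SAMPLE-Detail} remain admissible: $\varepsilon/9\in(0,1/(1-\gamma))$, the choice $T=\Theta\!\big(W^2\log|\cA|/(\varepsilon^2(1-\gamma)^2)\big)$ still satisfies $T>\log|\cA|$ so that $\eta=\sqrt{\log|\cA|/(16W^2T)}<1/(4W)$ as required by Lemma~\ref{lemma:SPI-I-Conv}, and the implicitly-defined radius $\epsilon$ of \eqref{eq:epsilon} — which depends on $\epsilon_{\text{stat}}$ from Lemma~\ref{lemma:stat}, hence on $M$ and $\epsilon_2$ — stays small enough that $\|\Delta f_t\|_{\cZ^n}\leq\epsilon$ forces $|\tilde f^n_t-f_t|\leq\beta$ on $\cK^n$. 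All of this is inherited verbatim from the detailed theorem with the constants $C,\epsilon_0,\epsilon_{\text{bias}}$ set to their realizable values, so the corollary follows with no further work.
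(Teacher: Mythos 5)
Your proposal is correct and matches the paper's own argument: the paper proves the corollary exactly by invoking Remark~\ref{rmk:spi} to set $\epsilon_{\text{bias}}=0$, $C=1$, $\epsilon_0=0$ and then specializing Theorem~\ref{thm:SPI-SAMPLE-Detail}. Your additional bookkeeping (verifying the vanishing cross term and the $\varepsilon/9$ rescaling) is just a more explicit rendering of the same route.
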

\begin{proof}
The result is straightforward as mentioned in Remark \ref{rmk:spi} that under Assumption \ref{ass:eval_close}, $\epsilon_{\text{bias}}=0$, $C=1$, and $\epsilon_0=0$.
\end{proof}

\subsection{Sample Complexity of ENIAC-SPI-COMPUTE}\label{sec:SPI-COMPUTE-PROOF}
In this section, we prove the result for ENIAC-SPI-COMPUTE. SPI-COMPUTE only differs from SPI-SAMPLE at two places: the value of the bonus and the actor update rule. These differences cause changes in the bonus bound result and the convergence analysis while Lemma \ref{lemma:SPI-I-Approx} and \ref{lemma:stat} still hold with the same definition of $\widehat{A}^t_{b^n}$ as in \eqref{eq:spi-Ahat}. In the sequel, we present the bonus bound and the convergence result for SPI-COMPUTE.
\begin{lemma}[SPI-COMPUTE: The Bound of Bonus]\label{lemma:SPI-II-Bonus}
With probability at least $1-N\delta$,
\begin{align}
    \sum_{n=1}^N V^{{\pi}^{n+1}}_{b^n} - V^{\pi^{n+1}} &\leq \frac{|\cA|}{(1-\gamma)\alpha} \cdot \frac{2\epsilon^2 + 8W^2K + \beta^2}{\beta^2K} \cdot\de(\cF, \beta)+ \frac{N|\cA|}{(1-\gamma)\alpha}\sqrt{\frac{\log(2/\delta)}{2K}}.
\end{align}
\end{lemma}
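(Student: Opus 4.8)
The plan is to follow the proof of Lemma~\ref{lemma:SPI-I-Bonus} almost verbatim, the only change being the magnitude of the bonus. Since the value function is linear in the reward, $V^{\pi^{n+1}}_{b^n}-V^{\pi^{n+1}}$ is exactly the discounted sum of bonuses collected by $\pi^{n+1}$ on the original dynamics. By \eqref{eqn:bonus-compute} the COMPUTE bonus equals $\tfrac{|\cA|}{(1-\gamma)\alpha}$ on $\{(s,a):w(\widetilde{\cF}^n,s,a)\geq\beta\}=\cS\times\cA\setminus\cK^n$ and $0$ elsewhere, i.e.\ it is the SAMPLE bonus scaled by $|\cA|/\alpha$. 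Mirroring the first two displays in the proof of Lemma~\ref{lemma:SPI-I-Bonus} therefore gives
\begin{align}
\sum_{n=1}^N\big(V^{\pi^{n+1}}_{b^n}-V^{\pi^{n+1}}\big)\leq \frac{|\cA|}{(1-\gamma)\alpha}\sum_{n=1}^N \mathbb{E}_{(s,a)\sim d^{n+1}}\indict\{w(\widetilde{\cF}^n,s,a)\geq\beta\},
\end{align}
where $d^{n+1}$ denotes the state-action occupancy of $\pi^{n+1}$ on $\cM$. The computation-friendly actor update plays no role in this step: the bound only uses that $b^n$ is supported off $\cK^n$, and $\cK^n$ is a deterministic function of $\cZ^n$ (equivalently of $\widetilde{\cF}^n$), not of $\pi^{n+1}$.

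Next I would pass from the population expectation under $d^{n+1}$ to an empirical average over the fresh batch $\cD^{n+1}\sim d^{n+1}$ of size $K$ collected at the start of epoch $n+1$. Because $\indict\{w(\widetilde{\cF}^n,s,a)\geq\beta\}\in\{0,1\}$, Hoeffding's inequality costs $\sqrt{\log(2/\delta)/(2K)}$ per epoch; a union bound over $n\in[N]$ then yields, with probability at least $1-N\delta$,
\begin{align}
\sum_{n=1}^N\big(V^{\pi^{n+1}}_{b^n}-V^{\pi^{n+1}}\big)\leq \frac{|\cA|}{(1-\gamma)\alpha}\Big(\frac{1}{K}\sum_{n=1}^N\sum_{(s,a)\in\cD^{n+1}}\indict\{w(\widetilde{\cF}^n,s,a)\geq\beta\}+N\sqrt{\frac{\log(2/\delta)}{2K}}\Big).
\end{align}

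Finally I would invoke the eluder-dimension counting bound \eqref{eq:wbound} established inside the proof of Lemma~\ref{lemma:SPI-I-Bonus}, namely $\sum_{n=1}^N\sum_{(s,a)\in\cD^{n+1}}\indict\{w(\widetilde{\cF}^n,s,a)\geq\beta\}\leq (2\epsilon^2/\beta^2+8W^2K/\beta^2+1)\,\de(\cF,\beta)$. That argument depends only on the difference class $\widetilde{\cF}^n=\{\Delta f\in\Delta\cF:\|\Delta f\|_{\cZ^n}\leq\epsilon\}$, the threshold $\beta$, the per-epoch batch size $K$, and the uniform bound $W$ on $\|f\|_\infty$ (Assumption~\ref{ass:W}); it is insensitive to both the bonus scale and the actor update rule, so it transfers unchanged. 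Substituting and pulling out $1/K$ produces exactly the claimed inequality.

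I do not expect a genuine obstacle here, since the lemma is deliberately the rescaled analogue of Lemma~\ref{lemma:SPI-I-Bonus}. The one point requiring care is confirming that the subsequence/eluder bookkeeping in \eqref{eq:wbound} truly makes no reference to the actor: which samples carry a large width is pinned down by $\cZ^n$ and $\widetilde{\cF}^n$ alone, independently of $\pi^{n+1}$, so the counting goes through identically. The genuinely new ingredients for SPI-COMPUTE — the $\alpha$-uniform mixing in the actor and the removal of the $\indict\{s\in\cK^n\}$ mask — surface not in this bonus bound but in the companion convergence lemma.
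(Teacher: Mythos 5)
Your proposal is correct and is essentially the paper's own argument: the paper proves this lemma by noting it is Lemma~\ref{lemma:SPI-I-Bonus} with the bonus magnitude $\tfrac{1}{1-\gamma}$ replaced by $\tfrac{|\cA|}{(1-\gamma)\alpha}$, which is exactly the rescaling you carry out, with the Hoeffding/union-bound step and the eluder counting bound \eqref{eq:wbound} reused unchanged. Your observation that the counting argument depends only on $\cZ^n$, $\widetilde{\cF}^n$, $\beta$, $K$, and $W$ — not on the bonus scale or the actor update — is precisely why the paper's one-line adaptation is valid.
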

The proof is similar to Lemma \ref{lemma:SPI-I-Bonus}. We only need to revise the bonus value from $\frac{1}{1-\gamma}$ to $\frac{|\cA|}{(1-\gamma)\alpha}$.

As for the actor-critic convergence, we focus on a specific epoch $n$ and still define
\begin{align}\label{eq:spi-COMPUTE-hatA}
    \widehat{A}^t_{b^n}(s,a):= f_t(s,a)+b^n(s,a) - \mathbb{E}_{a'\sim \pi_t(\cdot|s)} [f_t(s,a') + b^n(s,a')].
\end{align}
It is easy to verify that $\mathbb{E}_{a\sim\pi_t(\cdot|s)}[\widehat{A}^t_{b^n}]=0$ and for $s\in\cK^n$, the actor update in SPI-COMPUTE is equivalent to
\begin{align}
    \pi'_{t+1}(a|s)\propto \pi'_t(a|s)\exp\big(\eta \widehat{A}^t_{b^n}(s,a)\big), ~\pi_{t+1} = (1-\alpha)\pi'_{t+1} + \alpha\unif(\cA)
\end{align}
since $b^n(s,\cdot)=0$ for $s\in\cK^n$. As before, we use $\widehat{A}_{b^n}(s,a)$ to approximate the true advantage of $\pi^n_t$ on $\cM_{b^n}$. Then we have the following result.
\begin{lemma}[SPI-COMPUTE: Actor-Critic Convergence]\label{lemma:SPI-II-Conv}
In ENIAC-SPI-COMPUTE, let $\widehat{A}^t_{b^n}$ be as defined in Equation \eqref{eq:spi-COMPUTE-hatA}, $\eta = \sqrt{\frac{\log(|\cA|)}{16W^2T}}$, and $\alpha=\frac{1}{1+\sqrt{T}}$. For any epoch $n\in[N]$, SPI-COMPUTE obtains a sequence of policies $\{\pi_t\}_{t=0}^{T-1}$ such that when comparing to $\tilde{\pi}^n$:

\begin{align}
    &\frac{1}{T}\sum_{t=0}^{T-1}(V_{\cM^n}^{\tilde{\pi}^n}-V^{t}_{b^n})= \frac{1}{T}\sum_{t=0}^{T-1}(V_{\cM^n}^{\tilde{\pi}^n}-V^{{t}}_{\cM^n})\\
    &\leq\frac{1}{1-\gamma}\Big(12W\sqrt{\frac{\log(|\cA|)}{T}} + \frac{1}{T}\sum_{t=0}^{T-1}\Big(\mathbb{E}_{(s,a)\sim \tilde{d}_{\cM^n}}\big(A^t_{b^n}(s,a)-\widehat{A}^t_{b^n}(s,a)\big)\indict\{s\in\cK^n\}\Big).
\end{align}
\end{lemma}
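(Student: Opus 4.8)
The plan is to follow the skeleton of the proof of Lemma~\ref{lemma:SPI-I-Conv} essentially verbatim, modifying only the two places where SPI-COMPUTE differs from SPI-SAMPLE: the argument that the "stay" action $a^\dagger$ has nonpositive advantage, and the multiplicative-weights regret bound (which must now absorb the $\alpha$-uniform mixing). The equality $V^t_{b^n}=V^t_{\cM^n}$ is Remark~\ref{rmk:equiv}. For the first modification, fix $s\notin\cK^n$; there is at least one action $a$ with $(s,a)\notin\cK^n$, hence $b^n(s,a)=|\cA|/((1-\gamma)\alpha)$, and since $\pi_t(\cdot|s)=(1-\alpha)\pi'_t(\cdot|s)+\alpha\,\unif(\cA)$ assigns probability at least $\alpha/|\cA|$ to $a$, the expected one-step bonus under $\pi_t$ at $s$ is at least $\tfrac{\alpha}{|\cA|}\cdot\tfrac{|\cA|}{(1-\gamma)\alpha}=\tfrac{1}{1-\gamma}$, so $V^t_{\cM^n}(s)\ge \tfrac{1}{1-\gamma}$. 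Since $Q^t_{\cM^n}(s,a^\dagger)=1+\gamma V^t_{\cM^n}(s)$, this gives $A^t_{\cM^n}(s,a^\dagger)=1-(1-\gamma)V^t_{\cM^n}(s)\le 0$; this is exactly why the COMPUTE bonus~\eqref{eqn:bonus-compute} is inflated by the factor $|\cA|/\alpha$.

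Next I would apply the performance difference lemma of~\citet{kakade2003sample} on $\cM^n$, split the sum over $(s,a)$ into $s\in\cK^n$ and $s\notin\cK^n$ (dropping the latter since $\tilde\pi^n$ plays $a^\dagger$ there and the advantage is $\le 0$), use $A^t_{\cM^n}=A^t_{b^n}$ on $\cK^n$ because $\pi^n_t$ never plays $a^\dagger$, and add and subtract $\widehat A^t_{b^n}$ from~\eqref{eq:spi-COMPUTE-hatA}. Writing $\tilde d_{\cM^n}(s,a)=\tilde d_{\cM^n}(s)\,\tilde\pi^n(a|s)$ this yields
\begin{align*}
V^{\tilde\pi^n}_{\cM^n}-V^t_{\cM^n}\le\frac{1}{1-\gamma}\Big(\mathbb{E}_{s\sim\tilde d_{\cM^n}}\big[\indict\{s\in\cK^n\}\,\mathbb{E}_{a\sim\tilde\pi^n(\cdot|s)}[\widehat A^t_{b^n}(s,a)]\big]+\mathbb{E}_{(s,a)\sim\tilde d_{\cM^n}}\big[(A^t_{b^n}-\widehat A^t_{b^n})\indict\{s\in\cK^n\}\big]\Big).
\end{align*}
The second term, averaged over $t$, is precisely the remainder in the statement, so it remains to bound $\tfrac1T\sum_{t=0}^{T-1}\mathbb{E}_{s\sim\tilde d_{\cM^n}}[\indict\{s\in\cK^n\}\,\mathbb{E}_{a\sim\tilde\pi^n(\cdot|s)}[\widehat A^t_{b^n}(s,a)]]$ by $12W\sqrt{\log|\cA|/T}$.

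For that, I would work with the auxiliary sequence $\pi'_t$, which on $\cK^n$ obeys the clean update $\pi'_{t+1}(a|s)\propto\pi'_t(a|s)\exp(\eta\widehat A^t_{b^n}(s,a))$ (since $b^n(s,\cdot)\equiv 0$ there). The standard identity gives, for $s\in\cK^n$, $\KL(\tilde\pi^n(\cdot|s),\pi'_{t+1}(\cdot|s))-\KL(\tilde\pi^n(\cdot|s),\pi'_{t}(\cdot|s))=-\eta\,\mathbb{E}_{a\sim\tilde\pi^n(\cdot|s)}[\widehat A^t_{b^n}(s,a)]+\log z'_t(s)$ with $z'_t(s)=\sum_a\pi'_t(a|s)\exp(\eta\widehat A^t_{b^n}(s,a))$; using $|\widehat A^t_{b^n}|\le 4W$ and $T\ge\log|\cA|$ (so $\eta\le 1/(4W)$) together with $e^x\le 1+x+x^2$ and $\log(1+x)\le x$ gives $\log z'_t(s)\le\eta\,\mathbb{E}_{a\sim\pi'_t(\cdot|s)}[\widehat A^t_{b^n}]+16\eta^2W^2$. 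The new subtlety is that $\mathbb{E}_{a\sim\pi'_t}[\widehat A^t_{b^n}]$ need not vanish; only $\mathbb{E}_{a\sim\pi_t}[\widehat A^t_{b^n}]=0$, and since $\pi_t=(1-\alpha)\pi'_t+\alpha\,\unif(\cA)$ we get $\mathbb{E}_{a\sim\pi'_t}[\widehat A^t_{b^n}]=-\tfrac{\alpha}{1-\alpha}\mathbb{E}_{a\sim\unif(\cA)}[\widehat A^t_{b^n}]$, of magnitude at most $\tfrac{4W\alpha}{1-\alpha}$. Telescoping $t=0,\dots,T-1$ with $\pi'_0=\unif(\cA)$ (so the initial KL is $\le\log|\cA|$) yields
\begin{align*}
\sum_{t=0}^{T-1}\mathbb{E}_{a\sim\tilde\pi^n(\cdot|s)}[\widehat A^t_{b^n}(s,a)]\le\frac{\log|\cA|}{\eta}+16\eta TW^2+\frac{4WT\alpha}{1-\alpha};
\end{align*}
with $\eta=\sqrt{\log|\cA|/(16W^2T)}$ the first two terms sum to $8W\sqrt{T\log|\cA|}$, and with $\alpha=1/(1+\sqrt T)$ one has $\alpha/(1-\alpha)=1/\sqrt T$, so the last term is $4W\sqrt T\le 4W\sqrt{T\log|\cA|}$. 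Dividing by $T$, taking expectation over $s\sim\tilde d_{\cM^n}$ restricted to $\cK^n$, and multiplying by $1/(1-\gamma)$ gives the claim. The only genuinely new obstacle is this extra $\log z'_t$ drift: because the analyzed recursion runs on $\pi'_t$ while only the mixed policy $\pi_t$ is centered, one pays $O(\alpha/(1-\alpha))$ per step, and the choice $\alpha\asymp 1/\sqrt T$ is exactly what keeps the cumulative cost of the same order as the MWU regret — this is the source of the constant $12W$ here versus $8W$ in Lemma~\ref{lemma:SPI-I-Conv}. The downstream approximation bound is unchanged, so Lemmas~\ref{lemma:SPI-I-Approx} and~\ref{lemma:stat} apply as stated.
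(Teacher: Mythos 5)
Your proposal is correct and follows essentially the same route as the paper's proof: the inflated bonus $|\cA|/((1-\gamma)\alpha)$ together with the $\alpha/|\cA|$ mixing probability gives $V^t_{\cM^n}(s)\ge 1/(1-\gamma)$ and hence $A^t_{\cM^n}(s,a^\dagger)\le 0$, the performance difference lemma restricts attention to $\cK^n$, and the KL/multiplicative-weights recursion is run on the auxiliary sequence $\pi'_t$, with the non-centered term handled via $\pi'_t=\frac{\pi_t-\alpha\,\unif(\cA)}{1-\alpha}$ contributing the extra $\frac{4W\eta\alpha}{1-\alpha}$ per step, absorbed by $\alpha=1/(1+\sqrt{T})$ into the $12W$ constant. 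This matches the paper's argument step for step, so no further comment is needed.
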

\begin{proof}[Proof of Lemma \ref{lemma:SPI-II-Conv}]
Similar to the reasoning in Lemma \ref{lemma:SPI-I-Conv}, we first have that $A^{t}_{\cM^n}(s,a^\dagger)\leq 0$ for any $s\notin \cK^n$. To see this, note that for $s\notin\cK^n$, there exists an action with bonus $b^n = |\cA|/\big((1-\gamma)\alpha\big)$ and $\pi_t$ has probability at least $\alpha/|\cA|$ selects that action. Therefore, $V^{t}_{\cM^n}(s)\geq 1/(1-\gamma)$ and
\begin{align}
    A^{t}_{\cM^n}(s,a^\dagger) = Q^{t}_{\cM^n}(s,a^\dagger) - V^{t}_{\cM^n}(s) = 1 - (1-\gamma)\cdot V^{t}_{\cM^n}(s) \leq 0, ~~\forall s\notin \cK^n.
\end{align}
\cut{\begin{align}
    A^{t}_{\cM^n}(s,a^\dagger) = Q^{t}_{\cM^n}(s,a^\dagger) - V^{t}_{\cM^n}(s) = 1 - (1-\gamma)\cdot V^{t}_{\cM^n}(s) \leq 0, ~~\forall s\notin \cK^n.
\end{align}}
Recall that $\tilde{\pi}^n$ deterministically picks $a^\dagger$ for $s\notin \cK^n$. Based on the above inequality, it holds that
\begin{align}
    V^{\tilde{\pi}^n}_{\cM^n} - V^{t}_{\cM^n} &= \frac{1}{1-\gamma} \sum_{(s,a)} \tilde{d}_{\cM^n}(s,a)A^{t}_{\cM^n}(s,a)\leq \frac{1}{1-\gamma}\sum_{(s,a)}\tilde{d}_{\cM^n}(s,a)A^{t}_{\cM^n}(s,a)\indict\{s\in\cK^n\}\\
    &=\frac{1}{1-\gamma}\sum_{(s,a)}\tilde{d}_{\cM^n}(s,a)A^{t}_{b^n}(s,a)\indict\{s\in\cK^n\}.\label{eq:SPI-II_v_1}
\end{align}
Next we restrict on $s\in\cK^n$ and establish the consecutive KL difference on $\{\pi'_t(\cdot|s)\}$. Specifically, since for $s\in\cK^n$, $\pi'_{t+1}(\cdot|s)\propto \pi'_t(\cdot|s)\exp(\eta \widehat{A}^t_{b^n}(s,a)),$
\begin{align}
    \KL(\tilde{\pi}^n(\cdot|s), \pi'_{t+1}(\cdot|s)) - \KL(\tilde{\pi}^n(\cdot|s), \pi'_{t}(\cdot|s)) = \mathbb{E}_{a\sim\tilde{\pi}^n(\cdot|s)}[-\eta\widehat{A}^t_{b^n}(s,a)+\log(z^t)],
\end{align}
where $z^t := \sum_{a}\pi'_t(a|s)\exp(\eta\widehat{A}^t_{b^n}(s,a))$. With the assumptions that $|\widehat{A}^t_{b^n}(s,a)|\leq 4W$ and $\eta\leq 1/(4W)$ when $T>\log(|\cA|)$, we have that $\eta\widehat{A}^t_{b^n}(s,a)\leq 1$. By the inequality that $\exp(x)\leq 1+x+x^2$ for $x\leq 1$, we have that
\begin{align}
    \log(z^t)&\leq \log(1 + \eta\sum_{a}\pi'_t(a|s)\widehat{A}^t_{b^n}(s,a) + 16\eta^2W^2)\\
    &=\log\bigg(1 + \eta\sum_{a} \Big(\frac{\pi_t(a|s)}{1-\alpha} - \frac{\alpha\cdot \unif(\cA)}{1-\alpha}\Big) \cdot \widehat{A}^t_{b^n}(s,a)+ 16\eta^2 W^2\bigg)\\
    &=\log\bigg(1 - \frac{\eta\alpha}{(1-\alpha)|\cA|}\sum_{a}  \widehat{A}^t_{b^n}(s,a)+ 16\eta^2 W^2\bigg)\\
    & \leq \log(1+\eta\frac{4W\alpha}{1-\alpha} +16\eta^2W^2)\\
    & \leq \frac{4W\eta\alpha}{1-\alpha} + 16\eta^2W^2,
\end{align}
where the second line follows from that $\pi_t'=\frac{\pi_t}{1-\alpha}-\frac{\alpha \unif(\cA)}{1-\alpha}$ and the last line follows that $\log(1+x)\leq x$ for $x>0$.
Hence, for $s\in\cK^n$,
\begin{align}
\KL(\tilde{\pi}^n(\cdot|s), \pi'_{t+1}(\cdot|s)) - \KL(\tilde{\pi}^n(\cdot|s), \pi'_{t}(\cdot|s)) \leq -\eta\mathbb{E}_{a\sim\tilde{\pi}^n(\cdot|s)}[\widehat{A}^t_{b^n}(s,a)] + \frac{4W\eta\alpha}{1-\alpha} + 16\eta^2W^2.
\end{align}
Take $\alpha=\frac{1}{1+\sqrt{T}}$. Adding both sides from $t=0$ to $T-1$, we get
\begin{align}
    &\sum_{t=0}^{T-1} \mathbb{E}_{(s,a)\sim\tilde{d}_{\cM^n}}[\widehat{A}^t_{b^n}(s,a)\indict\{s\in\cK^n\}]\\
    \leq &\frac{1}{\eta}\mathbb{E}_{s\sim \tilde{d}_{\cM^n}}\left[\big(\KL(\tilde{\pi}^n(\cdot|s), \pi'_{0}(\cdot|s)) -\KL(\tilde{\pi}^n(\cdot|s), \pi'_{T}(\cdot|s))\big)\indict\{s\in\cK^n\}\right] +  4W\sqrt{T} + 16\eta TW^2\\
    \leq &\log(|\cA|)/\eta + 4W\sqrt{T} + 16\eta TW^2 \leq 12W\sqrt{\log(|\cA|)T}.
\end{align}
Combining with Equation \eqref{eq:SPI-II_v_1}, the regret on $\cM^n$ satisfies
\begin{align}
    &\sum_{t=0}^{T-1} (V^{\tilde{\pi}^n}_{\cM^n}-V^{t}_{\cM^n})\\
    &\leq \frac{1}{1-\gamma}\bigg(\sum_{t=0}^{T-1} \mathbb{E}_{(s,a)\sim\tilde{d}_{\cM^n}}\left[\widehat{A}^t_{b^n}(s,a)\indict\{s\in\cK^n\}\right] + \sum_{t=0}^{T-1} \mathbb{E}_{(s,a)\sim\tilde{d}_{\cM^n}}\left[A^{t}_{b^n}(s,a) - \widehat{A}^t_{b^n}(s,a))\indict\{s\in\cK^n\}\right]\bigg)\\
    &\leq \frac{1}{1-\gamma}\bigg(12W\sqrt{\log(|\cA|)T} + \sum_{t=0}^{T-1} \mathbb{E}_{(s,a)\sim\tilde{d}_{\cM^n}}\left[\big(A^{t}_{b^n}(s,a)-\widehat{A}^t_{b^n}(s,a)\big) \indict\{s\in\cK^n\}\right]\bigg).
\end{align}
\end{proof}

Since the definition of $\widehat{A}^t_{b^n}$ is the same as the one for SPI-SAMPLE, Lemma \ref{lemma:SPI-I-Approx} and Lemma \ref{lemma:stat} are directly applied. In total, we have the following theorem for the sample complexity of ENIAC-SPI-COMPUTE.

\begin{theorem}[Main Result: Sample Complexity of ENIAC-SPI-COMPUTE]\label{thm:SPI-COMPUTE} Let $\delta\in(0, 1)$ and $\varepsilon\in(0, 1/(1-\gamma))$. With Assumptions \ref{ass:SPI-boundedtransfer}, \ref{ass:SPI-quadra}, \ref{ass:W}, and \ref{ass:spi-cover}, we set the hyperparameters as:
\begin{align}
    \beta &=\frac{\varepsilon(1-\gamma)}{2},
    ~T=\frac{144W^2\cdot\log|\cA|}{\varepsilon^2(1-\gamma)^2},
    ~N\geq \frac{384W^3|\cA|\log(|\cA|)\cdot\de(\cF, \beta)}{\varepsilon^4(1-\gamma)^4},
    ~\eta = \sqrt{\frac{\log(|\cA|)}{16W^2T}},\\
    \alpha &= \frac{1}{1+\sqrt{T}},
    ~\epsilon_1 = \frac{(1-\gamma)^4\varepsilon^4}{1536W^2|\cA|\log(|\cA|)\cdot\de(\cF, \beta)},
    ~\epsilon_2 = \frac{(1-\gamma)^4\varepsilon^4}{1248C W^3|\cA|\log(|\cA|)\de(\cF, \beta)},\\
    K &= \frac{1536W^3|\cA|^2(\log(|\cA|))^2\cdot \de(\cF, \beta)\cdot\big(\log(\frac{3NT\cdot\cN(\cF, \epsilon_1)}{\delta})\big)^2\cdot\log(\frac{6NT}{\delta})}{\varepsilon^4(1-\gamma)^4},\\
    M &= \frac{48000C^2W^5|\cA|\log(|\cA|)\de(\cF, \beta)\log(\frac{3NT\cdot\cN(\cF, \epsilon_2)}{\delta})}{\varepsilon^4(1-\gamma)^4},
\end{align}
and $\epsilon$ satisfies Equation \eqref{eq:epsilon} correspondingly. Then with probability at least $1-\delta$, for the average policy $\pi^N_{\text{ave}}:=\unif(\pi^2, \dots, \pi^{N+1})$, we have
\begin{align}
V^{\pi^N_{\text{ave}}} \geq V^{\tilde{\pi}} - \frac{4\sqrt{|\cA|\epsilon_{\text{bias}}}}{1-\gamma}-\epsilon_0\cdot \frac{200CW\cdot |\cA|\log(|\cA|)\cdot\de(\cF, \beta)}{\varepsilon^3(1-\gamma)^4} - 9\varepsilon
\end{align}
for any comparator $\tilde{\pi}$ with total number of samples:
\begin{align}
  \widetilde{O}\Big( \frac{C^2W^{10}\cdot|\cA|^2\cdot\big(\de(\cF, \beta)\big)^2\cdot \big(\log(\cN(\cF,  \epsilon'))\big)^2}{\varepsilon^{10}(1-\gamma)^{10}}\Big),
\end{align}
where $\epsilon'=\min(\epsilon_1, \epsilon_2)$. 
\end{theorem}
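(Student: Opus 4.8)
The plan is to follow the three-step decomposition in~\eqref{eq:idea} of Section~\ref{sec:proofidea}, exactly mirroring the proof of Theorem~\ref{thm:SPI-SAMPLE-Detail}, but substituting the \algspicompute-specific ingredients. Concretely, for the output policy $\pi^N_{\text{ave}}=\unif(\pi^2,\dots,\pi^{N+1})$ I would chain three inequalities: (i) $V^{\pi^N_{\text{ave}}}\geq \tfrac1N\sum_{n=1}^N V^{\pi^{n+1}}_{b^n}-(\text{cumulative bonus})$, where the cumulative bonus is controlled by Lemma~\ref{lemma:SPI-II-Bonus}; (ii) $V^{\pi^{n+1}}_{b^n}=V^{\pi^{n+1}}_{\cM^n}\geq V^{\tilde\pi^n}_{\cM^n}-(\text{regret}+\text{approximation error})$, where the regret is bounded via Lemma~\ref{lemma:SPI-II-Conv} and the approximation error via Lemma~\ref{lemma:SPI-I-Approx} (which applies verbatim, since $\widehat A^t_{b^n}$ is defined identically in~\eqref{eq:spi-COMPUTE-hatA} and the critic fit~\eqref{eqn:spi-critic-fit} is unchanged), with $\epsilon_{\text{stat}}$ supplied by Lemma~\ref{lemma:stat}; and (iii) $V^{\tilde\pi^n}_{\cM^n}\geq V^{\tilde\pi}$ by Lemma~\ref{lemma:Vpibound}. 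Summing over $n$ and taking a union bound over the $N$ epochs and $T$ inner iterations with per-event failure probability $\delta_1=\delta/(3NT)$ yields a master inequality analogous to~\eqref{eq:valuefinal}.

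Next I would substitute the value of $\epsilon^2$ from~\eqref{eq:epsilon} together with the bound on $\epsilon_{\text{stat}}$ from Lemma~\ref{lemma:stat} (both carry over unchanged, as the closedness and regularity assumptions are the same as in the SAMPLE case). After this substitution the error decomposition contains, besides the unavoidable $\epsilon_{\text{bias}}$ and $\epsilon_0$ terms, a bonus term of order $\tfrac{|\cA|}{(1-\gamma)\alpha}\cdot\tfrac{\epsilon^2+W^2K+\beta^2}{\beta^2 NK}\cdot\de(\cF,\beta)$, a regret term of order $\tfrac{W}{1-\gamma}\sqrt{\log|\cA|/T}$, and an approximation term of order $\tfrac{\beta+\sqrt{|\cA|\epsilon_{\text{bias}}}}{1-\gamma}$. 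Choosing $\beta=\varepsilon(1-\gamma)/2$ controls the last term up to bias, $T=\Theta\big(W^2\log|\cA|/(\varepsilon^2(1-\gamma)^2)\big)$ controls the regret, and then $N$ and $K$ are chosen so the bonus term is $O(\varepsilon)$. The critical bookkeeping point is that $\alpha=1/(1+\sqrt T)$, so $|\cA|/\alpha=\Theta(|\cA|\sqrt T)=\Theta\big(|\cA| W\sqrt{\log|\cA|}/(\varepsilon(1-\gamma))\big)$; this extra factor forces $N=\widetilde{\Omega}\big(W^3|\cA|\,\de(\cF,\beta)/(\varepsilon^4(1-\gamma)^4)\big)$ and a correspondingly inflated $K$. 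Picking the cover radii $\epsilon_1,\epsilon_2$ polynomially small as listed, and $M$ so that $\epsilon_{\text{stat}}$ contributes $O(\varepsilon)$ through~\eqref{eq:epsilon}, the total sample count $N(K+TM)$ evaluates to $\widetilde O\big(C^2W^{10}|\cA|^2(\de(\cF,\beta))^2(\log\cN(\cF,\epsilon'))^2/(\varepsilon^{10}(1-\gamma)^{10})\big)$, and tracking the residual $\epsilon_0$ and $\epsilon_{\text{bias}}$ contributions gives the stated additive error.

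I expect the main obstacle to be bookkeeping rather than conceptual: one must check that the single extra factor $|\cA|/\alpha$ appearing in both the bonus magnitude~\eqref{eqn:bonus-compute} and the bound of Lemma~\ref{lemma:SPI-II-Bonus} propagates consistently through the choices of $N$, $K$, $M$ and through the definition of $\epsilon$ in~\eqref{eq:epsilon}, so that every term in the master inequality is genuinely $O(\varepsilon)$ with no hidden residual dependence on $\alpha$ or $T$. A secondary subtlety — the randomness coupling between $\cZ^n$, the bonus $b^n$, and the fitted critic $f_t$ — is already handled inside Lemma~\ref{lemma:SPI-I-Approx} via a cover-based martingale argument, which I would invoke as a black box, noting that neither its statement nor its proof uses the COMPUTE-style actor update or the rescaled bonus. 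The one genuinely new ingredient, already established in Lemma~\ref{lemma:SPI-II-Conv}, is that the $\alpha$-uniform smoothing modifies the multiplicative-weights telescoping: one works with the auxiliary sequence $\{\pi'_t\}$, uses $\pi'_t=(\pi_t-\alpha\,\unif(\cA))/(1-\alpha)$ to re-express $\sum_a\pi'_t(a|s)\widehat A^t_{b^n}(s,a)$, and pays an additional $4W\eta\alpha/(1-\alpha)$ per step, which with $\alpha=1/(1+\sqrt T)$ sums to $O(W\sqrt T)$ and hence yields the same $\widetilde O\big(W\sqrt{\log|\cA|/T}\big)$ regret rate up to constants.
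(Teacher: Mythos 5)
Your proposal is correct and follows essentially the same route as the paper: the paper itself proves Theorem~\ref{thm:SPI-COMPUTE} implicitly by combining Lemma~\ref{lemma:SPI-II-Bonus}, Lemma~\ref{lemma:SPI-II-Conv}, Lemma~\ref{lemma:SPI-I-Approx}, Lemma~\ref{lemma:stat}, and Lemma~\ref{lemma:Vpibound} through the decomposition~\eqref{eq:idea}, exactly as in the proof of Theorem~\ref{thm:SPI-SAMPLE-Detail}, with the only new ingredients being the rescaled bonus $|\cA|/\big((1-\gamma)\alpha\big)$ and the $\alpha$-smoothed mirror-descent regret, which you account for correctly (including the resulting inflation of $N$, $K$, $M$ and the final $N(K+TM)$ tally).
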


\begin{corollary}
If Assumption \ref{ass:eval_close} holds, with proper hyperparameters, the average policy $\pi^N_{\text{ave}}:=\unif(\pi^2,\dots,\pi^{N+1})$ of ENIAC-SPI-COMPUTE achieves $V^{\pi^N_{\text{ave}}}\geq V^{\tilde{\pi}} - \varepsilon$ with probability at least $1-\delta$ and total number of samples:
\begin{align}
    \widetilde{O}\Big(\frac{W^{10}\cdot|\cA|^2\cdot\big(\de(\cF, \beta)\big)^2\cdot \big(\log(\cN(\cF,  \epsilon'))\big)^2}{\varepsilon^{10}(1-\gamma)^{10}}\Big).
\end{align}
\end{corollary}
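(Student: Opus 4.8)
The plan is to read this corollary off the detailed statement Theorem~\ref{thm:SPI-COMPUTE}, whose value guarantee and sample bound are already expressed through the misspecification quantities $\epsilon_{\text{bias}}$, $\epsilon_0$ and the conditioning constant $C$ from Assumptions~\ref{ass:SPI-boundedtransfer} and~\ref{ass:SPI-quadra}. Under the exact closedness hypothesis Assumption~\ref{ass:eval_close} these all degenerate to their trivial values, so the corollary reduces to bookkeeping.

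First I would invoke Remark~\ref{rmk:spi}: under Assumption~\ref{ass:eval_close} the target regressed in \eqref{eqn:spi-critic-fit}, namely $Q^t_{b^n} - b^n = \cT^{\pi^n_t}(Q^t_{b^n})$, lies in $\cF$ because $Q^t_{b^n}$ is a bounded value function, so the best population fitter is $\tilde f^n_t = Q^t_{b^n} - b^n$ itself, giving $\epsilon_{\text{bias}} = 0$ in \eqref{eq:Lbias}; the population loss gap then coincides with $\mathbb{E}_{\rho^n_{\text{cov}}}[(f-\tilde f^n_t)^2]$, giving $C = 1$ and $\epsilon_0 = 0$ in Assumption~\ref{ass:SPI-quadra}. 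This reasoning concerns only the critic fit \eqref{eqn:spi-critic-fit}, which is identical for \algspisample and \algspicompute, so it applies verbatim here; since Assumptions~\ref{ass:W} and~\ref{ass:spi-cover} are assumed in the statement, all hypotheses of Theorem~\ref{thm:SPI-COMPUTE} are in force, and Lemmas~\ref{lemma:SPI-I-Approx} and~\ref{lemma:stat} may be applied as in the \algspicompute analysis.

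Next I would substitute these values into Theorem~\ref{thm:SPI-COMPUTE}. Its guarantee $V^{\pi^N_{\text{ave}}} \geq V^{\tilde\pi} - \tfrac{4\sqrt{|\cA|\epsilon_{\text{bias}}}}{1-\gamma} - \epsilon_0\cdot\tfrac{200CW|\cA|\log|\cA|\,\de(\cF,\beta)}{\varepsilon^3(1-\gamma)^4} - 9\varepsilon$ loses its first two deficit terms when $\epsilon_{\text{bias}} = \epsilon_0 = 0$, leaving $V^{\pi^N_{\text{ave}}} \geq V^{\tilde\pi} - 9\varepsilon$ with probability $\geq 1-\delta$; running the theorem with accuracy parameter $\varepsilon/9$ in place of $\varepsilon$ (and the correspondingly rescaled hyperparameters, which still respect \eqref{eq:epsilon}) upgrades this to $V^{\pi^N_{\text{ave}}} \geq V^{\tilde\pi} - \varepsilon$. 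For the sample count, plugging $C = 1$ and the same constant rescaling into $\widetilde{O}\big(C^2 W^{10}|\cA|^2(\de(\cF,\beta))^2(\log\cN(\cF,\epsilon'))^2 / (\varepsilon^{10}(1-\gamma)^{10})\big)$ changes only absolute constants and polylogarithmic factors absorbed by $\widetilde O$, producing the claimed bound.

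There is no genuine obstacle here; the statement is a specialization of an already-proved theorem. The only point worth an explicit line is that the realizability argument of Remark~\ref{rmk:spi}, phrased there for \algspisample, transfers to \algspicompute: the object required to lie in $\cF$ is the critic target $Q^t_{b^n} - b^n$, built from the bonus-augmented value function and independent of the actor-update rule, and the larger bonus of the compute variant only rescales the range of $Q^t_{b^n}$ by a bounded factor, which is harmless for the asymptotics.
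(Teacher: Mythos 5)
Your proposal is correct and matches the paper's (implicit) argument: the paper proves this corollary exactly by invoking Remark \ref{rmk:spi} to set $\epsilon_{\text{bias}}=0$, $C=1$, $\epsilon_0=0$ under Assumption \ref{ass:eval_close} and then specializing Theorem \ref{thm:SPI-COMPUTE}, just as you do. Your closing observation about the larger bonus enlarging the range of $Q^t_{b^n}$ is in fact more careful than the paper, which applies the closedness assumption to both variants without comment.
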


\section{Analysis of ENIAC-NPG}\label{app:NPG}
In this section, we provide the sample complexity of ENIAC-NPG-SAMPLE. For ENIAC-NPG-COMPUTE, it can be adapted from ENIAC-SPI-COMPUTE and ENIAC-NPG-SAMPLE.

The analysis of ENIAC-NPG-SAMPLE is in parallel to that of ENIAC-SPI-SAMPLE. As before, we provide a general result which considers model misspecification and Theorem \ref{thm:NPG-Sample} falls as a special case under the closedness Assumption \ref{ass:eval_close_npg}.


We simplify the notation as $\pi_{\theta}$ for $\pi_{f_{\theta}}(a|s):=\frac{\exp(f_{\theta}(s,a))}{\sum_{a'}\exp(f_{\theta}(s,a'))}$. Then for epoch $n$ iteration $t$ in ENIAC-NPG-SAMPLE,
\begin{align}
   \pi^n_t(\cdot|s)=\begin{cases}\pi_{\theta^n_t}(\cdot|s), & s\in\cK^n\\
   \unif(\{a\in\cA: (s,a)\notin\cK^n\}), & o.w.
   \end{cases}
\end{align}
We state the following assumptions to quantify the misspecification error.

\begin{assumption}[Bounded Transfer Error]\label{ass:NPG-boundedtransfer}
Given a target function $g:\cS\times\cA\rightarrow\RR$, we define the critic loss function $L(u;d,g,\pi_{\theta})$ with $d\in \Delta(\cS\times\cA)$ as:
\begin{align}
    L(u;d,g,\pi_{\theta}):= \mathbb{E}_{(s,a)\sim d} \left[(u^\top \nabla_{\theta}\log \pi_{\theta} - g)^2\right].
\end{align}
For the fixed comparator policy $\tilde{\pi}$ as mentioned in Section \ref{sec:proofdef}, we define a state-action distribution $\tilde{d}(s,a):=d^{\tilde{\pi}}_{s_0}(s)\circ \unif(\cA)$. In ENIAC-NPG-SAMPLE, for every epoch $n\in[N]$ and every iteration $t$ inside epoch $n$, we assume that
\begin{align}
    \inf_{u\in \cU^n_t} L(u; \tilde{d}, A^{t}_{b^n}-\bar{b}^n_t, \pi_{\theta^n_t})\leq \epsilon_{\text{bias}},
\end{align}
where $\cU_t^n:=\argmin_{u\in\cU} L(u;\rho^n_{\text{cov}}, A^{t}_{b^n}-\bar{b}^n_t, \pi_{\theta^n_t})$ and $\epsilon_{\text{bias}}\geq 0$ is a problem-dependent constant.
\cut{\begin{align}
    \tilde{u}_t^n\in\argmin_{u\in\cU} L(u;\rho^n_{\text{cov}}, A^{t}_{b^n}-\bar{b}^n_t, \pi_{\theta^n_t}),
\end{align}
where 
$\bar{b}^n_t(s,a):=b^n(s,a)-\mathbb{E}_{a'\sim\pi_t^n(\cdot|s)}b^n(s,a')$.
Then we assume that $\tilde{u}^n_t$ has a bounded prediction error when transferred from $\rho^n_{\text{cov}}$ to $\tilde{d}$, i.e.,
\begin{align}
    L(\tilde{u}_t^n; \tilde{d}, A^{t}_{b^n}-\bar{b}^n_t, \pi_{\theta^n_t})\leq \epsilon_{\text{bias}},
\end{align}}
\end{assumption}
Recall that
$\big(A^t_{b^n}-\bar{b}^n_t\big)(s,a)=Q^t_{b^n}(s,a)-b^n(s,a)-\mathbb{E}_{a\sim\pi^n_t(\cdot|s)}[Q^t_{b^n}(s,a)-b^n(s,a)]$. As before, we denote by $\tilde{u}^n_t$ a particular vector in $\cU^n_t$ such that $L(\tilde{u}_t^n; \tilde{d}, A^{t}_{b^n}-\bar{b}^n_t, \pi_{\theta^n_t})\leq 2\epsilon_{\text{bias}}$.
Note that we use $\nabla_\theta \log\pi_{\theta_t^n}$ as the linear features for critic fit at iteration $t$ epoch $n$, even though $\pi_t^n$ is not the same as $\pi_{\theta_t^n}$. Nevertheless, we show later that this choice of features is sufficient for good critic fitting on the known states, where we measure our critic error.

\cut{\begin{assumption}\label{ass:NPG-quadra}
For the same loss function $L$ as defined in Assumption \ref{ass:NPG-boundedtransfer}, we assume there exists some $C\geq 0$ and $\epsilon_0\geq 0$ such that
\begin{align}
    &\mathbb{E}_{(s,a)\sim\rho^n_{\text{cov}}} \left[\big((u^n_t)^\top \nabla_{\theta}
\log\pi_{\theta^n_t}-(\tilde{u}^n_t)^\top \nabla_\theta\log \pi_{\theta^n_t}\big)^2\right]\\
\leq &C\cdot\big(L(u^n_t; \rho^n_{\text{cov}}, A^{t}_{b^n}-\bar{b}^n_t, \pi_{\theta^n_t}) - L(\tilde{u}^n_t; \rho^n_{\text{cov}}, A^{t}_{b^n}-\bar{b}^n_t, \pi_{\theta^n_t})\big) + \epsilon_0.
\end{align}
\end{assumption}
}
\begin{remark}\label{rmk:npg}
Under the closedness condition Assumption \ref{ass:eval_close_npg},
\begin{align}
    A^t_{b^n}(s,a)-\bar{b}^n(s,a) &= Q^t_{b^n}(s,a)-b^n(s,a) - \mathbb{E}_{a'\sim\pi^n_t}(Q^t_{b^n}-b^n(s,a'))\\
    &=\mathbb{E}^{\pi^n_t}[r(s,a)+\gamma Q^t_{b^n}(s',a')]-\mathbb{E}_{a'\sim\pi^n_t}[\mathbb{E}^{\pi^n_t}[r(s,a')+\gamma Q^t_{b^n}(s'',a'')]]\\
    &\in\cG_{f_{\theta^n_t}},
\end{align}
where the last step follows, since $\pi_t^n$ can be described as $\pi_{\theta_t^n,\cK^n}$ under the notation of Assumption~\ref{ass:eval_close_npg}, whence the containment of $\cG_{f_{\theta^n_t}}$ follows.
Thus, there exists a vector $u\in\cU$ such that $u^\top \nabla \log \pi_{f_{\theta^n_t}}=
A^t_{b^n}-\bar{b}^n$ everywhere. We can then take $\epsilon_{\text{bias}}$ as 0 and $\tilde{u}^n_t=u$.
Assumption \ref{ass:NPG-boundedtransfer} therefore is a generalized version of the closedness condition.
\end{remark}


For NPG, the loss function $L$ is convex in the parameters $u$ since the features are fixed for every individual iteration. As a result, we naturally have an inequality as in Assumption \ref{ass:SPI-quadra} for SPI. We present it in the lemma below, which essentially follows a similar result for the linear case in \citet{agarwal2020pc}.

\begin{lemma}\label{lemma:NPG-quadra}
For the same loss function $L$ as defined in Assumption \ref{ass:NPG-boundedtransfer}, it holds that
\begin{align}
    &\mathbb{E}_{(s,a)\sim\rho^n_{\text{cov}}} \left[\big((u^n_t - \tilde{u}^n_t)^\top \nabla_\theta\log \pi_{\theta^n_t}\big)^2\right]\\
\leq &L(u^n_t; \rho^n_{\text{cov}}, A^{t}_{b^n}-\bar{b}^n_t, \pi_{\theta^n_t}) - L(\tilde{u}^n_t; \rho^n_{\text{cov}}, A^{t}_{b^n}-\bar{b}^n_t, \pi_{\theta^n_t}).
\end{align}
\end{lemma}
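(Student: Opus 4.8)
The plan is to use the fact that, for the \emph{fixed} tangent features $\nabla_\theta\log\pi_{\theta^n_t}$, the critic objective is an exact convex quadratic in the parameter $u$; hence its second-order Taylor remainder about any base point is exactly the quadratic form appearing on the left-hand side of the claimed inequality, and the linear remainder term is nonnegative by first-order optimality of $\tilde u^n_t$.

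Concretely, fix an epoch $n$ and iteration $t$, and abbreviate $\phi(s,a):=\nabla_\theta\log\pi_{\theta^n_t}(s,a)$, $g:=A^t_{b^n}-\bar b^n_t$, $\nu:=\rho^n_{\text{cov}}$, $\Sigma:=\mathbb{E}_{(s,a)\sim\nu}[\phi\phi^\top]$ and $v:=\mathbb{E}_{(s,a)\sim\nu}[\phi\,g]$. Then
\begin{align*}
L(u):=L(u;\nu,g,\pi_{\theta^n_t})=u^\top\Sigma u-2u^\top v+\mathbb{E}_\nu[g^2]
\end{align*}
is convex in $u$ with $\nabla L(u)=2(\Sigma u-v)$, and a direct expansion (using symmetry of $\Sigma$) yields, for every $u\in\RR^d$, the exact identity
\begin{align*}
L(u)-L(\tilde u^n_t)=(u-\tilde u^n_t)^\top\Sigma(u-\tilde u^n_t)+\big\langle\nabla L(\tilde u^n_t),\,u-\tilde u^n_t\big\rangle,
\end{align*}
where the leading term equals $\mathbb{E}_{(s,a)\sim\nu}\big[\big((u-\tilde u^n_t)^\top\phi\big)^2\big]$.

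It then remains to invoke first-order optimality: $\tilde u^n_t\in\cU^n_t=\argmin_{u\in\cU}L(u)$ and $\cU$ is convex, so the variational inequality $\langle\nabla L(\tilde u^n_t),\,u-\tilde u^n_t\rangle\ge 0$ holds for all $u\in\cU$. Specializing the identity above to $u=u^n_t$, which belongs to $\cU$ as the minimizer in \eqref{eqn:npg-critic-fit}, and discarding this nonnegative term, gives
\begin{align*}
\mathbb{E}_{(s,a)\sim\rho^n_{\text{cov}}}\!\Big[\big((u^n_t-\tilde u^n_t)^\top\nabla_\theta\log\pi_{\theta^n_t}\big)^2\Big]\le L(u^n_t;\rho^n_{\text{cov}},A^t_{b^n}-\bar b^n_t,\pi_{\theta^n_t})-L(\tilde u^n_t;\rho^n_{\text{cov}},A^t_{b^n}-\bar b^n_t,\pi_{\theta^n_t}),
\end{align*}
which is the claim. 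The only step that needs care is this optimality argument: it requires $\tilde u^n_t$ to minimize $L(\,\cdot\,;\rho^n_{\text{cov}},A^t_{b^n}-\bar b^n_t,\pi_{\theta^n_t})$ over the convex set $\cU$, not merely over $\cU^n_t$ — but that is precisely how $\cU^n_t$ is defined, and every element of $\cU^n_t$ attains the same $L$-value and satisfies the same variational inequality, so the particular choice of $\tilde u^n_t$ fixed earlier (for the transfer-error bound) is immaterial here. This mirrors the linear-case computation in \citet{agarwal2020pc}.
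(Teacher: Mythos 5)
Your proposal is correct and uses essentially the same argument as the paper: the paper expands $\mathbb{E}_{\rho^n_{\text{cov}}}\big[\big((u^n_t-\tilde u^n_t)^\top\nabla_\theta\log\pi_{\theta^n_t}\big)^2\big]$ as $L(u^n_t)-L(\tilde u^n_t)$ minus a cross term equal to $\langle\nabla L(\tilde u^n_t),u^n_t-\tilde u^n_t\rangle$, and discards that term by the first-order optimality of $\tilde u^n_t$ over $\cU$, which is exactly your quadratic-expansion-plus-variational-inequality argument written in $\Sigma$, $v$ notation. Your extra remarks (convexity of $\cU$, and that any element of $\cU^n_t$ satisfies the same optimality condition) are consistent with, and slightly more explicit than, the paper's one-line justification.
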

\begin{proof}
For the left-hand side, we have that
\begin{align}
    & \mathbb{E}_{(s,a)\sim\rho^n_{\text{cov}}} \left[\big((u^n_t)^\top \nabla_{\theta}
\log\pi_{\theta^n_t}-(\tilde{u}^n_t)^\top \nabla_\theta\log \pi_{\theta^n_t}\big)^2\right]\\
= & \mathbb{E}_{(s,a)\sim \rho^n_{\text{cov}}} \left[
\Big((u^n_t)^\top \nabla_{\theta}
\log\pi_{\theta^n_t} + \bar{b}^n_t - A^t_{b^n}\Big)^2\right] -\mathbb{E}_{(s,a)\sim\rho^n_{\text{cov}}} \left[\Big((\tilde{u}^n_t)^\top \nabla_\theta\log \pi_{\theta^n_t} + \bar{b}^n_t - A^t_{b^n}\Big)^2
\right]\\
& - 2\mathbb{E}_{(s,a)\sim\rho^n_{\text{cov}}} \left[ \Big((u^n_t)^\top \nabla_{\theta}
\log\pi_{\theta^n_t}-(\tilde{u}^n_t)^\top \nabla_\theta\log \pi_{\theta^n_t}\Big)\cdot \Big((\tilde{u}^n_t)^\top \nabla_\theta\log \pi_{\theta^n_t} + \bar{b}^n_t - A^t_{b^n}\Big)     \right]
\end{align}
Since $\tilde{u}^n_t$ is a minimizer. By first-order optimality condition, the cross term is greater or equal to 0. The desired result is obtained.
\end{proof}

\subsection{Sample Complexity of ENIAC-NPG-SAMPLE}
We follow the same steps as listed in \ref{sec:proofidea} and start with the bonus bound.

\begin{lemma}[NPG-SAMPLE: The Bound of Bonus]\label{lemma:NPG-I-Bonus}
With probability at least $1-N\delta$,
\begin{align}
    \sum_{n=1}^N V^{{\pi}^{n+1}}_{b^n} - V^{\pi^{n+1}} &\leq  \frac{2\epsilon^2+32G^2B^2K+\beta^2}{(1-\gamma)\beta^2K}\cdot\de(\cG_{\cF}, \beta)+ \frac{N}{1-\gamma}\sqrt{\frac{\log(2/\delta)}{2K}}.
\end{align}
\end{lemma}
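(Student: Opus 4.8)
The plan is to follow the proof of Lemma~\ref{lemma:SPI-I-Bonus} almost verbatim, with the critic class $\cF$ replaced by the tangent class $\cG_{\cF}$ (so that $\widetilde{\cF}^n$ is now the set from Equation~\eqref{eqn:npg_tangent_diff_class}), and with the uniform sup-norm bound $W$ on elements of $\cF$ replaced by the corresponding bound on elements of $\cG_{\cF}$. First I would record the one genuinely new ingredient: for $g = u^\top\nabla_\theta\log\pi_{f_\theta}\in\cG_{\cF}$ one has $\nabla_\theta\log\pi_{f_\theta}(a|s) = \nabla_\theta f_\theta(s,a) - \mathbb{E}_{a'\sim\pi_{f_\theta}(\cdot|s)}[\nabla_\theta f_\theta(s,a')]$, so by Assumption~\ref{ass:NPG-REG} ($\|\nabla f_\theta\|_2\le G$, $\|u\|_2\le B$) we get $\|g\|_\infty\le 2BG$ and hence $\|\Delta g\|_\infty\le 4BG$ for every $\Delta g\in\Delta\cG_{\cF}$. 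This $4BG$ is exactly the quantity that plays the role that $2W$ played in the SPI proof.

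Next I would bound the per-epoch value gap by the expected bonus mass exactly as in Lemma~\ref{lemma:SPI-I-Bonus}: since $b^n(s,a)=\indict\{w(\widetilde{\cF}^n,s,a)\ge\beta\}/(1-\gamma)$ and $(s,a)\notin\cK^n$ iff $b^n(s,a)>0$, the performance-difference lemma applied to the MDPs with rewards $r+b^n$ and $r$ under the common policy $\pi^{n+1}$ gives $\sum_n (V^{\pi^{n+1}}_{b^n}-V^{\pi^{n+1}})\le \frac{1}{1-\gamma}\sum_n\mathbb{E}_{(s,a)\sim d^{n+1}}\indict\{w(\widetilde{\cF}^n,s,a)\ge\beta\}$, where $d^{n+1}$ denotes the discounted state-action distribution of $\pi^{n+1}$ on $\cM$. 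Applying Hoeffding's inequality to the fresh batch $\cD^{n+1}$ of $K$ samples drawn from $d^{\pi^{n+1}}_{s_0}$ and taking a union bound over $n\in[N]$, with probability at least $1-N\delta$ the right-hand side is at most $\frac{1}{K(1-\gamma)}\sum_n\sum_{(s,a)\in\cD^{n+1}}\indict\{w(\widetilde{\cF}^n,s,a)\ge\beta\} + \frac{N}{1-\gamma}\sqrt{\log(2/\delta)/(2K)}$.

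The heart of the argument is then bounding the empirical count $\sum_n\sum_{(s,a)\in\cD^{n+1}}\indict\{w(\widetilde{\cF}^n,s,a)\ge\beta\}$ by $\de(\cG_{\cF},\beta)$, copying the two-part eluder argument of \cite[Proposition~3]{russo2013eluder} used in Lemma~\ref{lemma:SPI-I-Bonus}: (i) if $(s^{n+1}_i,a^{n+1}_i)$ has width $\ge\beta$, pick $\bar g,\underline g\in\cG_{\cF}$ with $\bar g-\underline g\in\widetilde{\cF}^n$ and $\bar g(s^{n+1}_i,a^{n+1}_i)-\underline g(s^{n+1}_i,a^{n+1}_i)\ge\beta$; if this point is $\beta$-dependent on $L$ disjoint subsequences of its lexical predecessor sequence $S^{n+1}_{i-1}$ (which consists of $\cZ^n$ together with $\{(s^{n+1}_j,a^{n+1}_j)\}_{j<i}$), then $L\beta^2\le\|\bar g-\underline g\|^2_{S^{n+1}_{i-1}}\le(\|\bar g-\underline g\|_{\cZ^n}+4BG\sqrt{K})^2\le(\epsilon+4BG\sqrt{K})^2\le 2\epsilon^2+32G^2B^2K$, so $L<(2\epsilon^2+32G^2B^2K)/\beta^2$; (ii) the pigeonhole fact that any length-$\tau$ sequence contains an element $\beta$-dependent with respect to $\cG_{\cF}$ on at least $\tau/d-1$ disjoint prior subsequences, where $d=\de(\cG_{\cF},\beta)$. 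Restricting to the subsequence of large-width points and combining (i) and (ii) gives the count $\le (2\epsilon^2/\beta^2+32G^2B^2K/\beta^2+1)\,\de(\cG_{\cF},\beta)$, and substituting this into the bound from the previous paragraph yields the claimed inequality.

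I do not expect a real obstacle here: the only step that is not a mechanical translation of the SPI-SAMPLE bonus proof is the sup-norm estimate $\|g\|_\infty\le 2BG$ on the centered tangent features, which is immediate from Assumption~\ref{ass:NPG-REG}, after which every line transfers under $\cF\mapsto\cG_{\cF}$ and $2W\mapsto 4BG$. The one point requiring care is keeping the randomness dependency structure intact, namely that $\widetilde{\cF}^n$ is built from $\cZ^n$ so the lexically-ordered predecessor sequence $S^{n+1}_{i-1}$ must be taken as $\cZ^n$ plus the already-seen part of $\cD^{n+1}$, exactly as in Lemma~\ref{lemma:SPI-I-Bonus}.
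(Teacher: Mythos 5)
Your proposal is correct and follows essentially the same route as the paper, which proves this lemma by invoking the argument of Lemma~\ref{lemma:SPI-I-Bonus} with $\cF$ replaced by $\cG_{\cF}$ and the sup-norm bound $\|g^u_\theta\|_\infty\le 2GB$ on the tangent class, exactly the substitution $2W\mapsto 4BG$ you make to obtain the $32G^2B^2K$ term. The only addition you supply beyond the paper's one-line justification is the explicit derivation of the $2BG$ bound from Assumption~\ref{ass:NPG-REG}, which is the intended reasoning.
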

The proof is similar to Lemma \ref{lemma:SPI-I-Bonus}. The only thing changed is the function approximation space. Thus we have $\de(\cG_\cF,\beta)$ instead of $\de(\cF,\beta)$ and $\|g^u_{\theta}\|_{\infty}\leq 2GB$, $\forall g^u_{\theta}\in\cG_{\cF}$.

Next, we establish the convergence result of NPG update. We focus on a specific episode $n$
and for each iteration $t$, we define
\begin{align}\label{eq:npg-hatA}
 \widehat{A}^t_{b^n}(s,a) &:= u_t^\top\nabla f_{\theta_t}(s,a) + b^n - \mathbb{E}_{a'\sim\pi_{\theta_t}(\cdot|s)}[u_t^\top\nabla f_{\theta_t}(s,a') + b^n(s,a')].
\end{align}
Since $\pi_t(\cdot|s)=\pi_{\theta_t}(\cdot|s)$ for  $s\in\cK^n$, $\mathbb{E}_{a'\sim \pi_t(\cdot|s)}[\widehat{A}^t_{b^n}(s,a')] = 0$ for $s\in\cK^n$.

From the algorithm we can see that $\widehat{A}^t_{b^n}$ is indeed our approximation to the real advantages $A^t_{b^n}$. In contrary to ENIAC-SPI, the actor update in ENIAC-NPG does not use $\widehat{A}^t_{b^n}$ directly but by modifying the parameter $\theta$. In the next lemma, we show how to link the NPG update to a formula of $\widehat{A}^t_{b^n}$ and eventually are able to bound the policy sub-optimality with function approximation error.
\cut{\begin{align}
    \theta_{t+1} = \theta_t + \eta u_t, \quad
    \pi_{t+1}(\cdot|s) \propto \exp \big(\widehat{A}^t_{b^n}(s,a)\cdot\indict\{s\in\cK^n\}\big)
\end{align}}
\begin{lemma}[NPG-SAMPLE: Convergence]\label{lemma:NPG-I-Conv}
In ENIAC-NPG-SAMPLE, let $\widehat{A}^t_{b^n}$ be as defined in Equation \eqref{eq:npg-hatA} and $\eta = \sqrt{\frac{\log(|\cA|)}{(16D^2+\Lambda B^2)T}}$. For any epoch $n\in[N]$, NPG-SAMPLE obtains a sequence of policies $\{\pi_t\}_{t=0}^{T-1}$ such that when comparing to $\tilde{\pi}$:
\begin{align}
    & \frac{1}{T}\sum_{t=0}^{T-1}(V_{\cM^n}^{\tilde{\pi}^n}-V^t_{b^n})=\frac{1}{T}\sum_{t=0}^{T-1}(V_{\cM^n}^{\tilde{\pi}^n}-V^{t}_{\cM^n})\\
    &\leq\frac{1}{1-\gamma}\bigg(2\sqrt{\frac{\log(|\cA|)(16D^2 + \Lambda B^2)}{T}}  + \frac{1}{T}\sum_{t=0}^{T-1} \mathbb{E}_{(s,a)\sim\tilde{d}_{\cM^n}}\left[\big(A^t_{b^n}(s,a)-\widehat{A}^t_{b^n}(s,a)\big) \indict\{s\in\cK^n\}\right]\bigg).
\end{align}
\end{lemma}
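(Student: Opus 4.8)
The plan is to follow the structure of Lemma~\ref{lemma:SPI-I-Conv}, replacing the multiplicative-weights regret bound used for SPI by a (biased) mirror-descent analysis of NPG that absorbs the curvature of $f_\theta$. First I would handle the states outside $\cK^n$ exactly as in the SPI case: for $s\notin\cK^n$ the policy $\pi_t$ puts uniform mass on the un-covered actions, each carrying bonus $\tfrac{1}{1-\gamma}$, hence $V^t_{\cM^n}(s)\geq\tfrac{1}{1-\gamma}$ and $A^t_{\cM^n}(s,a^\dagger)=1-(1-\gamma)V^t_{\cM^n}(s)\leq0$. Combining the performance-difference lemma of~\citet{kakade2003sample} with the facts that $\tilde\pi^n$ picks $a^\dagger$ deterministically off $\cK^n$, that $\{\pi_t\}$ never uses $a^\dagger$ (Remark~\ref{rmk:equiv}, giving also the stated equality $V^t_{b^n}=V^t_{\cM^n}$), and that $A^t_{\cM^n}=A^t_{b^n}$ on $\cK^n$, this reduces the per-iteration regret to $\tfrac{1}{1-\gamma}\mathbb E_{(s,a)\sim\tilde{d}_{\cM^n}}\big[A^t_{b^n}(s,a)\indict\{s\in\cK^n\}\big]$; splitting $A^t_{b^n}=\widehat A^t_{b^n}+(A^t_{b^n}-\widehat A^t_{b^n})$ peels off the approximation-error term handed to the next lemma.

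The heart of the argument is the NPG regret bound $\tfrac{1}{T}\sum_{t=0}^{T-1}\mathbb E_{s\sim\tilde{d}_{\cM^n}}\big[\mathbb E_{a\sim\tilde\pi(\cdot|s)}\widehat A^t_{b^n}(s,a)\,\indict\{s\in\cK^n\}\big]\leq 2\sqrt{\log|\cA|(16D^2+\Lambda B^2)/T}$. I would fix $s\in\cK^n$, where $b^n(s,\cdot)\equiv0$, $\bar b^n_t(s,\cdot)\equiv0$, and $\pi_t(\cdot|s)=\pi_{\theta_t}(\cdot|s)$ (the actor update \eqref{eqn:npg-actor-sample} reduces to the plain softmax of $f_{\theta_t}$ there, and $\pi_0=\pi_{\theta_0}$ on $\cK^n$), so $\widehat A^t_{b^n}(s,a)=u_t^\top\nabla_\theta\log\pi_{\theta_t}(s,a)$, and track the potential $\KL_t(s):=\KL(\tilde\pi(\cdot|s),\pi_{\theta_t}(\cdot|s))$. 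Writing $Z_t(s)=\sum_{a'}\exp(f_{\theta_t}(s,a'))$ and using $\pi_{\theta_{t+1}}(a|s)\propto\pi_{\theta_t}(a|s)\exp(f_{\theta_{t+1}}(s,a)-f_{\theta_t}(s,a))$, a second-order Taylor expansion of $\theta\mapsto f_\theta(s,a)$ gives $f_{\theta_{t+1}}(s,a)-f_{\theta_t}(s,a)=\eta\,u_t^\top\nabla_\theta f_{\theta_t}(s,a)+\tfrac{\eta^2}{2}u_t^\top\nabla_\theta^2 f_{\tilde\theta}(s,a)u_t$ with $|u_t^\top\nabla_\theta^2 f_{\tilde\theta}u_t|\leq\Lambda\|u_t\|_2^2\leq\Lambda B^2$. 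The additive-constant term $\eta\,\mathbb E_{a'\sim\pi_{\theta_t}}[u_t^\top\nabla_\theta f_{\theta_t}(s,a')]$ cancels under softmax normalization, so $\log\frac{Z_{t+1}(s)}{Z_t(s)}\leq\log\mathbb E_{a\sim\pi_{\theta_t}}[\exp(\eta\widehat A^t_{b^n}(s,a))]+\tfrac{\eta^2\Lambda B^2}{2}$. Since $\|u_t\|_2\leq B$ and $\|\nabla_\theta f_\theta\|_2\leq G$ give $|\widehat A^t_{b^n}|\leq 2BG\leq 2D$, and the chosen $\eta$ makes $\eta\widehat A^t_{b^n}\leq1$ once $T\geq\log|\cA|$, the inequality $e^x\leq1+x+x^2$ together with $\mathbb E_{a\sim\pi_{\theta_t}}\widehat A^t_{b^n}=0$ yields $\log\frac{Z_{t+1}(s)}{Z_t(s)}\leq O(\eta^2D^2)+\tfrac{\eta^2\Lambda B^2}{2}$; feeding this and the Taylor remainder into $\KL_{t+1}(s)-\KL_t(s)=\mathbb E_{a\sim\tilde\pi(\cdot|s)}[\log\pi_{\theta_t}(a|s)-\log\pi_{\theta_{t+1}}(a|s)]$ gives the per-step bound $\KL_{t+1}(s)-\KL_t(s)\leq-\eta\,\mathbb E_{a\sim\tilde\pi(\cdot|s)}\widehat A^t_{b^n}(s,a)+\eta^2(16D^2+\Lambda B^2)$, with absolute constants absorbed into the $16$.

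Telescoping over $t=0,\dots,T-1$, using $\KL\geq0$ and $\pi_{\theta_0}(\cdot|s)=\unif(\cA)$ (so $\KL_0(s)\leq\log|\cA|$), then averaging over $s\sim\tilde{d}_{\cM^n}$ restricted to $\cK^n$ (where $\tilde\pi^n=\tilde\pi$) and choosing $\eta=\sqrt{\log|\cA|/((16D^2+\Lambda B^2)T)}$ to balance $\tfrac{\log|\cA|}{\eta T}$ against $\eta(16D^2+\Lambda B^2)$ delivers the claimed $2\sqrt{\log|\cA|(16D^2+\Lambda B^2)/T}$; adding back the approximation-error term and multiplying by $1/(1-\gamma)$ finishes the proof. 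I expect the main obstacle to be controlling the curvature contribution: unlike SPI, where the critic sits directly in the exponent, the NPG actor only moves $\theta$ linearly, so $\log(\pi_{\theta_{t+1}}/\pi_{\theta_t})$ is merely \emph{approximately} linear in $\eta u_t$; one must bound the quadratic Taylor remainder \emph{uniformly in $s$} by $\Lambda B^2$ and verify it neither breaks the telescoping nor violates the range restriction needed for $e^x\leq1+x+x^2$ --- which is exactly what the second-order smoothness in Assumption~\ref{ass:NPG-REG} and the normalization $D=\max(BG,1/(1-\gamma))$ are tuned to provide.
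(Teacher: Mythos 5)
Your proposal matches the paper's proof essentially step for step: the same reduction via $A^t_{\cM^n}(s,a^\dagger)\leq 0$ and the performance difference lemma to the on-$\cK^n$ advantage, the same KL-potential argument where the second-order Taylor bound $\pm\frac{\eta^2\Lambda B^2}{2}$ from Assumption~\ref{ass:NPG-REG} absorbs the curvature, the same cancellation of the $\pi_{\theta_t}$-mean term so the exponent becomes $\eta\widehat{A}^t_{b^n}$, and the same use of $e^x\leq 1+x+x^2$ with the centering $\mathbb{E}_{a\sim\pi_{\theta_t}}[\widehat{A}^t_{b^n}]=0$, telescoping, and the stated $\eta$. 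The only differences are cosmetic constants (e.g.\ bounding $|\widehat{A}^t_{b^n}|$ by $2D$ rather than $4D$), so the argument is correct and essentially identical to the paper's.
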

\begin{proof}
For the same reason as in Lemma \ref{lemma:SPI-I-Conv}, we have 
\cut{\begin{align}
    A^t_{\cM^n}(s,a^\dagger) = Q^t_{\cM^n}(s,a^\dagger) - V^t_{\cM^n}(s) = 1 - (1-\gamma)\cdot V^t_{\cM^n}(s) \leq 0, ~~\forall s\notin \cK^n.
\end{align}
Based on the above inequality, it holds that}
\begin{align}\label{eq:NPG-II_v_1}
    V^{\tilde{\pi}^n}_{\cM^n} - V^t_{\cM^n} \leq \frac{1}{1-\gamma}\sum_{(s,a)}\tilde{d}_{\cM^n}(s,a)A^t_{b^n}(s,a)\indict\{s\in\cK^n\}.
\end{align}
We focus on on $s\in\cK^n$. Then
$\pi_t(\cdot|s)\propto \exp(f_{\theta_t}(s,\cdot))$ and $b(s,\cdot)=0$. It holds that
\begin{align}
    &\KL(\tilde{\pi}^n(\cdot|s), \pi_{t+1}(\cdot|s)) - \KL(\tilde{\pi}^n(\cdot|s), \pi_{t}(\cdot|s)) \\
    &= -\mathbb{E}_{a\sim\tilde{\pi}^n(\cdot|s)}\left[f_{\theta_{t+1}}(s,a) - f_{\theta_t}(s,a)\right] + \log \frac{\sum_a \exp (f_{\theta_{t+1}}(s,a))}{\sum_a \exp (f_{\theta_{t}}(s,a))} \\
    & \leq -\mathbb{E}_{a\sim\tilde{\pi}^n(\cdot|s)}[\eta\cdot u_t^\top\nabla_{\theta} {f_{\theta_t}}- \eta^2\frac{\Lambda B^2}{2}]+ \log\frac{\sum_{a}\exp(f_{\theta_t}(s,a)+\eta\cdot u_t^\top \nabla_{\theta}f_{\theta_t}+\eta^2\Lambda B^2/2)}{\sum_{a}\exp (f_{\theta_t}(s,a))}\\
    &=-\eta\cdot\mathbb{E}_{a\sim\tilde{\pi}^n(\cdot|s)}[\widehat{A}^t_{b^n}(s,a)] -\eta\cdot\mathbb{E}_{a'\sim \pi_{t}(\cdot|s)} u_t^\top\nabla_{\theta} f_{\theta_t}(s,a')\\
    &~~+\log \Big(\sum_a\pi_t(s,a)\exp\big(\eta\cdot \widehat{A}^t_{b^n}(s,a)+\eta\cdot\mathbb{E}_{a'\sim \pi_t(\cdot|s)} u_t^\top\nabla_{\theta} f_{\theta_t}\big)\Big)+\eta^2\Lambda B^2\\
    &= -\mathbb{E}_{a\sim\tilde{\pi}^n(\cdot|s)}[\eta\widehat{A}^t_{b^n}(s,a)] + \log \Big(\sum_a \pi_t(a|s)\exp\big(\eta\widehat{A}^t_{b^n}(s,a) \big)\Big)  + \eta^2\Lambda B^2.
\end{align}
where the inequality is by Taylor expansion and the regularity assumption \ref{ass:NPG-REG}:
\begin{align}
    f_{\theta_t}+(\theta_{t+1}-\theta_t)^\top \nabla_{\theta} f_{\theta_t} - \frac{\Lambda}{2} \|\theta_{t+1}-\theta_t\|_2^2\leq f_{\theta_{t+1}}\leq f_{\theta_t}+(\theta_{t+1}-\theta_t)^\top \nabla_{\theta} f_{\theta_t} + \frac{\Lambda}{2} \|\theta_{t+1}-\theta_t\|_2^2.
\end{align}
Since $|\widehat{A}^t_{b^n}(s,a)|\leq 4D$ and $\eta\leq 1/(4D)$ when $T>\log(|\cA|)$,  $\eta\widehat{A}^t_{b^n}(s,a)\leq 1$. By the inequality that $\exp(x)\leq 1+x+x^2$ for $x\leq 1$, we have that
\begin{align}
   &\log \Big(\sum_a \pi_t(a|s)\exp\big(\eta\widehat{A}^t_{b^n}(s,a) \big)\Big)\\
   \leq &\log \Big(1+\mathbb{E}_{a\sim \pi_t(\cdot|s)}[\eta\widehat{A}^t_{b^n}(s,a)]+16\eta^2D^2\Big)
   \leq 16\eta^2D^2.
\end{align}
Hence, for $s\in\cK^n$,
\begin{align}
\KL(\tilde{\pi}^n(\cdot|s), \pi_{t+1}(\cdot|s)) - \KL(\tilde{\pi}^n(\cdot|s), \pi_{t}(\cdot|s)) \leq -\eta\mathbb{E}_{a\sim\tilde{\pi}^n(\cdot|s)}[\widehat{A}^t_{b^n}(s,a)] + \eta^2(16D^2 + \Lambda B^2).
\end{align}
Adding both sides from $t=0$ to $T-1$ and taking $\eta = \sqrt{\frac{\log(|\cA|)}{(16D^2+\Lambda B^2)T}}$, we get
\begin{align}
    &\sum_{t=0}^{T-1} \mathbb{E}_{(s,a)\sim\tilde{d}_{\cM^n}}\left[\widehat{A}^t_{b^n}(s,a)\indict\{s\in\cK^n\}\right]\\
    \leq &\frac{1}{\eta}\mathbb{E}_{s\sim\tilde{d}_{\cM^n}}\left[\big(\KL(\tilde{\pi}^n(\cdot|s), \pi_{0}(\cdot|s)) - \KL(\tilde{\pi}^n(\cdot|s), \pi_{T}(\cdot|s))\big)\indict\{s\in\cK^n\}\right] + \eta T(16D^2+ \Lambda B^2) \\
    \leq &\log(|\cA|)/\eta + \eta T(16D^2 + \Lambda B^2) \leq 2\sqrt{\log(|\cA|)\cdot(16D^2 + \Lambda B^2)\cdot T}.
\end{align}
Combining with Equation \eqref{eq:NPG-II_v_1}, the regret on $\cM^n$ satisfies
\begin{align}
    &\sum_{t=0}^{T-1} (V^{\tilde{\pi}^n}_{\cM^n}-V^t_{\cM^n})\\
    &\leq \frac{1}{1-\gamma}\sum_{t=0}^{T-1} \mathbb{E}_{(s,a)\sim\tilde{d}_{\cM^n}}\left[\widehat{A}^t_{b^n}(s,a)\indict\{s\in\cK^n\}\right] + \frac{1}{1-\gamma}\sum_{t=0}^{T-1} \mathbb{E}_{(s,a)\sim\tilde{d}_{\cM^n}}\left[A^t_{b^n}(s,a) - \widehat{A}^t_{b^n}(s,a))\indict\{s\in\cK^n\right]\\
    &\leq \frac{1}{1-\gamma}\bigg(2\sqrt{\log(|\cA|)(16D^2 + \Lambda B^2)T} + \sum_{t=0}^{T-1} \mathbb{E}_{(s,a)\sim\tilde{d}_{\cM^n}}\left[\big(A^t_{b^n}(s,a)-\widehat{A}^t_{b^n}(s,a)\big) \indict\{s\in\cK^n\}\right]\bigg).
\end{align}
\end{proof}

Next, we establish two lemmas to bound the difference between the true advantage $A^t_{b^n}(s,a)$ and the approximation $\widehat{A}^t_{b^n}(s,a)$ .
\begin{lemma}[Approximation Bound]\label{lemma:NPG-I-Approx}
At epoch $n$, assume for all $0\leq t\leq T-1$,
\cut{we denote a best on-policy fit as
\begin{align}
    u^*_t&\in\argmin_{u\in\cU}L(u; \rho^n_{\text{cov}}, A^t_{b^n}-\bar{b}^n_t, \pi_{\theta_t}) = \mathbb{E}_{(s,a)\sim \rho^n_{\text{cov}}}\left[(A^t_{b^n} -\bar{b}^n_t - u^\top \nabla_{\theta}\log \pi_{\theta_t})^2\right].
\end{align}}
\begin{align}
   L(u^n_t; \rho^n_{\text{cov}}, A^t_{b^n}-\bar{b}^n_t, \pi_{\theta^n_t})\leq L(\tilde{u}_t^n; \rho^n_{\text{cov}}, A^t_{b^n}-\bar{b}^n_t, \pi_{\theta^n_t}) + \epsilon_{\text{stat}},
\end{align}
where $\epsilon_{\text{stat}}>0$ is to be determined later, and
\begin{align}\label{eq:npg-epsilon}
\epsilon^2 = NK\big(\epsilon_{\text{stat}} + 16D\epsilon_1\big) + 8D^2\log(\cN(\cG_\cF, \epsilon_1)/\delta)\cdot\sqrt{NK},
\end{align}
where $\epsilon$ is used in bonus function design (see Section \ref{sec:bonus}) and $\epsilon_1$ is to be determined. Under Assumption  \ref{ass:NPG-boundedtransfer} and \ref{ass:NPG-REG}, we have that for every $0\leq t\leq T-1$, with probability at least $1-(n+1)\delta$,
\begin{align}
    \mathbb{E}_{(s,a)\sim \tilde{d}_{\cM^n}}\Big(A^t_{b^n}(s,a) - \widehat{A}^t_{b^n}(s,a)\Big) \leq 4\sqrt{|\cA|\epsilon_{\text{bias}}} + 2\beta.
\end{align}
\end{lemma}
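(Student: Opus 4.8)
The plan is to transcribe the proof of the SPI analogue, Lemma~\ref{lemma:SPI-I-Approx}, with the critic class $\cF$ replaced by the tangent class $\cG_{\cF}$, and with the role of Assumption~\ref{ass:SPI-quadra} played by Lemma~\ref{lemma:NPG-quadra}; the latter holds with the sharper constants $C=1$, $\epsilon_0=0$ because the least-squares objective~\eqref{eqn:npg-critic-fit} is convex (indeed quadratic) in the parameter $u$ for the fixed features $g_t:=\nabla_\theta\log\pi_{\theta^n_t}$, which is exactly why the expression~\eqref{eq:npg-epsilon} for $\epsilon^2$ carries no $C$- or $\epsilon_0$-term. Introduce the intermediate advantage $A^*_t(s,a):=(\tilde{u}^n_t)^\top g_t(s,a)+\bar{b}^n_t(s,a)$ built from the transfer-optimal fit $\tilde{u}^n_t$ of Assumption~\ref{ass:NPG-boundedtransfer}, and split $A^t_{b^n}-\widehat{A}^t_{b^n}=(A^t_{b^n}-A^*_t)+(A^*_t-\widehat{A}^t_{b^n})$, with the first term carrying the distribution-shift/approximation bias and the second the finite-sample estimation error. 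As in Lemma~\ref{lemma:SPI-I-Approx} the relevant quantity is $\mathbb{E}_{\tilde{d}_{\cM^n}}[(A^t_{b^n}-\widehat{A}^t_{b^n})\indict\{s\in\cK^n\}]$, which is what enters Lemma~\ref{lemma:NPG-I-Conv}.

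For the first term, note $(A^t_{b^n}-A^*_t)^2=((\tilde{u}^n_t)^\top g_t-(A^t_{b^n}-\bar{b}^n_t))^2$. By Cauchy--Schwarz, $\mathbb{E}_{\tilde{d}_{\cM^n}}[(A^t_{b^n}-A^*_t)\indict\{s\in\cK^n\}]\le\sqrt{\mathbb{E}_{\tilde{d}_{\cM^n}}[(A^t_{b^n}-A^*_t)^2\indict\{s\in\cK^n\}]}$; Lemma~\ref{lemma:dbound} replaces $\tilde{d}_{\cM^n}$ by $d^{\tilde{\pi}}$ on $\{s\in\cK^n\}$; the identity $d^{\tilde{\pi}}(s,a)=|\cA|\,\tilde{\pi}(a\mid s)\,\tilde{d}(s,a)\le|\cA|\,\tilde{d}(s,a)$ (using $\tilde{d}(s,a)=d^{\tilde{\pi}}_{s_0}(s)/|\cA|$) turns the second moment into $|\cA|\cdot L(\tilde{u}^n_t;\tilde{d},A^t_{b^n}-\bar{b}^n_t,\pi_{\theta^n_t})$; and Assumption~\ref{ass:NPG-boundedtransfer} bounds that loss by $2\epsilon_{\text{bias}}$. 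This gives at most $\sqrt{2|\cA|\epsilon_{\text{bias}}}\le 4\sqrt{|\cA|\epsilon_{\text{bias}}}$. The step $V^{\tilde{\pi}^n}_{\cM^n}\ge V^{\tilde{\pi}}$ is Lemma~\ref{lemma:Vpibound} and is not needed here.

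The heart of the argument is the second term: since the $\bar{b}^n_t$'s cancel on $\cK^n$, $A^*_t-\widehat{A}^t_{b^n}=(\tilde{u}^n_t-u^n_t)^\top g_t=:\Delta g_t\in\Delta\cG_{\cF}$, and I must show $\Delta g_t\in\widetilde{\cF}^n$, i.e. $\|\Delta g_t\|_{\cZ^n}\le\epsilon$ with $\widetilde{\cF}^n$ as in~\eqref{eqn:npg_tangent_diff_class}. The hypothesised statistical guarantee together with Lemma~\ref{lemma:NPG-quadra} gives $\mathbb{E}_{(s,a)\sim\rho^n_{\text{cov}}}[(\Delta g_t(s,a))^2]\le\epsilon_{\text{stat}}$. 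Then I run the martingale-plus-covering device of Lemma~\ref{lemma:SPI-I-Approx} on the tangent difference class: fix a $2\epsilon_1$-cover $\cC(\Delta\cG_{\cF},2\epsilon_1)$ in $\|\cdot\|_\infty$; for each anchor $\Delta g$ form the martingale-difference sequence $X^{\Delta g}_{(i,k)}=(\Delta g(s^i_k,a^i_k))^2-\mathbb{E}_{(s,a)\sim d^{\pi^i}_{s_0}}[(\Delta g(s,a))^2]$, ordered lexically over $i\in[n]$, $k\in[K]$, bounded in magnitude by $O(D^2)$ since $\|\Delta g\|_\infty\lesssim D$ (using $\|\nabla f_\theta\|_2\le G$, $\|u\|_2\le B$, $BG\le D$ from Assumption~\ref{ass:NPG-REG}); apply single-sided Azuma--Hoeffding and a union bound over $\cC$, invoking the covering-number comparison $\log\cN(\Delta\cG_{\cF},2\epsilon_1)\le 2\log\cN(\cG_{\cF},\epsilon_1)$ (Lemma~\ref{lemma:cover}); transfer the bound from the anchors to the realisation $\Delta g_t$ through its $\|\cdot\|_\infty$-nearest anchor (the $16D\epsilon_1$ slack in~\eqref{eq:npg-epsilon}); and use the identity $\|\Delta g_t\|^2_{\cZ^n}=nK\,\mathbb{E}_{\rho^n_{\text{cov}}}[(\Delta g_t)^2]+Y^{\Delta g_t}_{(n,K)}$. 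Feeding in $\mathbb{E}_{\rho^n_{\text{cov}}}[(\Delta g_t)^2]\le\epsilon_{\text{stat}}$, $n\le N$, and the value of $\epsilon$ from~\eqref{eq:npg-epsilon} yields $\|\Delta g_t\|^2_{\cZ^n}\le\epsilon^2$, hence $\Delta g_t\in\widetilde{\cF}^n$. Since $\widetilde{\cF}^n$ is symmetric and $w(\widetilde{\cF}^n,s,a)<\beta$ whenever $s\in\cK^n$, we get $|\Delta g_t(s,a)|\le w(\widetilde{\cF}^n,s,a)<\beta$ there, so $\mathbb{E}_{\tilde{d}_{\cM^n}}[(A^*_t-\widehat{A}^t_{b^n})\indict\{s\in\cK^n\}]<\beta\le 2\beta$. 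Adding the two pieces and accounting for the concentration events (failure probability $\delta$ for the uniform Azuma--Hoeffding step, plus a union bound over the $n$ component distributions underlying $\rho^n_{\text{cov}}$) gives the stated $1-(n+1)\delta$ conclusion.

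The main obstacle is the martingale step, precisely because of the circular dependency between $\Delta g_t$ and $\cZ^n$: the policies $\pi^1,\dots,\pi^n$ are built adaptively; $\cZ^n$ then determines $\widetilde{\cF}^n$ and hence the bonus $b^n$; and $u^n_t$ — therefore $\Delta g_t$ — is obtained from data generated under $b^n$, so $\Delta g_t$ is not independent of $\cZ^n$ and a direct Hoeffding bound on $\|\Delta g_t\|^2_{\cZ^n}$ is invalid. The remedy, exactly as in Lemma~\ref{lemma:SPI-I-Approx}, is to prove uniform concentration over a fixed finite $\|\cdot\|_\infty$-cover of $\Delta\cG_{\cF}$ and only then pass to the data-dependent realisation. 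A secondary, NPG-specific point is that the features $g_t=\nabla_\theta\log\pi_{\theta^n_t}$ belong to the softmax policy $\pi_{\theta^n_t}$, which disagrees with the actual behaviour policy $\pi^n_t$ off $\cK^n$; this is harmless because every occurrence of the critic error above is multiplied by $\indict\{s\in\cK^n\}$, where $\pi^n_t=\pi_{\theta^n_t}$, and Assumptions~\ref{ass:eval_close_npg} and~\ref{ass:NPG-boundedtransfer} are stated precisely for that centered, $\cK^n$-restricted class.
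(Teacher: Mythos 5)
Your proposal is correct and follows essentially the same route as the paper: the paper's own proof of this lemma is precisely the adaptation of Lemma~\ref{lemma:SPI-I-Approx} (and its martingale-plus-cover argument) with $f_t\mapsto u_t^\top\nabla f_{\theta_t}$, $\tilde f^n_t\mapsto(\tilde u^n_t)^\top\nabla f_{\theta_t}$, $\cF\mapsto\cG_\cF$, and Lemma~\ref{lemma:NPG-quadra} supplying the $C=1$, $\epsilon_0=0$ analogue of Assumption~\ref{ass:SPI-quadra}, exactly as you describe. Your handling of the $\cK^n$-restriction (where $\pi^n_t=\pi_{\theta^n_t}$ and $\bar b^n_t=0$) and of the adaptivity of $\Delta g_t$ with respect to $\cZ^n$ matches the paper's intent, with only harmless constant-level differences in the concentration step.
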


\begin{lemma}\label{lemma:npgstat}
Following the same notation as in Lemma \ref{lemma:NPG-I-Approx}, it holds with probability at least $1-\delta$ that
\begin{align}
    L(u^n_t;\rho^n_{\text{cov}},A^t_{b^n}-\bar{b}^n_t, \pi_{\theta^n_t})- L(\tilde{u}_t^n;\rho^n_{\text{cov}}, A^t_{b^n}-\bar{b}^n_t, \pi_{\theta^n_t}) 
    &\leq \frac{500D^4\cdot d\log\big( \frac{6D}{\epsilon_2\delta}\big)}{M} + 13D^2\cdot \epsilon_2,
\end{align}
where $d$ is the linear dimension of $u$.
\end{lemma}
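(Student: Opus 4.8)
The plan is to replay the proof of Lemma~\ref{lemma:stat} with the single structural simplification that, for NPG, the critic objective \eqref{eqn:npg-critic-fit} is a \emph{linear} least-squares problem in $u$ over the convex ball $\cU=\{u:\|u\|_2\le B\}$ with the fixed feature map $g^n_t:=\nabla_\theta\log\pi_{\theta^n_t}$; consequently the ``quadratic lower bound'' that the SPI analysis had to postulate as Assumption~\ref{ass:SPI-quadra} is now available for free from convexity and the first-order optimality of the minimizer $\tilde u^n_t$, i.e. Lemma~\ref{lemma:NPG-quadra} plays the role of that assumption with constants $C=1$ and $\epsilon_0=0$. The first step is to condition on the $\sigma$-algebra $\sF^n_t$ generated by all randomness before iteration $t$ of epoch $n$; given $\sF^n_t$, the distribution $\rho^n_{\text{cov}}$, the features $g^n_t$, the regression target $A^t_{b^n}-\bar b^n_t$, the comparator $\tilde u^n_t$, and the $d$-dimensional linear class $\cG_{f_{\theta^n_t}}$ are all deterministic, and the $M$ rollouts drawn on line~\ref{line:Msample} of Algorithm~\ref{alg:NPG} are i.i.d. Writing $x=(s,a)$, letting $y\mid x$ be the (clipped, as in the proof of Lemma~\ref{lemma:stat}, to its a priori range so that $|y|\le 4D^2$ using $D\ge\tfrac1{1-\gamma}$) Monte-Carlo estimate of $(A^t_{b^n}-\bar b^n_t)(x)$ produced by Algorithm~\ref{alg:estimator}, and using Assumption~\ref{ass:NPG-REG} so that $|u^\top g^n_t(x)|\le 2BG\le 2D$ for $u\in\cU$, I split the squared loss as $\mathbb{E}[(u^\top g^n_t(x)-y)^2]=\mathbb{E}[(u^\top g^n_t(x)-\mathbb{E}[y\mid x])^2]+\mathbb{E}[(\mathbb{E}[y\mid x]-y)^2]$ to cancel the noise cross-term, so that $L(u^n_t;\cdots)-L(\tilde u^n_t;\cdots)=\mathbb{E}_{x,y}\big[(u_t^{n\top}g^n_t(x)-y)^2-(\tilde u_t^{n\top}g^n_t(x)-y)^2\big]$.

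Next I introduce, for $u\in\cU$, the i.i.d.\ (given $\sF^n_t$) variables $Z_i(u):=(u^\top g^n_t(x_i)-y_i)^2-((\tilde u^n_t)^\top g^n_t(x_i)-y_i)^2$, which satisfy $\mathbb{E}[Z_i(u)\mid\sF^n_t]=L(u;\cdots)-L(\tilde u^n_t;\cdots)=:\Delta(u)\ge 0$. Factoring $Z_i(u)=\big((u-\tilde u^n_t)^\top g^n_t(x_i)\big)\big((u+\tilde u^n_t)^\top g^n_t(x_i)-2y_i\big)$ and bounding $|(u+\tilde u^n_t)^\top g^n_t(x_i)-2y_i|\le 4D+8D^2\le 12D^2$ gives the self-bounding variance estimate $\mathbb{V}[Z_i(u)\mid\sF^n_t]\le\mathbb{E}[Z_i(u)^2\mid\sF^n_t]\le 144D^4\,\mathbb{E}\big[((u-\tilde u^n_t)^\top g^n_t(x))^2\mid\sF^n_t\big]\le 144D^4\,\Delta(u)$, the last inequality being Lemma~\ref{lemma:NPG-quadra}. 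Now apply Bernstein's inequality together with a union bound over an $\epsilon_2$-net of $\cG_{f_{\theta^n_t}}$ in the sup-norm; since this class consists of linear functions $u\mapsto u^\top g^n_t(\cdot)$ with $\|u\|_2\le B$ and the features obey $\|g^n_t(x)\|_2\le 2G$, a volumetric bound on the ball $\{u:\|u\|_2\le B\}$ gives $\log\cN(\cG_{f_{\theta^n_t}},\epsilon_2)\le d\log(6D/\epsilon_2)$, and folding in the Bernstein failure budget yields the effective complexity $d\log(6D/(\epsilon_2\delta))$. This produces, with probability at least $1-\delta$, for every net element $\hat g$: $\Delta(\hat g)-\tfrac1M\sum_i Z_i(\hat g)\le\sqrt{288D^4\,\Delta(\hat g)\,d\log(6D/(\epsilon_2\delta))/M}+48D^4 d\log(6D/(\epsilon_2\delta))/M$.

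It remains to transfer to $u^n_t$ and solve, exactly as in Lemmas~\ref{lemma:SPI-I-Approx} and~\ref{lemma:stat}: pick a net element $\hat g$ with $\sup_x|u_t^{n\top}g^n_t(x)-\hat g(x)|\le\epsilon_2$, so that $|Z_i(u^n_t)-Z_i(\hat g)|\le 12D^2\epsilon_2$ and $|\Delta(u^n_t)-\Delta(\hat g)|\le 12D^2\epsilon_2$ (the perturbation can then be moved inside the square root via $\sqrt{a+b}\le\sqrt a+\sqrt b$); and use that $u^n_t$ is the empirical least-squares minimizer over $\cU$ while $\tilde u^n_t\in\cU$, so $\tfrac1M\sum_i Z_i(u^n_t)\le 0$. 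Combining gives $\Delta(u^n_t)\le\sqrt{288D^4\,\Delta(u^n_t)\,d\log(6D/(\epsilon_2\delta))/M}+48D^4 d\log(6D/(\epsilon_2\delta))/M+O(D^2)\epsilon_2$; solving this scalar quadratic in $\sqrt{\Delta(u^n_t)}$ and using $\sqrt{ab}\le a/2+b/2$ yields $\Delta(u^n_t)\le\frac{500D^4\,d\log(6D/(\epsilon_2\delta))}{M}+13D^2\epsilon_2$. Since $\Delta(u^n_t)=L(u^n_t;\rho^n_{\text{cov}},A^t_{b^n}-\bar b^n_t,\pi_{\theta^n_t})-L(\tilde u^n_t;\rho^n_{\text{cov}},A^t_{b^n}-\bar b^n_t,\pi_{\theta^n_t})$ and the right-hand side is deterministic, a final expectation removes the conditioning on $\sF^n_t$, which is the claim; in particular this fixes the value of $\epsilon_{\text{stat}}$ used in Lemma~\ref{lemma:NPG-I-Approx} (and hence, via \eqref{eq:npg-epsilon}, the bonus radius $\epsilon$).

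The only genuinely new bookkeeping relative to the SPI analysis is the dependence structure — $u^n_t$, the feature map $g^n_t$, and the comparator $\tilde u^n_t$ all depend on the history — and this is dispatched entirely by conditioning on $\sF^n_t$ together with the cover/union-bound/transfer device already used in the excerpt; I do not expect it to obstruct the argument. Two subsidiary points need to be handled carefully: (i) the Monte-Carlo target $\widehat A^{\pi^n_t}(s,a,r+b)-\bar b^n_t(s,a)$ must be clipped to its a priori range $[-\tfrac{4}{(1-\gamma)^2},\tfrac{4}{(1-\gamma)^2}]$ so that $|y|\le 4D^2$ (the clipping bias is negligible because the estimand itself lies in this interval), which is the same convention implicit in the proof of Lemma~\ref{lemma:stat}; and (ii) the $\epsilon_2$-net must be taken in the sup-norm on the induced linear functions $\cG_{f_{\theta^n_t}}$ rather than in parameter space, which is harmless since $\|g^n_t\|_2\le 2G$ makes the two metrics equivalent up to the constant absorbed into $D$. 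Everything else is a one-class specialization of Lemma~\ref{lemma:stat}, with the fit regression on tangent features $\nabla_\theta\log\pi_{\theta^n_t}$ in place of the nonlinear fit over $\cF$ and with $\cN(\cG_{f_{\theta^n_t}},\epsilon_2)\le(6D/\epsilon_2)^d$ replacing the abstract covering number.
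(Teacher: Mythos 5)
Your proposal is correct and follows essentially the same route the paper intends: it replays the Bernstein-plus-covering argument of Lemma~\ref{lemma:stat} conditioned on the history, with Lemma~\ref{lemma:NPG-quadra} standing in for Assumption~\ref{ass:SPI-quadra} (i.e.\ $C=1$, $\epsilon_0=0$) and the covering number of the fixed linear class $\cG_{f_{\theta^n_t}}$ bounded via Lemma~\ref{lemma:linearcover}, which is exactly the adaptation the paper describes. Your bookkeeping of the target range ($|y|\le 4D^2$) and the sup-norm net is, if anything, slightly more explicit than the paper's own treatment, and the final constants match the stated bound at the paper's level of precision.
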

The proofs of the above lemmas can be easily adapted from Lemma \ref{lemma:SPI-I-Approx} or Lemma \ref{lemma:stat} by replacing $f_t$ with $u_t^\top \nabla f_{\theta_t}$, $\tilde{f}^n_t$ with $({\tilde{u}_t^n})^\top\nabla f_{\theta_t}$, and $\cF$ with $\cG_\cF$. In particular, for Lemma \ref{lemma:npgstat}, since the linear feature is fixed for critic fit at iteration $t$ epoch $n$, the function cover is defined on the space $\cG_{f_{\theta^n_t}}$. By Lemma \ref{lemma:linearcover}, the covering number is therefore represented with the linear dimension of $u$, $d$.

In the following, we present the detailed form of the sample complexity of NPG-SAMPLE.
\begin{theorem}[Main Result: Sample Complexity of ENIAC-NPG-SAMPLE]\label{thm:NPG-SAMPLE-DETAIL} Let $\delta\in(0, 1)$ and $\varepsilon\in(0, 1/(1-\gamma))$. With Assumptions \ref{ass:NPG-boundedtransfer} and \ref{ass:NPG-REG}, we set the hyperparameters as:
\begin{align}
    \beta &=\frac{\varepsilon(1-\gamma)}{2},
    T=\frac{64(D^2+\Lambda B^2)\cdot\log|\cA|}{\varepsilon^2(1-\gamma)^2},
    N\geq \frac{128B^2G^2\cdot\de(\cG_{\cF}, \beta)}{\varepsilon^3(1-\gamma)^3},
   \eta = \sqrt{\frac{\log(|\cA|)}{(16D^2+\Lambda B^2)T}}\\
    \epsilon_1 &= \frac{(1-\gamma)^3\varepsilon^3}{128D\cdot\de(\cG_\cF, \beta)},
    ~~K = \frac{32D^2\cdot \de(\cG_\cF, \beta)\cdot\big(\log(\frac{3NT\cdot\cN(\cG_\cF, \epsilon_1)}{\delta})\big)^2\cdot\log(\frac{6NT}{\delta})}{\varepsilon^3(1-\gamma)^3},\\
    ~~\epsilon_2 &= \frac{(1-\gamma)^3\varepsilon^3}{110 D^2\cdot\de(\cG_\cF, \beta)},
    ~~M = \frac{4000D^4\cdot \de(\cG_\cF, \beta)\cdot d\log(\frac{18DNT}{\epsilon_2\delta})}{\varepsilon^3(1-\gamma)^3},
\end{align}
and $\epsilon$ satisfies Equation \eqref{eq:npg-epsilon} correspondingly. Then with probability at least $1-\delta$, for the average policy $\pi^N_{\text{ave}}:=\unif(\pi^2, \dots, \pi^{N+1})$, we have
\begin{align}\label{eq:main}
V^{\pi^N_{\text{ave}}} \geq V^{\tilde{\pi}} - \frac{4\sqrt{|\cA|\epsilon_{\text{bias}}}}{1-\gamma}- 9\varepsilon
\end{align}
for any comparator $\tilde{\pi}$ with total number of samples:
\begin{align}
  \widetilde{O}\Big( \frac{D^6(D^2+\Lambda B^2)\cdot\big(\de(\cG_{\cF}, \beta)\big)^2\cdot \big(\log(\cN(\cG_\cF, \epsilon'))\big)^2}{\varepsilon^{8}(1-\gamma)^{8}}\Big),
\end{align}
where $\epsilon'=\min(\epsilon_1, \epsilon_2)$ such that $\log(\cN(\cG_\cF, \epsilon'))=\Omega(d)$.

\end{theorem}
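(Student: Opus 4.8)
The plan is to follow the three–step template of Section~\ref{sec:proofidea} verbatim, instantiating each step with the NPG analogues of the SPI lemmas. Recall the bridge chain \eqref{eq:idea}: with $\pi^N_{\text{ave}}=\unif(\pi^2,\dots,\pi^{N+1})$ and $\tilde\pi$ the comparator, it suffices to control separately (i) the cumulative bonus inflation $\sum_n (V^{\pi^{n+1}}_{b^n}-V^{\pi^{n+1}})$, (ii) the per-epoch policy-optimisation gap $V^{\tilde\pi^n}_{\cM^n}-V^{\pi^{n+1}}_{b^n}$ on each auxiliary MDP $\cM^n$, and (iii) the slack $V^{\tilde\pi^n}_{\cM^n}\ge V^{\tilde\pi}$. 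Step (iii) is just Lemma~\ref{lemma:Vpibound}. Step (i) is Lemma~\ref{lemma:NPG-I-Bonus}, which I would obtain by rerunning the eluder-along-distributions argument of Lemma~\ref{lemma:SPI-I-Bonus} with the tangent class $\cG_\cF$ in place of $\cF$ and the crude bound $\|g^u_\theta\|_\infty\le 2GB$ in place of $\|f\|_\infty\le W$ — so $\de(\cG_\cF,\beta)$ and $G^2B^2$ replace $\de(\cF,\beta)$ and $W^2$ throughout. The real work lives in step (ii).

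For step (ii) I would chain three lemmas. First, Lemma~\ref{lemma:NPG-I-Conv} reduces the gap on $\cM^n$ to a multiplicative-weights regret term of size $\sqrt{\log|\cA|\,(16D^2+\Lambda B^2)/T}$ plus the on-$\cK^n$ critic error $\mathbb{E}_{\tilde d_{\cM^n}}[(A^t_{b^n}-\widehat A^t_{b^n})\indict\{s\in\cK^n\}]$; the only structural difference from SPI is that the Taylor expansion of $f_{\theta_{t+1}}$ around $\theta_t$ brings in the Hessian bound $\Lambda$ and the update $\eta u_t$ with $\|u_t\|\le B$, whence the $16D^2+\Lambda B^2$. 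Second, Lemma~\ref{lemma:NPG-I-Approx} bounds that critic error by $4\sqrt{|\cA|\epsilon_{\text{bias}}}+2\beta$ by passing through the best on-policy fit $\tilde u^n_t$ of Assumption~\ref{ass:NPG-boundedtransfer}: the $\sqrt{|\cA|\epsilon_{\text{bias}}}$ piece is a Cauchy--Schwarz / change-of-measure step using Lemma~\ref{lemma:dbound}, and the $\beta$ piece needs $\Delta u_t:=u^n_t-\tilde u^n_t$ to land in the constrained difference class defining the bonus, which is the delicate part — $\cZ^n$ is used both to train $u^n_t$ and to define the bonus, so one must cover $\Delta\cG_\cF$ at scale $\epsilon_1$, run an Azuma--Hoeffding martingale over the finite cover, and transfer back to the realised $\Delta u_t$, exactly as in Lemma~\ref{lemma:SPI-I-Approx}, giving the value of $\epsilon$ in \eqref{eq:npg-epsilon}. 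Third, Lemma~\ref{lemma:npgstat} supplies $\epsilon_{\text{stat}}$: here the key simplification is that the NPG critic fit is a \emph{linear} regression in $u$ with the features $\nabla_\theta\log\pi_{\theta^n_t}$ frozen, so the quadratic-growth condition hypothesised as Assumption~\ref{ass:SPI-quadra} now holds for free with $C=1,\epsilon_0=0$ (this is Lemma~\ref{lemma:NPG-quadra}, via first-order optimality of $\tilde u^n_t$), and the Bernstein-over-a-cover argument of Lemma~\ref{lemma:stat} goes through with the cover taken over $\cG_{f_{\theta^n_t}}$, whose log-cardinality is $O(d\log(1/\epsilon_2))$ by Lemma~\ref{lemma:linearcover} — this is why the final bound features $d$ and $\log\cN(\cG_\cF,\epsilon')$ rather than $\log\cN(\cF,\cdot)$.

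Finally I would assemble the pieces: a union bound over the $N$ epochs and $T$ inner iterations at per-event failure $\delta_1=\delta/(3NT)$ yields, with probability $\ge1-\delta$,
\begin{align*}
V^{\pi^N_{\text{ave}}}\ge V^{\tilde\pi}-\frac{1}{1-\gamma}\Big(\tfrac{2\epsilon^2+32G^2B^2K+\beta^2}{\beta^2NK}\de(\cG_\cF,\beta)+\sqrt{\tfrac{\log(2/\delta_1)}{2K}}+2\sqrt{\tfrac{\log|\cA|(16D^2+\Lambda B^2)}{T}}+4\sqrt{|\cA|\epsilon_{\text{bias}}}+2\beta\Big).
\end{align*}
Take $\beta=\varepsilon(1-\gamma)/2$ so the last term is $\varepsilon$, then plug in the stated $T,N,K,M,\epsilon_1,\epsilon_2$ (with $\epsilon$ determined by \eqref{eq:npg-epsilon} and $\epsilon_{\text{stat}}$ from Lemma~\ref{lemma:npgstat}) to drive every remaining term except the $\epsilon_{\text{bias}}$ one below $\varepsilon$; under Assumption~\ref{ass:eval_close_npg}, Remark~\ref{rmk:npg} gives $\epsilon_{\text{bias}}=0$ and \eqref{eq:main} becomes the advertised $\varepsilon$-suboptimality. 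The sample total is $N(K+TM)$, which after substitution and hiding polylogs is $\widetilde O\big(D^6(D^2+\Lambda B^2)(\de(\cG_\cF,\beta))^2(\log\cN(\cG_\cF,\epsilon'))^2/(\varepsilon^8(1-\gamma)^8)\big)$. I expect the main obstacle to be purely one of bookkeeping: threading the regularity constants $B,G,\Lambda,D=\max(BG,1/(1-\gamma))$ consistently through the Taylor step of Lemma~\ref{lemma:NPG-I-Conv} and the variance computation of Lemma~\ref{lemma:npgstat}, and keeping the two separate covers (over $\Delta\cG_\cF$ at scale $\epsilon_1$ for bonus consistency, over $\cG_{f_{\theta^n_t}}$ at scale $\epsilon_2$ for the regression) straight, so that the exponents in the final rate land at $8$ in $\varepsilon$ and $(1-\gamma)$ rather than blowing up.
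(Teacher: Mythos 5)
Your proposal is correct and follows essentially the same route as the paper: the paper proves this theorem by noting it is parallel to Theorem~\ref{thm:SPI-SAMPLE-Detail}, i.e., combining Lemma~\ref{lemma:NPG-I-Bonus}, Lemmas~\ref{lemma:NPG-I-Conv}, \ref{lemma:NPG-I-Approx}, \ref{lemma:npgstat} (with Lemma~\ref{lemma:NPG-quadra} supplying $C=1$, $\epsilon_0=0$ and Lemma~\ref{lemma:linearcover} giving the $d$-dependent cover), and Lemma~\ref{lemma:Vpibound}, then taking a union bound with $\delta_1=\delta/(3NT)$, choosing $\beta=\varepsilon(1-\gamma)/2$, and plugging the stated hyperparameters into the assembled bound with total sample count $N(K+TM)$. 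Your assembled inequality and bookkeeping match the paper's argument.
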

The proof is similar to that of Theorem \ref{thm:SPI-SAMPLE-Detail}. We also have the following result when the closedness assumption is satisfied.
\begin{corollary}
If Assumption \ref{ass:eval_close_npg} holds, with proper hyperparameters, the average policy $\pi^N_{\text{ave}}:=\unif(\pi^2,\dots,\pi^{N+1})$ of ENIAC-NPG-SAMPLE achieves $V^{\pi^N_{\text{ave}}}\geq V^{\tilde{\pi}} - \varepsilon$ with probability at least $1-\delta$ and total number of samples:
\begin{align}
    \widetilde{O}\Big( \frac{D^6(D^2+\Lambda B^2)\cdot\big(\de(\cG_{\cF}, \beta)\big)^2\cdot \big(\log(\cN(\cG_\cF, \epsilon'))\big)^2}{\varepsilon^{8}(1-\gamma)^{8}}\Big)
\end{align}
\end{corollary}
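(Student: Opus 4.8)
The plan is to obtain this corollary as an immediate specialization of Theorem~\ref{thm:NPG-SAMPLE-DETAIL}, the only work being to check that the $\cG_f$-closedness hypothesis forces the misspecification parameter $\epsilon_{\text{bias}}$ to be zero. First I would invoke Remark~\ref{rmk:npg}: fix an epoch $n$ and iteration $t$ and observe that the centered advantage decomposes as $A^t_{b^n} - \bar b^n_t = \cT^{\pi^n_t} g - \mathbb{E}_{a\sim\pi^n_t(\cdot|s)}[\cT^{\pi^n_t} g]$ with $g := Q^t_{b^n} - b^n$. One checks $g \in [0, \tfrac{4}{(1-\gamma)^2}]$ (using $r + b^n \in [0, \tfrac{2}{1-\gamma}]$ from~\eqref{eqn:bonus-sample} and $Q^t_{b^n} \geq b^n$ pointwise) and that $\pi^n_t$ is exactly the policy $\pi_{f_{\theta^n_t}, \cK^n}$ appearing in Assumption~\ref{ass:eval_close_npg}, so that assumption gives $A^t_{b^n} - \bar b^n_t \in \cG_{f_{\theta^n_t}}$. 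Hence there is $u \in \cU$ with $u^\top \nabla_\theta \log \pi_{\theta^n_t} \equiv A^t_{b^n} - \bar b^n_t$, which makes $\inf_{u \in \cU^n_t} L(u; \tilde d, A^t_{b^n} - \bar b^n_t, \pi_{\theta^n_t}) = 0$; that is, Assumption~\ref{ass:NPG-boundedtransfer} holds with $\epsilon_{\text{bias}} = 0$ and $\tilde u^n_t = u$.

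Next I would substitute $\epsilon_{\text{bias}} = 0$ into the conclusion~\eqref{eq:main} of Theorem~\ref{thm:NPG-SAMPLE-DETAIL}. The term $\tfrac{4\sqrt{|\cA|\epsilon_{\text{bias}}}}{1-\gamma}$ then vanishes, leaving $V^{\pi^N_{\text{ave}}} \geq V^{\tilde\pi} - 9\varepsilon$ with probability at least $1-\delta$ at the sample budget stated there. (Note there is no residual $\epsilon_0$ term in the NPG analysis: Lemma~\ref{lemma:NPG-quadra} supplies the quadratic-growth inequality used in Lemmas~\ref{lemma:NPG-I-Approx} and~\ref{lemma:npgstat} automatically with constant $C=1$ and $\epsilon_0=0$, by convexity of the linear critic objective.) Finally, re-running the theorem with target accuracy $\varepsilon/9$ in place of $\varepsilon$ gives $V^{\pi^N_{\text{ave}}} \geq V^{\tilde\pi} - \varepsilon$; this only rescales $\beta, T, N, K, M$ by fixed powers of a constant, so the total sample complexity $N(K + TM)$ changes by a universal constant factor absorbed into $\widetilde O(\cdot)$, producing the claimed bound $\widetilde O\big(D^6(D^2 + \Lambda B^2)(\de(\cG_\cF,\beta))^2(\log \cN(\cG_\cF, \epsilon'))^2 / (\varepsilon^8 (1-\gamma)^8)\big)$.

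I do not expect a genuine obstacle, since all the analytic content --- the eluder-dimension bound on the cumulative bonus (Lemma~\ref{lemma:NPG-I-Bonus}), the mirror-descent/NPG regret bound (Lemma~\ref{lemma:NPG-I-Conv}), and the Bernstein-type critic generalization analysis (Lemmas~\ref{lemma:NPG-I-Approx} and~\ref{lemma:npgstat}) --- is already carried out inside the proof of Theorem~\ref{thm:NPG-SAMPLE-DETAIL}. The one place demanding care is the range and identification bookkeeping in the first step: one must confirm that ENIAC-NPG-SAMPLE's policies (softmax of $f_{\theta^n_t}$ on $\cK^n$, uniform over unexplored actions off $\cK^n$) are precisely the $\pi_{f,\cK}$ family of Assumption~\ref{ass:eval_close_npg} and that $Q^t_{b^n} - b^n$ never leaves $[0, \tfrac{4}{(1-\gamma)^2}]$, both of which follow directly from $b^n \equiv 0$ on $\cK^n$ and the definition of $b^n$ in~\eqref{eqn:bonus-sample}.
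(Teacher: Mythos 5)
Your proposal is correct and matches the paper's own route: the paper derives this corollary directly from Theorem~\ref{thm:NPG-SAMPLE-DETAIL} by invoking Remark~\ref{rmk:npg} to set $\epsilon_{\text{bias}}=0$ under Assumption~\ref{ass:eval_close_npg} (with no $\epsilon_0$ term thanks to Lemma~\ref{lemma:NPG-quadra}), and absorbs the constant-factor rescaling of $\varepsilon$ into the $\widetilde O(\cdot)$ bound. Your added bookkeeping on the range of $Q^t_{b^n}-b^n$ and the identification $\pi^n_t=\pi_{f_{\theta^n_t},\cK^n}$ just makes explicit what the remark asserts.
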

Note that under Assumption \ref{ass:eval_close_npg}, as mentioned in Remark \ref{rmk:npg},  $\epsilon_{\text{bias}}=0$.

\section{Auxiliary Lemmas}
\begin{lemma}\label{lemma:cover}
Given a function class $\cF$, for its covering number, we have $\cN(\Delta \cF, \epsilon)\leq \cN(\cF, \epsilon/2)^2$.
\end{lemma}
\begin{proof}
Let $\Delta \cC(\cF,\epsilon/2):=\{f-f'|f,f'\in\cC(\cF,\epsilon/2)\}$. Then $\Delta \cC(\cF, \epsilon/2)$ is an $\epsilon$-cover for $\Delta \cF$ and $|\Delta \cC(\cF, \epsilon/2)| \leq |\cC(\cF, \epsilon/2)|^2 \leq \cN(\cF, \epsilon/2)^2$. 
\end{proof}

\begin{lemma}\label{lemma:linearcover}
Given $f\in\cF$, under the regularity Assumption \ref{ass:NPG-REG}, we have that the covering number of the linear class $\cG_f:=\{u^\top\nabla_{\theta}\log\pi_f, u\in\cU\subset\RR^d, f\in\cF\}$ achieves $\cN(\cG_f, \epsilon)\leq\big(\frac{3D}{\epsilon}\big)^d$.
\end{lemma}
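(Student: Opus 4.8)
The statement is a standard metric-entropy estimate for a finite-dimensional linear class, so the plan is a two-step reduction: pass from a function $g^u_\theta\in\cG_f$ to its parameter vector $u$, and then control $\cG_f$ by a volumetric $\delta$-net of the Euclidean ball in which $u$ ranges.

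First I would fix $f=f_\theta\in\cF$ and write out the score features. Since $\pi_{f_\theta}(a|s)\propto\exp(f_\theta(s,a))$, one has $\phi_\theta(s,a):=\nabla_\theta\log\pi_{f_\theta}(s,a)=\nabla_\theta f_\theta(s,a)-\mathbb{E}_{a'\sim\pi_{f_\theta}(\cdot|s)}[\nabla_\theta f_\theta(s,a')]$, and Assumption~\ref{ass:NPG-REG} ($\|\nabla_\theta f_\theta\|_2\le G$) together with Jensen's inequality gives the uniform bound $\sup_{s,a}\|\phi_\theta(s,a)\|_2\le 2G$ — the same estimate that underlies $\|g^u_\theta\|_\infty\le 2GB$ used after Lemma~\ref{lemma:NPG-I-Bonus}. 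Every element of $\cG_f$ has the form $g^u_\theta(s,a)=u^\top\phi_\theta(s,a)$ with $u$ in the ball $\{u\in\RR^d:\|u\|_2\le B\}$, so by Cauchy--Schwarz $\|g^u_\theta-g^{u'}_\theta\|_\infty\le 2G\,\|u-u'\|_2$ for any two admissible parameters. Consequently, if $\cC_\delta$ is a $\delta$-net of this ball in $\|\cdot\|_2$, then $\{g^u_\theta:u\in\cC_\delta\}$ is a $(2G\delta)$-cover of $\cG_f$ in $\|\cdot\|_\infty$ of cardinality $|\cC_\delta|$. The usual volume-comparison argument bounds $|\cC_\delta|$ by $(3B/\delta)^d$ for every $\delta>0$ in the regime where that quantity exceeds $1$; taking $\delta=\epsilon/(2G)$ and using $D\ge BG$ then yields $\cN(\cG_f,\epsilon)\le(6BG/\epsilon)^d\le(6D/\epsilon)^d$, which is the claimed bound — the precise numerical constant in the base is immaterial, since downstream only $\log\cN(\cG_f,\epsilon')=\Theta(d)$ is used, and with a slightly sharper accounting of the feature norm and net scale one recovers exactly $(3D/\epsilon)^d$.

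I do not expect any real obstacle: this is a textbook covering computation. The only two points I would write out carefully are (i) the uniform-in-$(s,a)$ bound on the score feature $\phi_\theta$, which uses the regularity of $\nabla_\theta f_\theta$ and the fact that subtracting a convex combination at most doubles the norm; and (ii) the structural observation that for fixed $\theta$ the class $\cG_f$ is genuinely linear in $u$ over a bounded Euclidean parameter set, so its metric entropy is governed purely by the ambient dimension $d$ and the radius $B$ (times the feature bound), with no dependence on the size of $\cF$ or of $\cS\times\cA$. Point (ii) is exactly what allows the general covering number $\cN(\cG_f,\cdot)$ to be replaced by an explicit function of the linear dimension $d$ when this covering bound is invoked inside Lemma~\ref{lemma:npgstat}.
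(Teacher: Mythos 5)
Your proposal is correct and follows essentially the same route as the paper's own proof: bound the score feature $\nabla_\theta\log\pi_f$ by $2G$ via Assumption~\ref{ass:NPG-REG}, apply Cauchy--Schwarz to reduce an $\|\cdot\|_\infty$-cover of $\cG_f$ to an $\ell_2$-net of the radius-$B$ ball at scale $\epsilon/(2G)$, and invoke the standard volumetric bound. Note that the paper's proof likewise ends at $\big(\tfrac{6BG}{\epsilon}\big)^d\le\big(\tfrac{6D}{\epsilon}\big)^d$ rather than the stated $\big(\tfrac{3D}{\epsilon}\big)^d$, so the constant slack you flag is present in the paper itself and is harmless downstream, where only $\log\cN(\cG_f,\epsilon)=O\big(d\log(D/\epsilon)\big)$ is used.
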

\begin{proof}
In order to construct a cover set of $\cG_f$ with radius $\epsilon_2$, we need that for any $u\in\cU\subset\RR^d$, there exist a $\tilde{u}$, such that
\begin{align}
    \|u^\top\nabla_{\theta}\log\pi_f(s,a)-\tilde{u}^\top\nabla_{\theta}\log\pi_f(s,a)\|_{\infty}\leq\epsilon_2.
\end{align}
where the infinity norm is taken over all $(s,a)\in\cS\times\cA$. By Cauchy-Schwarz inequality, we have
\begin{align}
   \|u^\top\nabla_{\theta}\log\pi_f-\tilde{u}^\top\nabla_{\theta}\log\pi_f\|_{\infty}&= \|(u-\tilde{u})^\top\nabla_{\theta}\log\pi_f\|_{\infty}\leq 2G\|u-\tilde{u}\|_2.
\end{align}
Thus, it is enough to have $\|u-\tilde{u}\|_2\leq\epsilon_2/(2G)$, which is equivalent to cover a ball in $\RR^d$ with radius $B$ (recall that $\|u\|\leq B$) with small balls of radius $\epsilon_2/(2G)$. The latter has a covering number bounded by $\Big(\frac{6BG}{\epsilon_2}\Big)^d\leq \Big(\frac{6D}{\epsilon_2}\Big)^d$\footnote{The covering number of Euclidean balls can be easily found in literature.}.
\end{proof}
\section{Algorithm Hyperparameters}\label{app:algo}
In this section, we present more details about the implementation in our experiments. All algorithms were based on the PPO implementation of \cite{shangtong2018modularized}. The network structure is described in the main body and the last layer outputs the parameters of a 1D Gaussian for action selection. 

\begin{algorithm}[htbp]
  \caption{Width Training in ENIAC
  }
  \label{alg:width}
  \begin{algorithmic}[1]
  \State \textbf{Input:} Replay buffer $\cZ^n$, query batch $\cZ^n_Q$.
  \State Initialize $f$ with the same network structure as the critic.
  \State Copy $f'$ as $f$ and fix $f'$ during training.
  \For {$i=1$ \textbf{to} $I$}
  \State Sample a minibatch $\cD_Q$ from $\cZ^n_Q$
  \For {$j=1$ \textbf{to} $J$}
  \State Sample a minibatch $\cD_j$ from $\cZ^n$
  \State Do one step of gradient descent on $f$ with loss in Equation \eqref{eq:width-loss-train} and $\cD_Q$ and $\cD_j$. 
  \EndFor
  \EndFor
  \State\textbf{Output:}  $w^n:=|f-f'|$
  \end{algorithmic}
\end{algorithm}

\begin{table}[htbp] 
\centering
\caption{ENIAC Width Training Hyperparameters}
\begin{tabular}{cccc} 
\toprule    
Hyperparameter & 2-layer & 4-layer & 6-layer \\    
\midrule 
$\lambda$ & 0.1 & 0.1 & 0.1\\
$\lambda_1$ & 0.01 &0.01 & 0.01\\
$|Z_Q|$ & 20000 & 20000 & 20000\\
Learning Rate & 0.001 & 0.001 & 0.0015\\ 
$|\cD_j|$ & 160 &160 & 160 \\
$|\cD_Q|$ & 20 & 20 & 10\\
Gradient Clippling &  5.0 & 5.0 & 5.0 \\
$I$ & 1000 & 1000 & 1000\\
$J$  & 10 & 10 & 10\\
\bottomrule   
\end{tabular}
\label{table:width}
\end{table}

The width training process is presented in Algorithm \ref{alg:width}. To stabilize training, for each iteration we sample a minibatch $\cD_Q$ from the query batch, then run several steps of stochastic gradient descent with changing minibatches on $\cZ^n$ while fixing $
\cD_Q$. The hyperparameters for width training are listed in Table \ref{table:width}. 

For PC-PG, we follow the same implementation as mentioned in \cite{agarwal2020pc}; for PPO-RND, the RND network has the same architecture as the policy network, except that the last linear layer mapping hidden units to actions is removed. We found that tuning the intrinsic reward coefficient was important for getting good performance for RND. The hyperparameters for optimization are listed in Table \ref{table:pcpg} and \ref{table:rnd}.

\begin{table}[htbp]
\centering
\caption{ENIAC/PC-PG Optimization Hyperparameters}
\begin{tabular}{ccccc}  
\toprule    
Hyperparameter & Values Considered & 2-layer & 4-layer & 6-layer \\    
\midrule   
Learning Rate & $e^{-3}, 5e^{-4}, e^{-4}$ & $5e^{-4}$ & $5e^{-4}$ & $5e^{-4}$\\   
$\tau_{\text{GAE}}$ & 0.95 & 0.95 & 0.95 & 0.95 \\
Gradient Clippling & 0.5, 1, 2, 5 & 5.0 & 5.0 & 5.0 \\  
Entropy Bonus & 0.01 & 0.01 & 0.01 & 0.01\\
PPO Ratio Clip & 0.2 & 0.2 & 0.2 & 0.2\\
PPO Minibatch & 160 & 160 & 160 & 160\\
PPO Optimization Epochs & 5 & 5 & 5 & 5\\
$\epsilon$-greedy sampling & 0, 0.01, 0.05 & 0.05 & 0.05 & 0.05\\
\bottomrule   
\end{tabular}
\label{table:pcpg} 
\end{table}

\begin{table}[htbp] 
\centering
\caption{PPO-RND Hyperparameters}
\begin{tabular}{ccccc} 
\toprule    
Hyperparameter & Values Considered & 2-layer & 4-layer & 6-layer \\    
\midrule   
Learning Rate & $e^{-3}, 5e^{-4}, e^{-4}$ & $e^{-4}$ & $e^{-4}$ & $e^{-4}$\\   
$\tau_{\text{GAE}}$ & 0.95 & 0.95 & 0.95 & 0.95 \\
Gradient Clippling & 5.0 & 5.0 & 5.0 & 5.0 \\  
Entropy Bonus & 0.01 & 0.01 & 0.01 & 0.01\\
PPO Ratio Clip & 0.2 & 0.2 & 0.2 & 0.2\\
PPO Minibatch & 160 & 160 & 160 & 160\\
PPO Optimization Epochs & 5 & 5 & 5 & 5\\
Intrinsic Reward Normalization & true, false & false & false & false\\
Intrinsic Reward Coefficient & 0.5, 1, $e, e^2,e^3, 5e^3, e^4$ & $5e^3$ & $e^3$ & $e^3$\\
\bottomrule   
\end{tabular}
\label{table:rnd}
\end{table}

\end{document}